\documentclass[12pt]{article}
\usepackage{amsthm}
\usepackage{enumitem,amsmath,amsfonts,amssymb}
\usepackage{mathtools}
\usepackage{times}
\usepackage{graphicx}
\usepackage{color}
\usepackage{multirow}
\usepackage{hyperref}
\usepackage{diagbox}
\usepackage[authoryear]{natbib}
\usepackage{rotating}
\usepackage{bbm}
\usepackage{latexsym}
\usepackage{xcolor}
\newtheorem{theorem}{Theorem}[section]

\newtheorem{lemma}[theorem]{Lemma}

\theoremstyle{definition}
\newtheorem{definition}[theorem]{Definition}
\newtheorem{assumption}{Assumption}
\newtheorem{remark}[theorem]{Remark}

\newcommand{\bw}{\mathbf{w}}

\newcommand{\ibb}{\mathbb{I}}

\newcommand{\wcal}{\mathcal{W}}

\newcommand{\zcal}{\mathcal{Z}}

\newcommand{\dcal}{\mathcal{D}}
\newcommand{\bb}{c_2}
\newcommand{\aaa}{c_1}

\newcommand{\ebb}{\mathbb{E}}
\newcommand{\qbb}{\mathrm{Q}}
\newcommand{\pbb}{\mathrm{P}}

\newcommand{\be}{\mathbf{e}}
\newcommand{\bv}{\mathbf{v}}

\newcommand{\rbb}{\mathbb{R}}

\numberwithin{equation}{section}
\usepackage{graphicx}


\textheight 23.4cm
\textwidth 14.65cm
\oddsidemargin 0.375in
\evensidemargin 0.375in
\topmargin  -0.55in
\interfootnotelinepenalty=10000
%

\newcommand{\captionfonts}{\normalsize}
\newcommand{\zsj}{\color{orange}Sijia: }

\makeatletter
\long\def\@makecaption#1#2{%
  \vskip\abovecaptionskip
  \sbox\@tempboxa{{\captionfonts #1: #2}}%
  \ifdim \wd\@tempboxa >\hsize
    {\captionfonts #1: #2\par}
  \else
    \hbox to\hsize{\hfil\box\@tempboxa\hfil}%
  \fi
  \vskip\belowcaptionskip}
\makeatother

\begin{document}
\hspace{13.9cm}1

\ \vspace{20mm}\\

{\LARGE Randomized Pairwise Learning with Adaptive Sampling: A PAC-Bayes Analysis}

\ \\
{\bf \large Sijia Zhou$^{\displaystyle 1}$\quad Yunwen Lei$^{\displaystyle 2}$ \quad  Ata Kab\' an$^{1}$}\\
{$^{\displaystyle 1}$School of Computer Science, University of Birmingham, United Kingdom.}\\
{$^{\displaystyle 2}$Department of Mathematics, The University of Hong Kong, China.}\\
%

{\bf Keywords:} Pairwise learning, randomized algorithms, PAC-Bayes, algorithmic stability

\thispagestyle{empty}
\markboth{}{NC instructions}
\ \vspace{-0mm}\\
%
\begin{center} {\bf Abstract} \end{center}
We study stochastic optimization  with data-adaptive sampling schemes to train pairwise learning models. Pairwise learning is ubiquitous, and it covers several popular learning tasks such as ranking, metric learning and AUC maximization. A notable difference of pairwise learning from pointwise learning is the statistical dependencies among input pairs, for which existing analyses have not been able to handle in the general setting considered in this paper.
To this end, we extend recent results that blend together two algorithm-dependent frameworks of analysis -- algorithmic stability and PAC-Bayes -- which allow us to deal with any data-adaptive sampling scheme in the optimizer.
We instantiate this framework to analyze
(1) pairwise stochastic gradient descent, which is a default workhorse in many machine learning problems, and (2) pairwise stochastic gradient descent ascent, which is a method used in adversarial training. All of these algorithms make use of a stochastic sampling from a discrete distribution (sample indices) before each update. Non-uniform sampling of these indices has been already suggested in the recent literature, to which our work provides generalization guarantees in both smooth and non-smooth convex problems.

\section{Introduction 
}


The increasing availability of data makes it feasible to use increasingly large models in principle. However, this comes at the expense of an increasing computational cost of training these models in large pairwise learning applications. Some notable examples of pairwise learning problems include ranking, AUC maximization, and metric learning \citep{agarwal2009generalization,clemenccon2008ranking,cortes2004auc,cao2012generalization}. For instance, in metric learning we aim to learn an appropriate distance or similarity to compare pairs of examples, which has numerous applications such as face verification and person re-identification (Re-ID) \citep{koestinger2012large, xiong2014person, guillaumin2009you, zheng2017discriminatively, yi2014deep}.

Stochastic gradient descent (SGD) and Stochastic Gradient Descent Ascent (SGDA) are often the methods of choice in large-scale minimization and min-max optimization problems in machine learning for their favourable time efficiency. These algorithms use sampling strategies to build estimates of the true gradient to improve efficiency. Some recent works propose data-dependent sampling strategies to speed up convergence and improve the accuracy of models in the optimization context \citep{zhao2015stochastic, allen2016even,katharopoulos2017biased,johnson2018training,wu2017sampling,han2022rethinking}.  


In pairwise learning, the empirical risk takes the form of a second-order $U$-statistic. Therefore, results on $U$-processes can be used to investigate the generalization analysis of pairwise learning  \citep{clemenccon2008ranking,de2012decoupling}. While there is much research on the generalization analysis of pairwise learning, the effect of non-uniform, data-dependent sampling schemes remains unclear and has not been rigorously studied.

The adaptive choice of the sampling distribution can be important in noisy data situations where the training points are not equally reliable or informative. In rare cases when the value of individual points is known, then the sampling distribution can be designed and fixed before training. In most realistic cases, however, it is desirable to learn the sampling distribution together with training the model. The idea of sampling shows great potential in the literature for randomized algorithms such as 
SGD \citep{zhao2015stochastic} and  SGDA \citep{beznosikov2023stochastic}. SGDA is one of the most popular methods for minimax problems, which has many applications such as generative adversarial networks (GANs) \citep{goodfellow2014generative} and adversarial training \citep{madry2017towards}.
Adversarial perturbations are subtle, often imperceptible modifications to input data designed to deceive models and cause incorrect predictions \citep{goodfellow2014explaining}. Recent studies in pairwise learning have explored strategies to enhance adversarial robustness, applying adversarial pairwise learning methods to minimax problems across various domains, such as metric learning \citep{zhou2022enhancing,WEN2025106955}, ranking \citep{liu2019geo,WEN2025106955}, and kinship verification \citep{zhang2020advkin}. These developments underscore the need for further investigation into improving the robustness of pairwise models against adversarial attacks.

Importance sampling is one of the widely used strategies of working with a distribution proportional to the gradient norm to minimize the variance of the stochastic gradients to achieve faster convergence rate \citep{zhao2015stochastic,katharopoulos2017biased}.  Therefore,  recent work \citep{zhou2023toward,london2017pac} begun to develop a better understanding of the generalization behavior of such algorithms, and in this work we extend the novel analytic tools that enable such analysis to the context of pairwise learning, with general sampling schemes. The main bottleneck in the analysis of adaptive non-uniform sampling based stochastic optimizers is the requirement of a correction factor to ensure the unbiasedness of the gradient, as this factor depends on training data points. In addition, in the pairwise setting we also need to cater to statistical dependencies between data pairs, which is due to the fact that each point participates in multiple pairs.

To tackle these problems, we develop a PAC-Bayesian analysis of the generalization of stochastic optimization methods, which removes the need for a correction factor, 
and we use $U$-statistics to capture the statistical structure of pairwise loss functions.
The PAC-Bayes framework allows us to obtain generalization bounds that hold uniformly for all posterior sampling schemes, under a mild condition required on a pre-specified prior sampling scheme. For randomized methods, such as SGD and SGDA, the 
sampling indices will be considered as the hyperparameters that follow a sampling distribution. The uniform sampling distribution makes natural prior, and the PAC-Bayes framework allows us to leverage this prior to obtain bounds that hold for arbitrary data-dependent posterior sampling, provided a mild condition on the prior sampling.

However, the previous work in the above framework only considered the classic pointwise learning setting, which cannot tackle the dependencies in the objective function of pairwise learning.
In this paper we enable this 
using a moment bound for uniformly stable pairwise learning algorithms \citep{lei2020sharper}, which is based on a new decomposition of the objective function using the properties of second-order $U$-statistics. Blending this into the PAC-Bayes methodology will give us bounds that hold for general randomized predictors over inputs.

Our results on pairwise SGD and SGDA follow the above framework, upon verifying a sub-exponential stability condition w.r.t. a prior sampling in these algorithms. Our main results are listed in Table \ref{table:rate}, summarizing the generalization bounds of the order $\widetilde{O}(1/\sqrt{n})$ for these randomized algorithms under different assumptions, where $n$ is the sample size. 

Our technical contributions are summarized as follows:
\begin{itemize}
    \item We bound the generalization gap of randomized pairwise learning algorithms that operate with an arbitrary data-dependent sampling, in a PAC-Bayesian framework, under a sub-exponential stability condition.
    \item We apply the above general result to pairwise SGD and pairwise SGDA with arbitrary sampling. For both of these algorithms, we verify the sub-exponential stability in both smooth and non-smooth problems.
\end{itemize}

The remainder of the paper is organized as follows. We survey the related work on the generalization analysis and non-uniform sampling in Section \ref{sec:relted}. We give a brief background on $U$-statistics and algorithmic stability analysis in Section~\ref{sec:pre}. Our general result and its applications to SGD and SGDA are presented in Section \ref{sec:main}.

\begin{table}[ht]
\renewcommand{\arraystretch}{0.65}
\begin{center}
\begin{tabular}{|l|r|ll|r|}\hline
Algo. &   Asm.&  \multicolumn{2}{|c|}{ Time $T$ and step size $\eta$ } & Rates
\\ \hline
\multirow{ 4}{*}{SGD}    &  \multirow{ 2}{*}{L,C}    &   \multirow{ 2}{*}{$T =\Theta( n^2)$}   &    \multirow{ 2}{*}{$\eta =\Theta(T^{-\frac{3}{4}})$}    &  $\widetilde{O}(1/\sqrt{n})$
   \\   &  & &  &   Thm. \ref{cor_sgd_smooth} 1) \\  \cline{2-5}  &  \multirow{ 2}{*}{L,S,C}   &    \multirow{ 2}{*}{$T=\Theta(n)$}   &   \multirow{ 2}{*}{$\eta=\Theta(T^{-\frac{1}{2}})$}      &  $\widetilde{O}(1/\sqrt{n})$    \\   &  & &  &  Thm. \ref{cor_sgd_smooth} 2) \\  \hline \multirow{4}{*}{SGDA}     &   \multirow{ 2}{*}{L,C}     &   \multirow{ 2}{*}{$T=O\left(n^{2}\right)$}  &  \multirow{ 2}{*}{$\eta=O(T^{-\frac{3}{4}} )$ } &      $\widetilde{O}(1/\sqrt{n})$ \\    & & &  &  Thm. \ref{cor_sgda_smooth} 1) \\  \cline{2-5}  &   \multirow{ 2}{*}{L,S,C}   &     \multirow{ 2}{*}{$T=O(n)$}    &   \multirow{ 2}{*}{$\eta=O(T^{-\frac{1}{2}})$}  &   $\widetilde{O}(1/\sqrt{n})$   \\    & & &  &   Thm. \ref{cor_sgda_smooth} 2) \\   \hline
\end{tabular}
\end{center}
\caption{Summary of generalization rates obtained for two pairwise stochastic optimization algorithms (SGD, SGDA) under two sets of assumptions (Lipschitz (L), smooth (S), convex (C)) on the pairwise loss function, together with the chosen number of iterations $T$ and step size $\eta$. The sample size is $n$, and $d$ represents the dimension of the parameter space. According to this summary, we notice that smaller step sizes and more iterations are needed if smoothness assumption is removed (more details in Section \ref{sec:main}).  }
\label{table:rate}
\end{table}


\section{Related Work \label{sec:relted} } 
\textbf{Non-uniform Sampling in Randomized Algorithms}.
Importance sampling \citep{zhao2015stochastic} is a popular sampling strategy in stochastic optimization, where samples are chosen with the likelihood proportional to the norms of their gradients. 
This method aims to reduce the variance of the stochastic gradient and accelerate the training process. However, this method can be computationally prohibitive in practice. There are works that propose the approximations of the true gradients, adapting the importance sampling idea to reduce the computational cost with different strategies~\citep{johnson2018training, katharopoulos2018not}.

Furthermore, some works employ the loss as an important metric for sampling distribution to achieve faster convergence \citep{zhao2015stochastic,katharopoulos2017biased,london2017pac}. Some propose a novel upper bound of the gradient norm, with better performance than sampling data proportional to the loss \citep{katharopoulos2018not}. The work in \cite{wu2017sampling} proposes to choose the samples uniformly based on their relative distance to each other. 
Among non-uniform sampling, data-dependent sampling is attracting growing interests due to its practical potential. Moreover, such sampling schemes can also be applied to coordinate selection optimization methods \citep{salehi2018coordinate, allen2016even}. A new sampling approach called group sampling was introduced for unsupervised person Re-ID  to solve the negative effect that lies in random sampling~\citep{han2022rethinking}. However, there are few results on the generalization analysis for the resulting randomized algorithms, which is our goal in this paper.

\textbf{Generalization through Algorithmic Stability}.
Stability was popularized in the seminal work of \cite{bousquet2002stability}, to formalize the intuition that, algorithms whose output is resilient to changing an example in its input data will generalize. The stability framework subsequently motivated a chain of analyses of randomized iterative algorithms, such as SGD~\citep{hardt2016train} 
and SGDA~\citep{lei2021stability,farnia2021train}. While the stability framework is well suited for SGD-type algorithms that operate a uniform sampling scheme, this framework alone is unable to tackle data-dependent arbitrary sampling schemes.

\textbf{Generalization through PAC-Bayes}. The PAC-Bayes theory of generalization is another algorithm-dependent framework in statistical learning, the gist of which is to leverage a pre-specified prior distribution on the parameters of interest to obtain generalization bounds that hold uniformly for all posterior distributions \citep{shawe1997pac,mcallester1999some}. Its complementarity with the algorithmic stability framework sparked ideas for combining them
\citep{london2016stability,rivasplata2018pac, sun2022stability,oneto2020randomized}, some of which are also applicable to randomized learning algorithms such as SGD and SGLD~\citep{mou2018generalization,london2017pac,negrea2019information,li2019generalization}. While insightful, these works assume i.i.d. examples, and cannot be applied to non-i.i.d. settings such as pairwise learning.

In non-i.i.d. settings, \cite{ralaivola2010chromatic} gave PAC-Bayes bounds using fractional covers, which allows for handling the dependencies within the inputs. This gives rise to generalization bounds for pairwise learning, with predictors following a distribution induced by a prior distribution on the model's parameters. However, with SGD-type methods in mind, which have a randomization already built into the algorithm, the classic PAC-Bayes approach of placing a prior on a model's parameters would be somewhat artificial. Instead the construction proposed in \cite{london2017pac, zhou2023toward} (in i.i.d. setting) is to exploit this built-in stochasticity directly, by interpreting it as a PAC-Bayes prior placed on a hyperparameter. We will build on this idea further in this work.

\section{Preliminaries \label{sec:pre}}

\subsection{Pairwise Learning and $U$-statistics}
Let $\mathcal{D}$ be an unknown distribution on sample space $\mathcal{Z}$.  We denote by
$\wcal \subseteq \rbb^d$ the parameter space, and $\Phi$ will be a hyperparameter space. 
Given a training set $S = \{z_1,\ldots, z_n\}$ drawn i.i.d. from $\mathcal{D}$, and a hyperparameter $\phi\in\Phi$, a learning algorithm $A$ returns a model parameterised by $A(S;\phi )\in\wcal$. 

We are interested in pairwise learning problems, and will use a pairwise loss function $\ell:\wcal\times\zcal\times\zcal\mapsto\rbb_+$ to measures the mismatch between the prediction of model that acts on example pairs. 
The generalization error, or risk, is defined as the expected loss of the learned predictor applied on an unseen pair of inputs drawn from $\dcal^2$, that is
\begin{equation}
R(A(S;\phi)) := \mathbb{E}_{z,\tilde{z} \sim \mathcal{D}} [\ell(A(S;\phi),z,\tilde{z})].\label{Risk}
\end{equation}
Since $\mathcal{D}$ is unknown, we consider the empirical risk,
\begin{equation}
R_S(A(S;\phi)) := \frac{1}{n(n-1)}\sum_{i,j\in[n]:i\neq j}\ell(A(S;\phi),z_i,z_j),\label{Remp}
\end{equation}
where $[n]:=\{1,\ldots,n\}$.
The generalization error is a random quantity as a function of the sample $S$, which doesn't consider the randomization used when selecting the data or feature index for the update rule of $A$ at each iteration.

To take advantage of the built-in stochasticity of the type of algorithms we consider,
we further define two distributions on the hyperparameter space $\Phi$: a sample-independent distribution $\pbb$, and a sample-dependent distribution $\qbb$. 
In this stochastic or randomized learning algorithm setting, the expected risk, and the expected empirical risk (both w.r.t $\qbb$) are defined  as
\begin{equation*}
\emph{R}( \qbb) = \mathop{\mathbb{E}}\limits_{\phi \sim \qbb} [R(A(S;\phi))], \quad  \emph{R}_S( \qbb) = \mathop{\mathbb{E}}\limits_{\phi \sim \qbb} [R_S(A(S;\phi))].\end{equation*}
We denote the difference between the risk and the empirical risk (i.e. the generalization gap) by $G(S,\phi) := R(A(S;\phi)) - R_S(A(S;\phi))$. 

The difficulty with the pairwise empirical loss \eqref{Remp} is that, even with $S$ consisting of i.i.d. instances, the pairs from $S$ are dependent of each other. Instead, $R_S(A(S;\phi)) $ is a second-order $U$-statistic. A powerful technique to handle $U$-statistic is the representation as an average of “sums-of-i.i.d.” blocks~\citep{de2012decoupling}. That is, for a symmetric kernel $q:\zcal\times\zcal\mapsto\rbb$, we can represent the $U$-statistic $U_n:=\frac{1}{n(n-1)}\sum_{i,j\in[n]:i\neq j}q(z_i,z_j)$ as
\begin{align}
\label{eq:u-sta_decomp}
U_n   = \frac{1}{n!} \sum_{\sigma} \frac{1}{\lfloor n/2 \rfloor} \sum_{i=1}^{\lfloor n/2 \rfloor}q(z_{\sigma (i)},z_{\sigma(\lfloor\frac{n}{2}\rfloor +i)}),
\end{align}
where $\sigma$ ranges over all permutations of $\{1,\ldots,n\}$.

\subsection{Connection with the PAC-Bayesian Framework}

As described above, we consider two probability distributions on the hyperparameters space $\Phi$, to account for the stochasticity in stochastic optimization algorithms, such as SGD and SGDA, where the hyperparameters $\phi \in \Phi$ are discrete distributions on indices. For instance in SGD, in every iteration $t \in [T]$, we have $\phi_t =(i_t, j_t)$ that is a pair of independently sampled sample indices, drawn from $\{(i_t, j_t) : i_t, j_t \in [n], i_t \neq j_t\}$ with replacement (more details in Section \ref{app_sgd}). The two distributions we defined on $\Phi$, namely $\pbb$ that needs to be specified before seeing the training data, distribution $\qbb$, which is allowed to depend on the samples, will be our PAC-Bayes prior, and PAC-Bayes posterior distributions respectively.
This setting is different from the classic use of PAC-Bayes in that the two distributions are  directly placed on the trainable parameter space $\wcal$. Our distributions defined on $\Phi$ indirectly induce distributions on the parameter estimates, without the need to know their parametric form. This setting of PAC-Bayes was formerly introduced in \citet{london2017pac} in a combination with algorithmic stability and further improved in our previous work \citep{zhou2023toward} treating part of the algorithms we consider here in the pointwise case.

\subsection{Connection with the Algorithmic Stability Framework}

A more recent framework for generalization problem considers algorithmic stability \citep{bousquet2002stability}, which measures the sensitivity of a learning algorithm to small changes in the training data. The concept considered in our work among several notions of algorithmic stability is uniform stability.
\begin{definition}[Uniform Stability\label{defn_uniform}]
For $\forall \phi$, we say an algorithm $A:S\mapsto A(S;\phi)$ is $\beta_{\phi}$-uniformly stable if
\begin{equation}\label{eq_defn_uniform}
|\ell(A(S;\phi),z,\tilde{z})-\ell(A(S',\phi),z,\tilde{z})|\leq \beta_{\phi},\quad \forall z,\tilde{z}\in \mathcal{Z},
\end{equation}
where $S,S'\in \mathcal{Z}^n$ differs by at most a single example.
\end{definition}

The algorithmic stability framework is suitable for analysing certain deterministic learning algorithms, or randomized algorithms with a pre-defined randomization.
In turn, here we are concerned with inherently stochastic algorithms where we wish to allow any data-dependent stochasticity, such as the variants of importance sampling and other recent practical methods mentioned in the related works \citep[e.g.][]{zhao2015stochastic,katharopoulos2018not,wu2017sampling,allen2016even,han2022rethinking}. 
Moreover, in principle our framework and results are applicable even if the sampling distribution is learned from the training data itself. 

\textbf{Sub-exponential Stability}.
A useful definition of stability that captures the stochastic nature of the algorithms we are interested in is the sub-exponential stability introduced in~\citet{zhou2023toward}.
Recall that $\phi$ is a random variable following a distribution defined on $\Phi$. Therefore, the stability parameter $\beta_\phi$ is also a random variable as a function of $\phi$. We want to control the tail behaviour of $\beta_{\phi}$ around a value that decays with the sample size $n$, and define the sub-exponential stability as the following. 
\begin{assumption}[{Sub-exponential stability}]\label{ass:beta-theta}
Fix any prior distribution $\pbb$ on $\Phi=\prod_{t=1}^{T}\Phi_t$. We say that a stochastic algorithm is sub-exponentially $\beta_{\phi}$-stable (w.r.t. $\pbb$) if, given any fixed instance of $\phi \sim \pbb$, it is $\beta_{\phi}$-uniformly stable, and there exist $\aaa, \bb\in\rbb$ such that for any  $\delta\in(0,1/n]$, the following holds with probability at least $1-\delta$
  \begin{equation}\label{beta-theta}
  \beta_\phi\leq \aaa +\bb\log(1/\delta).
  \end{equation}
\end{assumption}

\section{Main Results \label{sec:main}} 

In this section, we will give generalization bounds for SGD and SGDA in pairwise learning. To this aim, we first give a general result (Lemma~\ref{thm:main}) to show the connection between the sub-exponential stability assumption (Assumption \ref{ass:beta-theta}) and the generalization gap for pairwise learning. We then derive stability bounds to show that this assumption holds for SGD and SGDA, in both smooth convex and non-smooth convex cases. Based on these, we apply the stability bounds to Lemma \ref{thm:main} to derive the corresponding generalization bounds.  We use $K\lesssim K'$ if there exists a universal constant $a>0$ such that $K\leq a K'$. The proof is given in Appendix~\ref{app:main}.

\begin{lemma}[{Generalization of randomized pairwise learning}\label{thm:main}]
Given distribution $\pbb$, $\aaa, \bb>0$, and $M$-bounded loss for a sub-exponentially stable algorithm $A$, $\forall \delta\in(0,1/n)$, with probability at least $1-\delta$, the following holds uniformly for all $\qbb$ absolutely continuous w.r.t. $\pbb$,
\begin{equation*}\ebb_{\phi\sim\qbb}\left[G(S,\phi)\right]
\lesssim \left(\!\rm{KL}(\qbb\|\pbb)+\log\frac{1}{\delta}\right)\max\left\{\aaa\log n+\bb \log^2n,\frac{M}{\sqrt{n}}\right\},\end{equation*}
  where $\rm{KL}(\qbb\|\pbb)$ is the KL divergence between $\pbb$ and $\qbb$
\[
\rm{KL}(\qbb\|\pbb):=\int_{\phi \in \Phi}\log \frac{d\qbb}{d\pbb} \, d\qbb.
\]
\end{lemma}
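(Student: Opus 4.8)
The plan is to combine a PAC-Bayes change-of-measure inequality with a high-probability moment bound for the generalization gap of uniformly stable pairwise algorithms. First I would fix the prior $\pbb$ and recall the standard Donsker--Varadhan variational representation of the KL divergence: for any measurable function $h(\phi)$,
\[
\ebb_{\phi\sim\qbb}[h(\phi)] \le \mathrm{KL}(\qbb\|\pbb) + \log \ebb_{\phi\sim\pbb}\bigl[e^{h(\phi)}\bigr],
\]
holding simultaneously for all $\qbb\ll\pbb$. I would apply this with $h(\phi) = \lambda\, G(S,\phi)$ for a free parameter $\lambda>0$ to be optimized, so that controlling $\ebb_{\phi\sim\qbb}[G(S,\phi)]$ reduces to bounding the prior-expected moment generating function $\ebb_{\phi\sim\pbb}[e^{\lambda G(S,\phi)}]$. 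By Markov's inequality (Chernoff bound) applied over the randomness in $S$, it suffices to bound $\ebb_S \ebb_{\phi\sim\pbb}[e^{\lambda G(S,\phi)}]$, which by Fubini equals $\ebb_{\phi\sim\pbb}\ebb_S[e^{\lambda G(S,\phi)}]$ — and now, for each fixed $\phi$, $A(\cdot;\phi)$ is a \emph{deterministic} $\beta_\phi$-uniformly stable pairwise algorithm, so I can invoke the existing moment/exponential bound for uniformly stable pairwise learning from \citet{lei2020sharper} (the one built on the $U$-statistic decomposition \eqref{eq:u-sta_decomp}).

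The core quantitative step is: for a $\beta$-uniformly stable pairwise algorithm with $M$-bounded loss, the generalization gap $G(S,\phi)$ has sub-exponential-type tails with parameters governed by $\beta$ and $M/\sqrt n$; concretely $\ebb_S[e^{\lambda G(S,\phi)}] \le \exp\bigl(c\lambda^2 (\beta_\phi^2 \log^2 n + M^2/n) + \text{lower order}\bigr)$ for $\lambda$ in an appropriate range, or the equivalent moment-bound formulation. I then take the expectation over $\phi\sim\pbb$ and split according to the sub-exponential stability event from Assumption~\ref{ass:beta-theta}: on the event $\{\beta_\phi \le \aaa + \bb\log(1/\delta)\}$, which under $\pbb$ has probability at least $1-\delta$ for every $\delta\in(0,1/n]$, the stability parameter is controlled; on the complement I use the crude bound $\beta_\phi \le M$ (or $G \le M$ directly) together with the tail decay $\delta \le 1/n$ of that event to keep the contribution of order $M/\sqrt n$. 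This is where the $\max\{\aaa\log n + \bb\log^2 n,\ M/\sqrt n\}$ structure emerges: the first term is the stability-driven rate (with $\log n$ from the $U$-statistic decomposition and a further $\log n$ from integrating the $\log(1/\delta)$ tail of $\beta_\phi$, squared in the $\bb$ term), and the second is the irreducible $M/\sqrt n$ fluctuation term of the empirical $U$-statistic.

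Assembling the pieces: plugging the MGF bound back through Donsker--Varadhan and Markov gives, with probability at least $1-\delta$ over $S$, a bound of the form $\ebb_{\phi\sim\qbb}[G(S,\phi)] \lesssim \frac{1}{\lambda}\bigl(\mathrm{KL}(\qbb\|\pbb) + \log\frac1\delta\bigr) + \lambda \cdot (\text{variance-type term})$, and optimizing over $\lambda$ — balancing the two terms — yields the stated product form with $\max\{\aaa\log n + \bb\log^2 n,\ M/\sqrt n\}$. (I would carry the optimization with care since the valid range of $\lambda$ in the sub-exponential MGF bound restricts it, but the chosen $\lambda$ should fall in range given $\delta \ge 1/n$ is excluded.) The main obstacle I anticipate is the second one described above: correctly interfacing the \emph{per-$\phi$} deterministic pairwise stability moment bound of \citet{lei2020sharper} with the \emph{random} $\beta_\phi$ under $\pbb$, i.e.\ doing the event-splitting and the integration of the $\log(1/\delta)$ tail so that it produces exactly the $\aaa\log n + \bb\log^2 n$ term without losing a factor, and verifying the $M/\sqrt n$ floor term is not swamped. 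The PAC-Bayes and Chernoff bookkeeping is routine by comparison.
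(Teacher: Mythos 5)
Your high-level architecture matches the paper's: Donsker--Varadhan change of measure, Markov over $S$, Fubini, then the moment bound of \citet{lei2020sharper} for each fixed $\phi$, then an event split driven by Assumption~\ref{ass:beta-theta}. However, the event split as you describe it has a genuine gap. You propose to split \emph{inside} the prior MGF $\ebb_{\phi\sim\pbb}\ebb_S[e^{\lambda G(S,\phi)}]$, using the crude bound $|G|\le M$ on the bad event $\Omega_\delta^c = \{\beta_\phi > \aaa + \bb\log(1/\delta)\}$. But that event contributes $\pbb(\Omega_\delta^c)\, e^{\lambda M} \le \delta\, e^{\lambda M}$ to the MGF, and the $\lambda$ you must eventually choose to realize the $M/\sqrt n$ floor is of order $\sqrt n / M$; with $\delta = 1/n$ this gives $\frac{1}{n}\exp(\Theta(\sqrt n))$, which is uncontrollable. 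To keep $\delta e^{\lambda M}$ bounded you would be forced into $\lambda \lesssim \log n / M$, and the resulting bound degrades to order $M/\log n$ rather than $M/\sqrt n$.

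The paper's fix is to truncate \emph{before} applying Donsker--Varadhan: it defines $H(S,\phi) = G(S,\phi)\,\mathbb{I}[\phi\in\Omega_\delta]$, so on the bad event $e^{\lambda H}=1$ and the MGF gains only a flat $+\delta$, not $\delta e^{\lambda M}$. This makes the optimal $\lambda = \min\{1/(\aaa+\bb\log n)\log n,\ \sqrt n / M\}$ feasible. The cost of the truncation, $\ebb_{\qbb}[G - H] = \ebb_{\qbb}[G\mid\Omega_\delta^c]\,\qbb(\Omega_\delta^c)$, is then controlled by a \emph{separate} mechanism that your proposal does not mention: a multiplicative change-of-measure (R\'enyi/B\'egin-style) inequality bounding $\qbb(\Omega_\delta^c) \le \inf_{\alpha>1}\delta^{(\alpha-1)/\alpha}\bigl(\ebb_{\pbb}[(d\qbb/d\pbb)^\alpha]\bigr)^{1/\alpha}$, combined with $|G|\le M$. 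This requires a finite higher moment of the density ratio (automatic here because $\pbb$ is uniform on a finite set), and the paper argues the resulting term is lower order after choosing $\alpha=6$. Without the truncation-plus-multiplicative-change-of-measure structure, your split does not deliver the stated $\max\{\aaa\log n + \bb\log^2 n,\ M/\sqrt n\}$ rate; the rest of your plan (the Lei--Ledent--Kloft decomposition into $X_1,X_2,\tilde X_1,\tilde X_2$, the sub-Gaussian/sub-exponential MGF bounds, the optimization of $\lambda$) is correct and matches the paper.
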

In our case both $\pbb$ and $\qbb$ are discrete distributions on $\Phi$, so
$
\rm{KL}(\qbb\| \pbb):=\sum_{\phi \in \Phi} \qbb \log  \frac{\qbb}{\pbb}$, and $\pbb$ will be the uniform distribution on the finite set $\Phi$, so the absolute continuity condition is always satisfied, and also $\rm{KL}(\qbb\| \pbb) < \infty$.



A strength of Lemma~\ref{thm:main} is that we only need to check the sub-exponential stability under a prior distribution $\pbb$, which is often chosen to be the uniform distribution. Then Lemma~\ref{thm:main} automatically transfers it to generalization bounds for learning with any posterior distribution $\qbb$. To apply Lemma~\ref{thm:main} for a learning algorithm, it suffices to estimate the corresponding algorithmic stability, and verify that it satisfies the sub-exponential stability.  Before giving stability bounds, we introduce some assumptions. Let $\|\cdot\|_2$ denote the Euclidean norm. Let $S$ and $S'$ be neighboring datasets (i.e. differ in only one example, which we denote as the $k$-th example, $k\in[n]$). 
\begin{assumption}[Lipschitz continuity\label{defn_lipschitz}] Let $L>0$. We say $\ell$ is $L$-Lipschitz if for any $\bw_1$, $\bw_2$ $\in \wcal$, we have $
|\ell(\bw_1 )-\ell(\bw_2  )|\leq L\|\bw_1 - \bw_2\|_2.$ \end{assumption}
\begin{assumption}[Smoothness\label{defn_smooth}]
Let $\alpha\geq0$. We say a differentiable function $\ell$ is $\alpha$-smooth, if for any $\bw_1$, $\bw_2$ $\in \wcal$, $\|\nabla \ell(\bw_1   ) - \nabla \ell(\bw_2 )\|_2\leq \alpha\| \bw_1 - \bw_2\|_2,$
where $\nabla \ell$ represents the gradient of $\ell$.
\end{assumption}
\begin{assumption}[Convexity\label{defn_strongly_convex}]
We say $\ell$ is convex if the following holds $\forall \bw_1, \bw_2\in \wcal$,
\begin{equation*}
\ell(\bw_1 ) \geq \ell(\bw_2 ) +  \big\langle \nabla \ell(\bw_2 ), \bw_1 - \bw_2 \big\rangle,
\end{equation*}
where $\langle \cdot,  \cdot \rangle$ represents the inner product.
\end{assumption}

\if 0

\subsection{Stability and Generalization of RCD\label{app_rcd}}

We first consider RCD with a general sampling scheme, i.e. the coordinates to update follow a general, data-dependent distribution. This scheme has been studied before in the optimization context \citep{zhao2015stochastic,allen2016even}.

We denote $\bw_1$ an initial point. Let a uniform distribution over $[d]^T$ and a coordinate sequence $(i_1,\dots,i_T)$ chosen uniformly at random. At the $t$-th iteration, for a chosen coordinate $i_t$, the gradient is updated along this coordinate. We denote $\eta_t$ the step-size, $\be_i$ the $i$-th coordinate vector in $\mathbb{R}^d$, and $\nabla_{i} g$ the derivative of $g$ w.r.t. the $i$-th coordinate.
The update process is
\begin{equation*}\bw_{t+1} = \bw_t - \eta_t \nabla_{i_t} R_S (\bw_t)\be_{i_t}.
\end{equation*}

We first give stability bounds for RCD with uniform prior in smooth convex and non-smooth convex cases, proved in Appendix \ref{prf:rcd}. We denote $\pbb$ a uniform prior on $\Phi$.
\begin{assumption}[{Coordinate-wise smoothness, \citep{nesterov2012efficiency}}\label{def:grad_lip}] Let $\hat{\alpha} > 0$. We say $\ell$ is coordinate-wise smooth if $\forall \lambda \in \mathbb{R}$, $\bw \in \mathcal{W}$, $i \in [d]$, we have $\ell(\bw +\lambda \be_i) \leq \ell(\bw) + \hat{\alpha} \nabla_i \ell(\bw) + \hat{\alpha} \lambda^2 /2.$
\end{assumption}

\begin{lemma}[Sub-exponential stability of pairwise RCD]\label{sta_rcd} Let $\{\bw_t\}, \{\bw_t'\}$ be two sequences produced by RCD on neighboring datasets $S$ and $S'$ respectively with uniform distribution $\pbb$ over $\{(i_1,\ldots,i_t):i_j\in[d]\}$. Let the Assumption \ref{defn_lipschitz} and \ref{defn_strongly_convex} hold.
\begin{itemize}
    \item[1)]  
At the $t$-th iteration,  $\forall t \in [T]$, Assumption \ref{ass:beta-theta} holds with 
\[\aaa=2\sqrt{e}L^2\eta \sqrt{t}\big(1+\frac{2 \sqrt{t+1}}{\sqrt{d}n}\big) \quad \text{and} \quad \bb =\frac{4\sqrt{e(t+1)}\eta L^2}{n}\big( 1+ \sqrt{ {2t}/{d}}\big)^{\frac{1}{2}}. \]

\item[2)] In addition, we suppose Assumption \ref{defn_smooth} and \ref{def:grad_lip} hold and $\eta  \leq 2/\hat{\alpha}$.
Then Assumption \ref{ass:beta-theta} holds with
\[\aaa=\frac{4L^2\eta t}{nd} \quad \text{and} \quad \bb =\frac{\eta L^2}{n}\left(\frac{8}{3}+4\sqrt{\frac{2t}{d}} \right).\]\end{itemize}
\end{lemma}
Based on Lemma \ref{sta_rcd}, we get generalization bounds for pairwise RCD with general coordinate-sampling in the smooth convex and non-smooth convex case.

\begin{theorem}[{Generalization bounds for pairwise RCD}\label{cor_rcd_smooth}] Assume $\ell$ is $M$-bounded and the Assumption \ref{defn_lipschitz} and \ref{defn_strongly_convex} hold w.r.t $\ell$.
For any $\delta \in (0, 1)$ and uniform distribution $\pbb$, with probability at least $1-\delta$ over draws of $S \sim \mathcal{D}^n$, for RCD with fixed step sizes and any posterior sampling distribution $\qbb$ on $[d]^T$, we have the following results.
\begin{itemize}
\item[1)] We have 
\begin{multline*} \ebb_{\phi\sim\qbb}\big[G(S,\phi)\big]\lesssim \Big(\rm{KL}(\qbb\|\pbb)+\log(1/\delta)\Big)\\ \max \left\{ \left(L^2\sqrt{e(T+1)}\eta\left( 1+\frac{\sqrt{T}}{\sqrt{d}n}\right)\right)\log^2 n ,\frac{M}{\sqrt{n}}\right\}.
  \end{multline*}
\item[2)] In addition, if  the Assumption \ref{defn_smooth} and \ref{def:grad_lip} hold, with $\eta\leq 2/\hat{\alpha}$, we have
\begin{multline*}
\ebb_{\phi\sim\qbb}\!\left[G(S,\phi)\right]\lesssim \!\left(\!\rm{KL}(\qbb\|\pbb)\!+\!\log\frac{1}{\delta}\!\right) \!\max\!\left\{\!\frac{L^2 \eta}{n} \!\left(\!1\!+\! \frac{T}{d}\!+\! \sqrt{\frac{T}{d}}\!\right) \!\log^2 n, \frac{M}{\sqrt{n}}\!\right\}.
\end{multline*} \end{itemize} \end{theorem}

In the above theorem, in the non-smooth case (part 1), if we choose $T=O(n^2)$ and $\eta=O(T^{-\frac{3}{4}})$, the PAC-Bayes bound is of the order $\widetilde{O}(1/\sqrt{n})$. In the smooth case (part 2), if we choose $T=O(d\sqrt{n})$ and $\eta=O(1)$, the PAC-Bayes bound is of the order $\widetilde{O}(1/\sqrt{n})$.

\fi

\subsection{Stability and Generalization of SGD \label{app_sgd}}
We now consider pairwise SGD, which, as we will show, also satisfies the sub-exponential stability in both smooth and non-smooth cases.

We denote $\bw_1$ an initial point and a uniform distribution over $\left([n]\times[n]\right)^T$.
At the $t$-th iteration for SGD, a pair of sample indices $\phi_t =(i_t, j_t)$ is uniformly randomly selected from the set $\{(i_t, j_t) : i_t, j_t \in [n], i_t \neq j_t\}$. This forms a sequence of index pairs $\phi=  (\phi_1,...,\phi_T)$. For step-size $\eta_t$, the model is updated  by $\bw_{t+1} = \bw_t - \eta_t \nabla \ell(\bw_t;z_{i_t},z_{j_t}).$

The following lemma shows that SGD with uniform sampling applied to smooth and non-smooth problems enjoy the sub-exponential stability. The proof is given in Appendix \ref{prf:sgd}.

\begin{lemma}[{Sub-exponential stability of pairwise SGD}]\label{sta_sgd}
Let $\{\bw_t\}, \{\bw_t'\}$ be two  sequences produced by SGD with uniform distribution $\pbb$ on neighboring $S$ and $S'$, respectively. Let Assumption \ref{defn_lipschitz} and Assumption~\ref{defn_strongly_convex} hold.
\begin{itemize}
    \item[1)]At the $t$-th iteration, with fixed step sizes, 
Assumption \ref{ass:beta-theta} holds with 
\[ \aaa\!=\!2\sqrt{e}L^2\eta (\sqrt{t}+ 2t/n) \quad \text{and} \quad \bb\! =\! 4\sqrt{e}L^2\eta\big(1+2(t/n)^{\frac{1}{2}}\big) .\]
 \item[2)] In addition, if  the Assumption \ref{defn_smooth} holds and $\eta \leq 2/\alpha$, at t-th iteration, 
Assumption~\ref{ass:beta-theta} holds with 
 \[c_1= 4L^2\eta t/n\quad \text{and} \quad\bb=4L^2\eta\big(1+2(t/n)^{\frac{1}{2}}\big).\]
\end{itemize}
\end{lemma}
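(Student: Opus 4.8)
The plan is to establish the two sub-exponential stability bounds for pairwise SGD by first controlling the growth of the parameter divergence $\|\bw_{t+1}-\bw_{t+1}'\|_2$ between runs on neighboring datasets $S,S'$, and then converting this into a high-probability tail bound on $\beta_\phi$ via the $L$-Lipschitz property, so that $\beta_\phi \le L\|\bw_{T+1}-\bw_{T+1}'\|_2$. The key observation is that $S$ and $S'$ differ only in the $k$-th example, so at iteration $t$ the SGD update coincides for both runs unless the sampled index pair $\phi_t=(i_t,j_t)$ contains $k$; under the uniform prior $\pbb$, this ``bad'' event has probability $p := \frac{2(n-1)}{n(n-1)} = \frac{2}{n}$ at each step (since either $i_t=k$ or $j_t=k$, with $i_t\neq j_t$), and the bad events are independent across $t$. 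On a non-bad step, convexity plus either nonexpansiveness (in the smooth case with $\eta\le 2/\alpha$) or approximate nonexpansiveness of the gradient update keeps the divergence from growing; on a bad step, the $L$-Lipschitz (hence bounded-gradient) property means the divergence increases by at most $2\eta L$.

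Concretely, I would let $\Delta_t := \|\bw_t-\bw_t'\|_2$ and $N_t := \sum_{s=1}^{t}\mathbbm{1}[k\in\phi_s]$ denote the number of bad steps up to time $t$. In the smooth convex case (part 2), the standard stability argument for SGD (as in \citet{hardt2016train}, adapted to pairwise losses) gives the recursion $\Delta_{t+1}\le \Delta_t$ on non-bad steps and $\Delta_{t+1}\le \Delta_t + 2\eta L$ on bad steps, hence $\Delta_{T+1}\le 2\eta L\, N_T$, so $\beta_\phi \le 2\eta L^2 N_T$ — wait, more carefully one gets $\beta_\phi\le L\Delta_{T+1}\le 2\eta L^2 N_T$; I should double-check the constant against the claimed $\aaa = 4L^2\eta t/n$, which corresponds to $\mathbb{E}[N_t]=pt=2t/n$ times a factor $2L^2\eta$, consistent. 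In the non-smooth convex case (part 1), the gradient step is only approximately nonexpansive: $\Delta_{t+1}^2 \le \Delta_t^2 + \eta^2\|\nabla\ell(\bw_t)-\nabla\ell(\bw_t')\|^2$ on non-bad steps via convexity (the cross term is nonpositive), which is harder to control termwise; the cleaner route, following \citet{lei2020sharper}-style arguments, is to bound $\Delta_{t+1}^2 \le \Delta_t^2 + 4\eta^2L^2$ on each step regardless, but that loses the sample-size gain — instead one uses that the extra $O(\eta^2 L^2)$ accrues on \emph{every} step while the ``jump'' of size $O(\eta L\Delta)$ cross-term only appears on bad steps, yielding after telescoping something like $\Delta_{T+1}\lesssim \eta L\sqrt{T} + \eta L\,N_T/\sqrt{T}\cdot(\text{correction})$, matching the $\sqrt{t}+2t/n$ form in $\aaa$.

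The final and most delicate step is the tail bound: $N_T$ is $\mathrm{Binomial}(T,p)$ with $p=2/n$, so by a multiplicative Chernoff bound, for $\delta\in(0,1/n]$ we have with probability $\ge 1-\delta$ that $N_T \le \mathbb{E}[N_T] + c\log(1/\delta) = \frac{2T}{n} + c\log(1/\delta)$ (the additive form of Chernoff for sub-exponential-type concentration of a binomial), and plugging this into $\beta_\phi\le 2\eta L^2 N_T$ gives exactly $\beta_\phi \le \aaa + \bb\log(1/\delta)$ with $\aaa$ proportional to $\eta L^2 T/n$ and $\bb$ proportional to $\eta L^2$ — which in the smooth case is the claim up to the stated constants. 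For the non-smooth case one must also handle the $\sqrt{N_T}$-type terms arising from the square-root of the telescoped $\Delta_{T+1}^2$; using $\sqrt{a+b}\le\sqrt a+\sqrt b$ and $\sqrt{N_T}\le\sqrt{2T/n}+\sqrt{c\log(1/\delta)}$ separates the deterministic part ($\sqrt{t}+2t/n$) from the $\log(1/\delta)$ part, and the $\sqrt{e}$ factors presumably come from bounding $(1+x/t)^t\le e^x$-type expressions in the recursion. The main obstacle I anticipate is getting the non-smooth recursion tight enough — specifically, correctly isolating which error terms scale with the total number of iterations $t$ versus the number of bad iterations $N_t$, so that the $1/n$ factor survives in $\bb$ and the $\sqrt{t}$ term appears only in the data-independent part $\aaa$; a loose argument there would either kill the sample-size dependence or misplace the $\log(1/\delta)$ dependence.
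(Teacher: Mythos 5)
Your overall plan matches the paper's exactly: both proofs track the random number $N_t=\sum_{s\le t}\mathbb{I}[k\in\phi_s]$ of ``bad'' updates touching the perturbed index, bound the parameter divergence in terms of $N_t$ (nonexpansiveness of the gradient map on good steps under smoothness, a $\sqrt{t}+N_t$ growth under mere convexity via the cited Lei et al.\ (2021) argument), and then concentrate $N_t$ with the binomial Chernoff bound $Z\le\mu+\log(1/\delta)+\sqrt{2\mu\log(1/\delta)}$. The smooth-case recursion you give, $\Delta_{t+1}\le\Delta_t+2\eta L\,\mathbb{I}[k\in\phi_t]$, is the content of Lemma~C.3 of \citet{lei2020sharper} that the paper cites, and your plugging $\mathbb{E}N_t=2t/n$ into $\beta_\phi\le 2\eta L^2 N_t$ recovers $c_1=4L^2\eta t/n$ correctly.

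Two points you flag as obstacles are indeed the genuine technical content, and the first is the bigger one. (i) \emph{Non-smooth recursion.} You cannot simply add $O(\eta^2L^2)$ to $\Delta_t^2$ every step and $O(\eta L\Delta_t)$ only on bad steps and ``telescope''; the paper gets the $\sqrt{t}+N_t$ form by applying the $(a+b)^2\le(1+p)a^2+(1+1/p)b^2$ split \emph{only on bad steps}, yielding $\Delta_{t+1}^2\le(1+p\,\mathbb{I}[k\in\phi_t])\Delta_t^2+4L^2\eta^2(1+\mathbb{I}[k\in\phi_t]/p)$, telescoping to $\Delta_{T+1}^2\le 4L^2\eta^2(1+p)^{N_T}(T+N_T/p)$, and then choosing $p=1/N_T$ so $(1+1/N_T)^{N_T}\le e$, which is exactly where the $\sqrt e$ in $c_1,c_2$ comes from. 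Your sketch $\Delta_{T+1}\lesssim\eta L\sqrt T+\eta L\,N_T/\sqrt T\cdot(\text{correction})$ is the right shape but you would need this specific $p$-optimization to close it. (ii) \emph{Union bound over $k$.} Uniform stability (Definition~\ref{defn_uniform}) must hold for \emph{every} neighbouring pair, so $\beta_\phi$ is $\max_{k\in[n]}$ of the per-$k$ bound, and your per-$k$ Chernoff statement must be made simultaneous over all $n$ choices of $k$; the paper does this by applying the Chernoff bound at level $\delta/n$ and then using $\delta\le 1/n$ to absorb $\log(n/\delta)\le 2\log(1/\delta)$, which is also where the factor $2$ in $c_2=4\sqrt e L^2\eta(1+2\sqrt{t/n})$ is absorbed. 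Without the union bound, your high-probability statement would not bound the uniform stability parameter $\beta_\phi$ itself.
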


Based on the above lemma and Lemma~\ref{thm:main}, we obtain the following generalization bound for pairwise SGD with general sampling.

\begin{theorem}[{Generalization bounds for pairwise SGD}\label{cor_sgd_smooth}]

Assume $\ell$ is $M$-bounded and Assumptions \ref{defn_lipschitz}, \ref{defn_strongly_convex} hold w.r.t $\ell$.
For any $\delta \in (0, 1)$ and uniform prior  distribution $\pbb$, with probability at least $1-\delta$ over $S$, $S \sim \mathcal{D}^n$, the following bounds hold for SGD with fixed step sizes and all posterior sampling distribution $\qbb$ on $\left([n]\times[n]\right)^T$.
\begin{itemize}
    \item[1)]
We have
\[  \ebb_{ \qbb}\left[G(S,\phi)\right]\!\lesssim \!\left( \!\rm{KL}(\qbb\|\pbb)\!+\!\log\frac{1}{\delta}\!\right) \!\max\!\left\{\!L\eta \Big(\sqrt{T}\!+\!\frac{T}{n}\!+ \!\sqrt{\frac{T}{n}
}\Big)\log^2 n,\frac{M}{\sqrt{n}}\!\right\}.
\]
\item[2)]In addition, if  the Assumption \ref{defn_smooth} holds, $\eta \leq 2/\hat{\alpha}$, we have
\[\ebb_{\qbb}\left[G(S,\phi)\right]\lesssim \left(\!\rm{KL}(\qbb\|\pbb)\!+\!\log\frac{1}{\delta}\!\right) \max\left\{L\eta \Big(\frac{T}{n} \!+ \!1\!+\!\sqrt{\frac{T}{n}}\Big)  \log^2 n, \frac{M}{\sqrt{n}}\right\}.
\]\end{itemize}
\end{theorem}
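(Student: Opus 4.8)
The plan is to derive Theorem~\ref{cor_sgd_smooth} as a direct corollary of Lemma~\ref{thm:main} combined with the sub-exponential stability bounds of Lemma~\ref{sta_sgd}, by plugging in the explicit values of $\aaa$ and $\bb$ at the terminal iterate $t=T$. First I would recall that Lemma~\ref{sta_sgd} guarantees, under Assumptions~\ref{defn_lipschitz} and \ref{defn_strongly_convex}, that SGD with the uniform prior $\pbb$ on $([n]\times[n])^T$ is sub-exponentially $\beta_\phi$-stable with parameters $\aaa=2\sqrt{e}L^2\eta(\sqrt{T}+2T/n)$ and $\bb=4\sqrt{e}L^2\eta(1+2(T/n)^{1/2})$ in the non-smooth case (part 1), and with $\aaa=4L^2\eta T/n$ and $\bb=4L^2\eta(1+2(T/n)^{1/2})$ when additionally Assumption~\ref{defn_smooth} holds and $\eta\le 2/\alpha$ (part 2). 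Since $\ell$ is $M$-bounded, the hypotheses of Lemma~\ref{thm:main} are met with respect to this prior $\pbb$, so for any $\delta\in(0,1/n)$, with probability at least $1-\delta$ over $S$, uniformly over all posteriors $\qbb$ absolutely continuous w.r.t.\ $\pbb$ (here always satisfied since $\pbb$ is uniform on a finite set),
\[
\ebb_{\phi\sim\qbb}[G(S,\phi)]\lesssim\left(\mathrm{KL}(\qbb\|\pbb)+\log\tfrac{1}{\delta}\right)\max\left\{\aaa\log n+\bb\log^2 n,\ \tfrac{M}{\sqrt n}\right\}.
\]

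Next I would substitute the stability parameters and absorb constants. For part 1, $\aaa\log n+\bb\log^2 n = 2\sqrt e L^2\eta(\sqrt T+2T/n)\log n + 4\sqrt e L^2\eta(1+2\sqrt{T/n})\log^2 n$; bounding $\log n\le\log^2 n$ and collecting the $L^2\eta$ factor gives, up to a universal constant, $L^2\eta(\sqrt T + T/n + 1 + \sqrt{T/n})\log^2 n$. One observes that in the regime of interest $T\gtrsim n$ (as in Table~\ref{table:rate}, $T=\Theta(n^2)$), the terms $\sqrt T$, $T/n$, $\sqrt{T/n}$ each dominate the constant $1$, so the bracket simplifies to $L^2\eta(\sqrt T + T/n + \sqrt{T/n})\log^2 n$. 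The theorem statement actually writes $L\eta(\sqrt T+T/n+\sqrt{T/n})\log^2 n$ — I would note this as a harmless rescaling of the Lipschitz constant (one may absorb an $L$, or equivalently assume $L\ge 1$ w.l.o.g., which is standard), so that the "$\lesssim$" notation covers it. For part 2, the same substitution with $\aaa=4L^2\eta T/n$ yields bracket $L^2\eta(T/n + 1 + \sqrt{T/n})\log^2 n$, matching the claimed $L\eta(T/n+1+\sqrt{T/n})\log^2 n$ up to the same $L$-absorption; here $t=T$, $\eta\le 2/\alpha$ (written $\eta\le 2/\hat\alpha$ in the statement, with $\hat\alpha=\alpha$), and the constant $1$ is retained since the smooth regime uses $T=\Theta(n)$ where $T/n=\Theta(1)$.

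The only genuinely delicate point, and the one I would flag, is the quantifier on $\delta$: Lemma~\ref{thm:main} requires $\delta\in(0,1/n)$, whereas Theorem~\ref{cor_sgd_smooth} is stated for all $\delta\in(0,1)$. I would address this by noting that for $\delta\ge 1/n$ one has $\log(1/\delta)\le\log n$, so the $\log(1/\delta)$ term is dominated by the $\log^2 n$ (resp.\ $\log n$) factors already present, and the bound for $\delta=1/n$ (which lies in the closure of the admissible range and can be reached by a limiting argument, or by re-running Lemma~\ref{thm:main} with $\delta$ slightly below $1/n$) upper bounds the desired bound up to the universal constant hidden in $\lesssim$; hence the stated form holds for all $\delta\in(0,1)$. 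Everything else is routine algebra: no new probabilistic argument is needed beyond invoking Lemma~\ref{thm:main}, and the two parts differ only in which pair $(\aaa,\bb)$ is fed in. I would write the two cases in sequence, each as a one-line substitution followed by the constant-collection step, and close by remarking that the choices $T,\eta$ in Table~\ref{table:rate} make the first argument of the $\max$ equal to $\widetilde O(1/\sqrt n)$, matching the second.
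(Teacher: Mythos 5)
Your proposal is correct and follows exactly the paper's route: invoke Lemma~\ref{sta_sgd} at $t=T$ to verify Assumption~\ref{ass:beta-theta}, then substitute the resulting $(\aaa,\bb)$ into Lemma~\ref{thm:main} and collect terms. The two discrepancies you flag --- the statement's $L\eta$ prefactor versus the $L^2\eta$ that the stability bounds actually produce, and the mismatch between $\delta\in(0,1)$ in the theorem and the $\delta\in(0,1/n)$ hypothesis of Lemma~\ref{thm:main} --- are real rough edges in the paper's statement (almost certainly a typo and an unaddressed quantifier, respectively), not gaps in your reasoning; the paper's own one-line proof does not address either point.
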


\begin{remark}
According to the choice of parameters suggested by \cite{lei2021generalizationb}, if we choose $\eta=\Theta(T^{-\frac{3}{4}})$ and $T=\Theta( n^2)$ in the non-smooth case (part 1), then the above theorem implies bounds of the order $\widetilde{O}(1/\sqrt{n})$.
In the smooth case (part 2), according to an analysis of the trade-off between optimization and generalization, \citet{lei2020sharper} suggested setting $T=\Theta(n)$ and $\eta=\Theta(1/\sqrt{T})$ to get an SGD iterate with a good generalization performance. With these choices, our bounds in Theorem \ref{cor_sgd_smooth}   are of order $\widetilde{O}(1/\sqrt{n})$, which are not improvable in general.  
\end{remark}

\subsection{Stability and Generalization of SGDA \label{app_sgda}}

In this subsection, we discuss SGDA for solving minimax problems in the convex-concave case. We will abuse the notations to apply them to the minimax case.
We receive a model $A(S;\phi):=\left(A_{\mathrm{w}}(S;\phi), A_{\mathrm{v}}(S;\phi)\right) \in \mathcal{W} \times \mathcal{V}$ by applying a learning algorithm $A$ on training set $S$ and measure the performance w.r.t. loss  $\ell: (\bw, \bv) \mapsto  \ell(\bw, \bv ; z,\tilde{z})$.
For any $\phi\in\Phi$, we consider the risk defined as
\[ \min _{\mathbf{w} \in \mathcal{W}} \max _{\mathbf{v} \in \mathcal{V}} R(A_{\mathbf{w}}(S;\phi), A_{\mathbf{v}}(S;\phi)):=\mathbb{E}_{z,\tilde{z} \sim \mathbb{D}}[\ell\left(A_{\mathbf{w}}(S;\phi), A_{\mathbf{v}}(S;\phi); z,\tilde{z} \right)].\]
We consider the following empirical risk as the approximation:
\[  R_{S}(A_{\mathbf{w}}(S;\phi), A_{\mathbf{v}}(S;\phi)) :=\frac{1}{n(n-1)}\sum_{i,j\in[n]:i\neq j} \ell\left(A_{\mathbf{w}}(S;\phi), A_{\mathbf{v}}(S;\phi); z_i,z_j\right).\]
We consider SGDA with a general sampling scheme, where the random index pairs follow from a general distribution



We denote $\bw_1$ and $\bv_1$ the initial points. Let $\nabla_{\bw}\ell$ and $\nabla_{\bv}\ell$ be the gradients w.r.t. $\bw$ and $\bv$ respectively.
Let $\pbb$ be a uniform distribution over $\left([n]\times[n]\right)^T$ and $S$ be a training dataset with $n$ samples. Let $(i_t, j_t)$ from set $\{(i_t, j_t) : i_t, j_t \in [n], i_t \neq j_t\}$ be drawn uniformly at random. At the $t$-th iteration, with step-size sequence $\{\eta_t\}$, SGDA updates the model as follows
\[
\begin{cases} \bw_{t+1}= \bw_{t}-\eta_{t}\nabla_{\bw}\ell(\bw_{t},\bv_{t};z_{i_t},z_{j_t})
,\\\bv_{t+1}= \bv_{t}+\eta_t\nabla_{\bv}\ell(\bw_{t},\bv_{t};z_{i_t},z_{j_t}).\end{cases}
\]

Before giving the results for SGDA, we introduce some assumptions w.r.t. both $\bw$ and $\bv$  \citep{farnia2021train, zhang2021generalization}.
\begin{assumption}
[Lipschitz continuity\label{def:sgda:lip}]
Let $L \geq0$. We say a differentiable function $\ell$ is $L$-Lipschitz, if for any $z, \tilde{z} \in \mathcal{Z}$, $\bw \in \mathcal{W}$, $\bv \in \mathcal{V}$ we have
\[\left\|\nabla_{\bw} \ell(\bw, \bv ; z,\tilde{z})\right\|_{2} \leq L \quad \text{and} \quad \left\|\nabla_{\bv} \ell(\bw, \bv ; z,\tilde{z})\right\|_{2} \leq L.\]
\end{assumption}

\begin{assumption}[Smoothness\label{sgda:smooth}]
Let  $ \alpha>0$. We say a differentiable function $ \ell$ is $\alpha$-smooth if the following inequality holds for any  $ \bw_1$, $\bw_2 \in \mathcal{W}$, $\bv_1$, $\bv_2 \in \mathcal{V}$ and  $z,\tilde{z} \in \mathcal{Z}$
\[\left\|\left(\arraycolsep=1.4pt\def\arraystretch{0.5}\begin{array}{c}
\nabla_{\bw } f(\bw_1, \bv_1; z,\tilde{z})-\nabla_{\bw } f\left(\bw_2, \bv_2; z,\tilde{z}\right) \\
\nabla_{\bv } f(\bw_1, \bv_1; z,\tilde{z})-\nabla_{\mathbf{v}} f\left(\bw_2, \bv_2; z,\tilde{z}\right)
\end{array}\right)\right\|_{2} \leq \alpha \left\|\left(\arraycolsep=1.4pt\def\arraystretch{0.5}\begin{array}{c}
\bw_1-\bw_2 \\
\bv_1-\bv_2
\end{array}\right)\right\|_{2}.\]

\end{assumption}

\begin{assumption}[Convexity-Concavity\label{defn_strongly_convex_sgda}]We say $\ell$ is concave if $-\ell$ is convex. We say $\ell$  is   convex-concave if $\ell(\cdot, \mathbf{v})$ is convex for every $\mathbf{v} \in \mathcal{V}$ and $\ell( \mathbf{w},\cdot)$ is concave for every $\mathbf{w} \in \mathcal{W}$.
\end{assumption}

Now we apply Lemma \ref{thm:main} to develop bounds for SGDA in both smooth and non-smooth cases. In the following lemma to be proved in Appendix \ref{prf:sgda}, we give stability bounds for SGDA and show these bounds satisfy Assumption \ref{ass:beta-theta}.

\begin{lemma}[{Sub-exponential stability of pairwise SGDA}\label{lem:sta_sgda}]
Let $\{\bw_t,\bv_t\}, \{\bw_t',\bv_t'\}$ be the sequences produced by SGDA on $S$ and $S'$ respectively with uniform distribution $\pbb$ and fixed step sizes. Let Assumption \ref{def:sgda:lip} and Assumption \ref{defn_strongly_convex_sgda} hold.
\begin{itemize}
    \item[1)]
At the $t$-th iteration, 
Assumption \ref{ass:beta-theta} holds with \[\aaa = 2\sqrt{2e}L^2\eta (\sqrt{t}+ 2t/n) \quad \text{and} \quad \bb= 4\sqrt{2e}L^2\eta (1+\sqrt{ 2t/n}).\]
\item[2)] In addition, we assume the Assumption \ref{sgda:smooth} holds. At t-th iteration, Assumption \ref{ass:beta-theta} holds with \[\aaa = 4\sqrt{ e} L^2\eta \exp(\frac{1}{2}\alpha^2 t\eta^2)(1+2t/n)   \text{ and }   \bb= 8\sqrt{ e} L^2\eta \exp(\frac{1}{2}\alpha^2 t\eta^2)(1+\sqrt{2t/n}). \]\end{itemize} \end{lemma}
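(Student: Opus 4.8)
\textbf{Proof plan for Lemma~\ref{lem:sta_sgda}.}

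The plan is to track the divergence between the two parameter trajectories $(\bw_t,\bv_t)$ and $(\bw_t',\bv_t')$ produced on neighboring datasets $S$ and $S'$, using the standard expansiveness/boundedness decomposition for SGD-type recursions, adapted to the joint $(\bw,\bv)$ update and to the pairwise sampling. Let me write $\delta_t := \sqrt{\|\bw_t-\bw_t'\|_2^2 + \|\bv_t-\bv_t'\|_2^2}$ for the coupled distance. At iteration $t$ the algorithm draws $\phi_t=(i_t,j_t)$ uniformly, and there are two cases: with probability $1 - p_t$ (where $p_t = \mathbb{P}(\{i_t,j_t\}\cap\{k\}\neq\emptyset) \le 2/n$, since each of the two indices hits the changed example $k$ with probability $\le 1/n$) the sampled pair avoids the differing example $k$, so both trajectories apply the gradient of the \emph{same} loss; with the remaining probability the pair involves $z_k$ vs. $z_k'$. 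In the first (common-pair) case I bound the one-step change using the non-expansiveness of the convex-concave SGDA map: under Assumption~\ref{defn_strongly_convex_sgda} (convexity-concavity) and Assumption~\ref{sgda:smooth} (smoothness) with $\eta\le 2/\alpha$, the map $(\bw,\bv)\mapsto(\bw-\eta\nabla_\bw\ell,\bv+\eta\nabla_\bv\ell)$ is $1$-Lipschitz (this is the classical SGDA expansiveness fact; in the non-smooth part~1 one instead uses $(1+\eta^2\alpha^2)^{1/2}$-type bounds or, lacking smoothness entirely, the crude $\delta_{t+1}\le\delta_t + 2\eta L$ together with a bound on the number of "hit" steps — more on that below). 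In the second (differing-pair) case I use Assumption~\ref{def:sgda:lip} ($L$-Lipschitz gradients in both blocks) to bound the extra perturbation by $2\eta L$ in the coupled norm.

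Concretely, the recursion for the expected/high-probability distance reads, schematically, $\delta_{t+1} \le \delta_t \cdot (\text{expansion factor}) + 2\eta L \cdot \mathbbm{1}[\text{pair hits } k]$, so that after $t$ steps $\delta_{t+1}$ is (in the smooth convex-concave case) bounded by $2\eta L$ times the number $N_t$ of iterations in which the sampled pair touched the differing index, times the cumulative expansion, and in the non-smooth case by $2\eta L N_t$ plus the free drift. Then $\beta_\phi \le \sqrt{2} L\,\delta_{T}$ (the $\sqrt 2$ coming from applying the joint $L$-Lipschitzness of $\ell$ in $(\bw,\bv)$ to convert parameter distance into loss distance — this is where the prefactors like $2\sqrt{2e}L^2\eta$ originate), so it suffices to produce a sub-exponential tail bound for $N_t = \sum_{s=1}^{t}\mathbbm{1}[\phi_s\text{ hits }k]$. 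Since under the \emph{prior} $\pbb$ the $\phi_s$ are i.i.d.\ uniform, $N_t$ is a sum of $t$ independent Bernoulli$(p_s)$ variables with $p_s\le 2/n$, so $\mathbb{E}[N_t]\le 2t/n$ and a Bernstein/Chernoff bound gives, for $\delta\in(0,1/n]$, with probability $\ge 1-\delta$ something like $N_t \lesssim t/n + \sqrt{(t/n)\log(1/\delta)} + \log(1/\delta) \lesssim (\sqrt{t}+t/n)\cdot(\text{const}) + \sqrt{t/n}\log(1/\delta)$ after using $\sqrt{ab}\le \tfrac12(a+b)$; matching this against the required form $\beta_\phi \le \aaa + \bb\log(1/\delta)$ yields exactly the stated $\aaa = 2\sqrt{2e}L^2\eta(\sqrt t + 2t/n)$ and $\bb = 4\sqrt{2e}L^2\eta(1+\sqrt{2t/n})$ in part~1, and the extra factor $\exp(\tfrac12\alpha^2 t\eta^2)$ in part~2 appears when the expansion factor per step is taken as $(1+\alpha^2\eta^2)^{1/2}\le \exp(\tfrac12\alpha^2\eta^2)$ and compounded over $t$ steps (note: this means part~2's constants correspond to a regime where the $1$-Lipschitz bound is replaced by this looser but simpler product bound, consistent with how the SGD lemma~\ref{sta_sgd} part~2 was stated without the exponential — the difference being the coupled nature of the minimax map).

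The main obstacle I anticipate is twofold. First, establishing the correct one-step expansion factor for the SGDA map in the convex-concave case: unlike gradient descent on a convex function, the SGDA vector field is only \emph{monotone}, not a gradient, so non-expansiveness requires the co-coercivity argument for monotone operators (or, more elementarily, expanding $\|\bw_{t+1}-\bw_{t+1}'\|^2 + \|\bv_{t+1}-\bv_{t+1}'\|^2$ and using convexity in $\bw$ together with concavity in $\bv$ to kill the cross terms, which needs $\eta\le 2/\alpha$ under smoothness); getting this right is what forces the $\exp(\tfrac12\alpha^2t\eta^2)$ correction in part~2 and is the source of the difference from the pointwise-SGDA stability analyses of \citet{lei2021stability,farnia2021train}. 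Second, the passage from "$\phi$ fixed, $\beta_\phi$-uniform stability" to the sub-exponential tail requires that the randomness over which $N_t$ concentrates is exactly the prior randomness $\phi\sim\pbb$ — which is fine here since Assumption~\ref{ass:beta-theta} is stated w.r.t.\ $\pbb$ — but one must be careful that the stability constant $\beta_\phi$ is a deterministic function of $\phi$ (given $S,S'$) and that the "hit" events are measurable w.r.t.\ $\phi$ alone; the pairwise structure ($i_t\neq j_t$, two chances to hit $k$) only changes $p_s$ from $1/n$ to $\le 2/n$, so the $U$-statistic decomposition~\eqref{eq:u-sta_decomp} is not needed for the \emph{stability} step (it enters only in Lemma~\ref{thm:main}), keeping this proof purely at the level of trajectory perturbation plus a Bernstein bound.
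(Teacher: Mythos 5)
Your high-level strategy matches the paper's: split each iteration into the common-pair and differing-pair cases, recurse, then apply a Chernoff bound to the hit count $N_t=\sum_{s\le t}\ibb[\phi_s\text{ hits }k]$ with a union bound over the replaced index $k$; you are also right that the pairwise structure only changes the per-step hit probability to $\le 2/n$ and that the $U$-statistic decomposition plays no role at this stage. However, the one-step bound you propose in the common-pair case contains a genuine error.

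You assert that under convexity-concavity, smoothness, and $\eta\le 2/\alpha$ the coupled SGDA map $(\bw,\bv)\mapsto(\bw-\eta\nabla_\bw\ell,\,\bv+\eta\nabla_\bv\ell)$ is non-expansive, calling this a ``classical SGDA expansiveness fact'' and framing the $(1+\alpha^2\eta^2)^{1/2}$ estimate as a looser alternative. This is false. The joint field $F(\bw,\bv)=(\nabla_\bw\ell,-\nabla_\bv\ell)$ is monotone and $\alpha$-Lipschitz, but it is not the gradient of a convex function, so it enjoys no co-coercivity inequality. Co-coercivity (Baillon--Haddad) is exactly what makes the gradient-descent map non-expansive when $\eta\le 2/\alpha$; without it, expanding $\|z-\eta F(z)-(z'-\eta F(z'))\|_2^2$ lets monotonicity drop only the cross term, leaving the strictly positive $\eta^2\|F(z)-F(z')\|_2^2\le\alpha^2\eta^2\|z-z'\|_2^2$. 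The bilinear game $\ell(\bw,\bv)=\bw\bv$ on $\rbb\times\rbb$ is a concrete counterexample: its GDA map has both singular values equal to $\sqrt{1+\eta^2}>1$ for every $\eta>0$, however small. The paper's Lemma~\ref{lem:sgda}~2) is therefore the sharp one-step estimate, and the $\exp(\tfrac12\alpha^2 t\eta^2)$ in part~2 is unavoidable in this line of argument, not a simplification of a $1$-Lipschitz bound.

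For part~1, the ``crude $\delta_{t+1}\le\delta_t+2\eta L$'' loses the $\sqrt t$ term inside $\aaa$. In the non-smooth convex-concave case even the non-hit steps increase the coupled distance, and the right one-step estimate (Lemma~\ref{lem:sgda}~1)) is on the \emph{squared} distance, $\delta_{t+1}^2\le\delta_t^2+8L^2\eta^2$; accumulating this gives $\delta_t^2\lesssim L^2\eta^2 t$ and hence a $\sqrt t$ drift, whereas your additive bound on the unsquared distance would give linear-in-$t$ drift. The paper accordingly tracks $\delta_t^2$ throughout, combines the two cases as $\delta_{t+1}^2\le(1+p\,\ibb[\phi_t\text{ hits }k])\delta_t^2+8L^2\eta^2(1+\ibb[\phi_t\text{ hits }k]/p)$, unrolls to $8L^2\eta^2(1+p)^{N_t}(t+N_t/p)$, then sets $p=1/N_t$ to obtain $\delta_{t+1}\le 2\sqrt{2e}L\eta(\sqrt t+N_t)$; only after this squared-distance bookkeeping does the Chernoff step on $N_t$ produce the stated $\aaa$ and $\bb$, and the analogous chain with the extra $(1+\alpha^2\eta^2)$ factor per step produces part~2.
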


We combine the above lemma with Lemma \ref{thm:main} to obtain bounds for SGDA with a general sampling distribution.

\begin{theorem}[{Generalization bounds for pairwise SGDA}\label{cor_sgda_smooth}]
Assume $\ell$ is $M$-bounded and Assumptions \ref{def:sgda:lip}, \ref{defn_strongly_convex_sgda} hold  w.r.t $\ell$. For the uniform distribution $\pbb$ and $\forall \delta \!\in \!(0, 1)$, with probability at least $1-\delta$ over draws of $S$, for all posterior sampling distribution $\qbb$ on $\left([n]\times[n]\right)^T$, we have the following results for SGDA with fixed step sizes.

\begin{itemize}
    \item[1)] For SGDA with $T$ iterations, we have
 \[
\ebb_{\phi\sim\qbb}\left[G(S,\phi)\right]\lesssim \left(\! \rm{KL}(\qbb\|\pbb)+\log\frac{1}{\delta}\right)\max\left\{L^2\eta (\sqrt{T}+T/n )\log^2 n,\frac{M}{\sqrt{n}}\right\}.
\]
\item[2)]In addition, if  the Assumption \ref{sgda:smooth} holds, we have
\begin{multline*}  \ebb_{\phi\sim\qbb}\big[G(S,\phi)\big]\lesssim
  \Big(\rm{KL}(\qbb\|\pbb)+\log(1/\delta)\Big)\\\max\Big\{ L^2\eta \exp(\alpha^2 t\eta^2)\left( \frac{T}{n} + 1 + \sqrt{\frac{T}{n}}
 \right)\log^2 n,\frac{M}{\sqrt{n}}\Big\}.
\end{multline*}
\end{itemize}
\end{theorem}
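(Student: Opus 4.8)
The plan is to derive Theorem \ref{cor_sgda_smooth} by simply substituting the sub-exponential stability parameters from Lemma \ref{lem:sta_sgda} into the general bound of Lemma \ref{thm:main}, and then simplifying the resulting expressions. Recall that Lemma \ref{thm:main} says that for a sub-exponentially $\beta_\phi$-stable algorithm with constants $\aaa, \bb$, we have with probability at least $1-\delta$, uniformly over all posteriors $\qbb$,
\[
\ebb_{\phi\sim\qbb}[G(S,\phi)] \lesssim \left(\rm{KL}(\qbb\|\pbb)+\log\tfrac1\delta\right)\max\left\{\aaa\log n + \bb\log^2 n,\ \tfrac{M}{\sqrt n}\right\}.
\]
So the entire content of the theorem is bookkeeping: plug in the $(\aaa,\bb)$ pairs from Lemma \ref{lem:sta_sgda} evaluated at the final iterate $t=T$, and absorb constants and lower-order logarithmic factors.

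For part 1) (the non-smooth convex-concave case), Lemma \ref{lem:sta_sgda} gives $\aaa = 2\sqrt{2e}L^2\eta(\sqrt T + 2T/n)$ and $\bb = 4\sqrt{2e}L^2\eta(1+\sqrt{2T/n})$. Then $\aaa\log n + \bb\log^2 n \lesssim L^2\eta(\sqrt T + T/n)\log n + L^2\eta(1+\sqrt{T/n})\log^2 n$. I would observe that $\sqrt{T/n}\le \tfrac12(1+T/n)$ and that $\log n \le \log^2 n$, so the whole thing is $\lesssim L^2\eta(\sqrt T + T/n + 1)\log^2 n$. The stated bound in part 1) writes $L^2\eta(\sqrt T + T/n)\log^2 n$; the additive $1$ is harmless here because either it is dominated by $\sqrt T$ (when $T\ge 1$, which always holds) so $\sqrt T + T/n + 1 \lesssim \sqrt T + T/n$. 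Hence the $\max$ term is $\lesssim \max\{L^2\eta(\sqrt T + T/n)\log^2 n,\ M/\sqrt n\}$, and multiplying by $(\rm{KL}(\qbb\|\pbb)+\log(1/\delta))$ gives the claim. One small point: Lemma \ref{thm:main} is stated for $\delta\in(0,1/n)$ whereas the theorem claims $\delta\in(0,1)$; I would note that for $\delta\in[1/n,1)$ the bound with $\delta=1/n$ already implies (up to the $\lesssim$ constant, since $\log n$ is absorbed) the claim for the larger $\delta$, or alternatively cite the same device used in the SGD theorem.

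For part 2) (adding smoothness), Lemma \ref{lem:sta_sgda} gives $\aaa = 4\sqrt e L^2\eta\exp(\tfrac12\alpha^2 T\eta^2)(1+2T/n)$ and $\bb = 8\sqrt e L^2\eta\exp(\tfrac12\alpha^2 T\eta^2)(1+\sqrt{2T/n})$. Then $\aaa\log n + \bb\log^2 n \lesssim L^2\eta\exp(\tfrac12\alpha^2 T\eta^2)\big[(1+T/n)\log n + (1+\sqrt{T/n})\log^2 n\big] \lesssim L^2\eta\exp(\tfrac12\alpha^2 T\eta^2)(1+T/n+\sqrt{T/n})\log^2 n$, using again $\log n\le\log^2 n$. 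Since $\exp(\tfrac12\alpha^2 T\eta^2)\le \exp(\alpha^2 T\eta^2)$, this matches the stated $L^2\eta\exp(\alpha^2 t\eta^2)(T/n+1+\sqrt{T/n})\log^2 n$ (with $t=T$; the use of $t$ in the theorem statement is presumably a typo for $T$). Plugging into Lemma \ref{thm:main} and taking the $\max$ with $M/\sqrt n$ then multiplying by the KL-plus-log factor gives part 2).

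I do not expect any genuine obstacle: the proof is a direct specialization of two already-established results. The only mild care needed is (i) making sure the lower-order terms ($+1$, and $\log n$ versus $\log^2 n$, and $\sqrt{T/n}$ versus $1+T/n$) are legitimately absorbed into the $\lesssim$ and the $\max$, and (ii) reconciling the range of $\delta$ between Lemma \ref{thm:main} and the theorem statement. Neither is substantive. The real mathematical work — establishing Lemma \ref{thm:main} via the $U$-statistic moment bound of \citet{lei2020sharper} plugged into a PAC-Bayes argument, and establishing Lemma \ref{lem:sta_sgda} via a non-expansiveness / recursive stability argument for the SGDA update — is done elsewhere (Appendices \ref{app:main} and \ref{prf:sgda}), so this theorem's proof is short.
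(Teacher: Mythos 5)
Your proposal is correct and coincides with the paper's own proof, which is exactly the one-line specialization you describe: apply Lemma~\ref{thm:main} with the $(\aaa,\bb)$ constants from Lemma~\ref{lem:sta_sgda} at $t=T$. The extra bookkeeping you supply (absorbing $1+\sqrt{T/n}$ into $\sqrt T+T/n$, replacing $\log n$ by $\log^2 n$, noting the $t$ vs.\ $T$ typo in the exponential, and flagging the mismatch between $\delta\in(0,1/n)$ in Lemma~\ref{thm:main} and $\delta\in(0,1)$ in the theorem) is sound and arguably makes the argument more explicit than the paper's.
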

For part 1), if we choose $T=O(n^{2})$ and $\eta=O\left(T^{-3 /4}\right)$, this gives nonvacuous results of the order $\widetilde{O}(1/\sqrt{n})$. For part 2), if we choose $T=O(n)$ and $\eta=O(1 / \sqrt{n})$, this gives the bounds of the order $\widetilde{O}(1/\sqrt{n})$.


%


\section*{Conclusions \label{sec:conclusion}}
We derive stability-based PAC-Bayes bounds for randomized pairwise learning under general sampling, which can be applied to optimization methods, such as SGD and SGDA. We give generalization analysis for these methods that allow non-uniform sampling distributions to be updated during the training process. Future research could investigate other efficient sampling distributions, and PAC-Bayes based optimization algorithms. 



\section*{Appendix: Proof \label{app:main}}

We follow the ideas in \cite{guedj2021still} and  \cite{zhou2023toward} to prove Lemma~\ref{thm:main}. We first introduce some useful lemmas.
The following lemma shows some results on characterizing sub-Gaussian random variable and sub-exponential random variable.
For $\lambda>0$, let $\mathbb{E}[\exp(\lambda Z)]$ denote the moment-generating function (MGF) of $Z$. We denote $\mathbb{I}[\cdot]$ the indicator function.
\begin{lemma}
\label{lem:mgf-p}(\citealt{vershynin2018high}) Let $X$ be a random variable with $\mathbb{E}[X] =0$. We have the following equivalences for $X$:  
\begin{itemize}
    \item $
\|X\|_p = (\mathbb{E}|X|^p)^{1/p} \leq  \sqrt{p}$,
for all $p \geq 1$.
    \item There exists $K_1\geq 0$ such that, for all $\lambda \in \mathbb{R}$, $ \mathbb{E}[\exp(\lambda X)] \leq \exp(K_1 \lambda ^2).$
\end{itemize}
We have the following following equivalences for $X$:
\begin{itemize}
    \item  $\label{eq_exp_mgf1}
\|X\|_p = (\mathbb{E}|X|^p)^{1/p} \leq  p$,
for all $p \geq 1$.
    \item  For all $\lambda$ such that $|\lambda| \leq \frac{1}{2e} $, $\mathbb{E}[\exp(\lambda X) ]\leq \exp(2e^2\lambda^2).$
\end{itemize}
\end{lemma}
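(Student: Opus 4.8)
The plan is to derive each of the two stated equivalences by the two classical routes: from a moment bound to a bound on the moment generating function via a power-series expansion, and from an MGF bound back to a moment bound via a Chernoff tail estimate followed by the layer-cake (``integration of the tail'') identity. This is exactly the content of \citet{vershynin2018high}, Propositions~2.5.2 and~2.7.1, and I would track constants so that the numerical values displayed in the statement ($K_1$, $2e^2$, and the range $|\lambda|\le\tfrac{1}{2e}$) appear, with the usual understanding that the equivalences hold up to universal multiplicative constants absorbed into the sub-Gaussian, respectively sub-exponential, norm.

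First, the direction ``moment bound $\Rightarrow$ MGF bound.'' Expand $\mathbb{E}[\exp(\lambda X)] = 1 + \lambda\mathbb{E}[X] + \sum_{k\ge 2}\lambda^k\mathbb{E}[X^k]/k!$; the linear term vanishes because $\mathbb{E}[X]=0$, which is where the centering hypothesis enters. For the sub-Gaussian case, $\|X\|_k\le\sqrt k$ gives $|\mathbb{E}[X^k]|\le \mathbb{E}|X|^k\le k^{k/2}$, and Stirling in the form $k!\ge (k/e)^k$ yields $|\lambda^k\mathbb{E}[X^k]/k!|\le (e|\lambda|/\sqrt k)^k$; summing this series for $|\lambda|$ below an absolute threshold gives $\mathbb{E}[\exp(\lambda X)]\le 1+C\lambda^2\le\exp(C\lambda^2)$, and for $|\lambda|$ above the threshold one uses Young's inequality $\lambda x\le \tfrac12(K_1\lambda^2 + x^2/K_1)$ together with the companion estimate $\mathbb{E}[\exp(X^2/(CK_1))]\le 2$ (itself an equivalent form of sub-Gaussianity, obtained from the moment bound by expanding the exponential in a series) to cover all $\lambda\in\mathbb{R}$. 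For the sub-exponential case the same expansion with the heavier growth $\mathbb{E}|X|^k\le k^k$ gives $|\lambda^k\mathbb{E}[X^k]/k!|\le (e|\lambda|)^k$, a geometric series in $e|\lambda|$ that converges exactly when $|\lambda|<1/e$; restricting to $|\lambda|\le\tfrac{1}{2e}$ makes the ratio at most $\tfrac12$, so $\mathbb{E}[\exp(\lambda X)]\le 1+\tfrac{(e|\lambda|)^2}{1-e|\lambda|}\le 1+2e^2\lambda^2\le\exp(2e^2\lambda^2)$, which is the stated bound with the stated constant and range.

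Next, the direction ``MGF bound $\Rightarrow$ moment bound.'' From $\mathbb{E}[\exp(\lambda X)]\le\exp(K_1\lambda^2)$ the Chernoff method --- apply Markov to $\exp(\lambda X)$ and optimize $\lambda$ --- gives the sub-Gaussian tail $\mathbb{P}(|X|\ge t)\le 2\exp(-t^2/(4K_1))$; then the layer-cake identity $\mathbb{E}|X|^p=\int_0^\infty p t^{p-1}\mathbb{P}(|X|\ge t)\,dt$ and the substitution $u=t^2/(4K_1)$ turn the right-hand side into a Gamma integral, giving $\mathbb{E}|X|^p\le (CK_1)^{p/2}\,\Gamma(p/2+1)\lesssim (CK_1)^{p/2} p^{p/2}$, i.e.\ $\|X\|_p\lesssim\sqrt p$. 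In the sub-exponential case the one-sided control on $|\lambda|\le\tfrac{1}{2e}$ produces, by the same optimization with a harmless two-regime split according to whether the optimal $\lambda$ falls inside or outside $[-\tfrac{1}{2e},\tfrac{1}{2e}]$, the tail $\mathbb{P}(|X|\ge t)\le 2\exp(-c\min\{t^2,t\})$, and the same layer-cake computation yields $\mathbb{E}|X|^p\lesssim C^p p^p$, i.e.\ $\|X\|_p\lesssim p$.

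The only delicate point, and thus the ``main obstacle,'' is the bookkeeping of constants: (i) promoting the sub-Gaussian MGF bound from the small-$\lambda$ regime, where the series argument is immediate, to all $\lambda$, which genuinely requires $\mathbb{E}[X]=0$ plus the auxiliary $\exp(X^2/\cdot)$-integrability step; and (ii) checking that the truncated geometric series in the sub-exponential case is dominated by $\exp(2e^2\lambda^2)$ on $|\lambda|\le\tfrac{1}{2e}$ rather than merely by $\exp(C\lambda^2)$ for an unspecified $C$, which is what forces the particular choice of range. Since the paper cites \citet{vershynin2018high} for this lemma, I would present the forward and backward implications at the level of detail above and defer the remaining routine estimates to that reference.
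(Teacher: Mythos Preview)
Your proposal is correct and follows the standard argument from \citet{vershynin2018high}, which is exactly what the paper defers to: the paper does not supply its own proof of this lemma but simply cites Vershynin's Propositions~2.5.2 and~2.7.1. Your sketch of the two directions (series expansion of the MGF using $\mathbb{E}[X]=0$ and Stirling for ``moments $\Rightarrow$ MGF,'' Chernoff plus layer-cake for ``MGF $\Rightarrow$ moments'') is precisely that argument, and your tracking of the constants $2e^2$ and the range $|\lambda|\le \tfrac{1}{2e}$ in the sub-exponential case is accurate.
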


The following lemma gives a change of measure of the KL divergence.

\begin{lemma}[Lemma 4.10 in \citet{van2014probability}\label{lem:variation}] For any measurable function $g:\Phi\mapsto\rbb$ we have
  \[
  \log\ebb_{\phi\sim\pbb}[\exp(g(\phi))]=\sup_{\qbb}\left[\ebb_{\phi\sim\qbb}[g(\phi)]-\rm{KL}(\qbb\|\pbb)\right].
  \]
\end{lemma}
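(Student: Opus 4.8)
The plan is to prove the Donsker--Varadhan variational formula of Lemma~\ref{lem:variation} by establishing the two matching halves. For \emph{every} probability measure $\qbb$ on $\Phi$ I will show
\[
\ebb_{\phi\sim\qbb}[g(\phi)] - \mathrm{KL}(\qbb\|\pbb) \;\le\; \log\ebb_{\phi\sim\pbb}[\exp(g(\phi))],
\]
so that the supremum on the right side of the claimed identity is at most $\log\ebb_{\pbb}[\exp(g)]$; and then (assuming this quantity is finite) I will exhibit a measure attaining equality, namely the Gibbs measure $\qbb^{\ast}$ given by $d\qbb^{\ast}/d\pbb = \exp(g)/Z$ with $Z := \ebb_{\phi\sim\pbb}[\exp(g(\phi))]$. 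Combining the two yields $\log\ebb_{\pbb}[\exp(g)] = \sup_{\qbb}\{\ebb_{\qbb}[g]-\mathrm{KL}(\qbb\|\pbb)\}$, with the supremum attained at $\qbb^{\ast}$.

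For the inequality, first note that if $\qbb$ is not absolutely continuous w.r.t.\ $\pbb$ then $\mathrm{KL}(\qbb\|\pbb)=+\infty$, so the left side is $-\infty$ (after the standard restriction to $\qbb$ with $\ebb_{\qbb}[g^{+}]<\infty$, which does not change the supremum) and nothing is to prove. If $\qbb\ll\pbb$, write $p := d\qbb/d\pbb$; then on $\{p>0\}$,
\[
\ebb_{\phi\sim\qbb}[g(\phi)] - \mathrm{KL}(\qbb\|\pbb) = \ebb_{\phi\sim\qbb}\!\left[\log\frac{\exp(g(\phi))}{p(\phi)}\right].
\]
Since $\log$ is concave, Jensen's inequality bounds this by $\log \ebb_{\phi\sim\qbb}\!\big[\exp(g(\phi))/p(\phi)\big] = \log\int_{\{p>0\}}\exp(g)\,d\pbb \le \log Z$, which is exactly the asserted bound.

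For achievability, assume $Z<\infty$ (if $Z=+\infty$ both sides are $+\infty$, seen by applying the just-proved inequality to the measures $\qbb_{n}\propto(\exp(g)\wedge n)\,\pbb$ and letting $n\to\infty$). Then $\qbb^{\ast}$ is a genuine probability measure, mutually absolutely continuous with $\pbb$, and $\log\frac{d\qbb^{\ast}}{d\pbb} = g - \log Z$ $\pbb$-a.s.; integrating against $\qbb^{\ast}$ gives $\mathrm{KL}(\qbb^{\ast}\|\pbb) = \ebb_{\phi\sim\qbb^{\ast}}[g(\phi)] - \log Z$, i.e.\ $\ebb_{\qbb^{\ast}}[g]-\mathrm{KL}(\qbb^{\ast}\|\pbb) = \log Z = \log\ebb_{\pbb}[\exp(g)]$. (Equivalently, since $\qbb^{\ast}\sim\pbb$ the Radon--Nikodym derivatives compose, and integrating $\log\frac{d\qbb}{d\pbb} = \log\frac{d\qbb}{d\qbb^{\ast}} + g - \log Z$ against any $\qbb\ll\pbb$ gives the chain-rule identity $\mathrm{KL}(\qbb\|\pbb) = \mathrm{KL}(\qbb\|\qbb^{\ast}) + \log Z - \ebb_{\qbb}[g]$, which re-derives the inequality from $\mathrm{KL}(\qbb\|\qbb^{\ast})\ge 0$ and shows the supremum is attained precisely at $\qbb=\qbb^{\ast}$.) The only delicate points are the integrability bookkeeping --- ruling out an undefined $\infty-\infty$ in $\ebb_{\qbb}[g]$ (handled by restricting to $\qbb$ with $\ebb_{\qbb}[g^{+}]<\infty$, the excluded measures contributing $-\infty$) and the degenerate case $Z=+\infty$ (handled by truncation) --- plus verifying that the Radon--Nikodym derivatives factor, which is immediate from $\qbb^{\ast}\sim\pbb$; none of these is a serious obstacle, the substance being the single Jensen step and the explicit Gibbs maximizer.
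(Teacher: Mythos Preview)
Your proof is correct and is the standard Donsker--Varadhan argument: the Jensen step for the upper bound and the explicit Gibbs maximizer $d\qbb^{\ast}/d\pbb=\exp(g)/Z$ for attainment. Note, however, that the paper does not actually prove this lemma; it is quoted verbatim as Lemma~4.10 of \citet{van2014probability} and used as a black box in the proof of Lemma~\ref{thm:main}, so there is no ``paper's own proof'' to compare against---your argument simply supplies what the cited reference contains.
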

We denote the $L_p$-norm of a random variable $Z$ as $\|Z\|_p:=\big(\ebb[|Z|^p]\big)^{\frac{1}{p}},p\geq1$ and denote $S\backslash\{z_i\}$ the set $\{z_1,\ldots,z_{i-1},z_{i+1},\ldots,z_n\}$, and abbreviate $\sum_{i,j \in [n]:i\neq j}$ as $\sum_{i\neq j}$. For $z_k'\in \zcal$, $S^{(k)}$ is the set derived by replacing the $k$-th element of $S$ with $z_k'$.

The following lemma gives moment bounds for a summation of weakly dependent and mean-zero random functions with bounded increments under a small change.

\begin{lemma}[Theorem 1 in \citealt{lei2020sharper}\label{lem:feldman}]
  Let $S=\{z_1,\ldots,z_n\}$ be a set of independent random variables that each takes values in $\zcal$ and $M>0$. Let $g_{i,j},\forall i,j \in [n], i \neq j $ be some functions that can be decomposed as $g_{i,j}=g_j^{(i)}+\Tilde{g}_i^{(j)}$.
  Suppose for $g_j^{(i)}:\zcal^n\mapsto\rbb$ and $\Tilde{g}_i^{(j)}:\zcal^n\mapsto\rbb$, the following hold for any $i,j \in [n], i \neq j $
  \begin{itemize}
    \item $\big|\ebb_{S\backslash\{z_j\}}[g_j^{(i)}(S)]\big|\leq 2M, \quad\textbf{and}\quad \big|\ebb_{S\backslash\{z_i\}}[\Tilde{g}_i^{(j)}(S)]\big|\leq 2M$ almost surely (a.s.),
    \item $\ebb_{z_j}\big[g_j^{(i)}(S)\big]=0, \quad\textbf{and}\quad  \ebb_{z_i}\big[\Tilde{g}_i^{(j)}(S)\big]=0$ a.s.,
    \item for any $j\in[n]$ with $i\neq j, k\neq j$ we have
    $\big|g_j^{(i)}(S)-g_j^{(i)}(S^{(k)})\big|\leq 2\beta$ a.s., \quad\textbf{and}\quad  for any $i\in[n]$ with $j\neq i$ and $k\neq i$, we have
    $\big|\Tilde{g}_i^{(j)}(S)-\Tilde{g}_i^{(j)}(S^{(k)})\big|\leq 2  \beta$ a.s.
  \end{itemize}

  Then, we can decompose $\sum_{i\neq j}g_j^{(i)}(S)$ and $\sum_{i\neq j}\Tilde{g}_i^{(j)}(S)$ as follows
  \[
  \sum_{i\neq j}g_j^{(i)}(S)=X_1+X_2, \quad\textbf{and}\quad \sum_{i\neq j}^{}\Tilde{g}_i^{(j)}(S)=\Tilde{X}_1+\Tilde{X}_2
  \]
  where $X_1$, $X_2$, $\Tilde{X}_1$, $\Tilde{X}_2$ are four random variables satisfying $\ebb[X_1]=\ebb[X_2]=\ebb[\Tilde{X}_1]=\ebb[\Tilde{X}_2] =0$. Furthermore for any $p\geq1$
  \[
  \|X_1\|_p\leq 8M\sqrt{p(n-1)}n  \mkern5mu\textbf{and}\mkern5mu \|\Tilde{X}_1\|_p \leq 8M\sqrt{p(n-1)}n
  \]
  and for any $p\geq2$
  \[
  \|X_2\|_p\leq 24\sqrt{2}p(n-1)n \beta\lceil\log_2 (n-1)\rceil  \mkern5mu\textbf{and}\mkern5mu \|\Tilde{X}_2\|_p \leq 24\sqrt{2}p(n-1)n \beta\lceil\log_2 (n-1)\rceil.
  \]
\end{lemma}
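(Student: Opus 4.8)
The plan is to treat the two sums $\sum_{i\ne j}g_j^{(i)}(S)$ and $\sum_{i\ne j}\tilde g_i^{(j)}(S)$ by one and the same argument, so I describe only the first. For each ordered pair I would split $g_j^{(i)}$ into its projection onto $z_j$ and a centered remainder: put $a_j^{(i)}(z_j):=\ebb_{S\setminus\{z_j\}}[g_j^{(i)}(S)]$ and $b_j^{(i)}(S):=g_j^{(i)}(S)-a_j^{(i)}(z_j)$. The three hypotheses then transfer cleanly. By the first two bullets, $|a_j^{(i)}|\le 2M$ and $\ebb_{z_j}[a_j^{(i)}]=\ebb_S[g_j^{(i)}]=0$. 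The remainder satisfies $\ebb_{S\setminus\{z_j\}}[b_j^{(i)}]=0$ (hence $\ebb[b_j^{(i)}]=0$), and since $a_j^{(i)}$ depends on $z_j$ only, the third bullet passes to $b_j^{(i)}$: $|b_j^{(i)}(S)-b_j^{(i)}(S^{(k)})|\le 2\beta$ for every $k\ne j$. Setting $X_1:=\sum_{i\ne j}a_j^{(i)}(z_j)$ and $X_2:=\sum_{i\ne j}b_j^{(i)}(S)$ therefore gives the asserted decomposition of $\sum_{i\ne j}g_j^{(i)}(S)$ into two centered pieces, and the whole task reduces to bounding $\|X_1\|_p$ and $\|X_2\|_p$.

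The estimate for $X_1$ is routine. Grouping by $j$, $X_1=\sum_{j=1}^n A_j(z_j)$ with $A_j(z_j):=\sum_{i:i\ne j}a_j^{(i)}(z_j)$, so $X_1$ is a sum of $n$ \emph{independent}, mean-zero random variables with $|A_j|\le 2M(n-1)$. A standard Marcinkiewicz--Zygmund/Rosenthal-type moment inequality for sums of independent bounded summands (or a Bernstein bound for $p\lesssim n$ together with the trivial bound $|X_1|\le 2M n(n-1)$ for larger $p$) then yields $\|X_1\|_p\lesssim M\,n\sqrt{p(n-1)}$ for all $p\ge1$, which is the claimed inequality; tracking constants gives the $8$.

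The real work is the bound on $X_2=\sum_{i\ne j}b_j^{(i)}(S)$, a sum over ordered pairs of conditionally centered functions that are only $2\beta$-Lipschitz in each coordinate rather than independent. I would proceed in two stages. First, pass to a ``sums-of-i.i.d.\ blocks'' representation in the spirit of \eqref{eq:u-sta_decomp}: for any function on ordered pairs, $\sum_{i\ne j}\psi_{ij}(S)=n(n-1)\,\ebb_\sigma\big[\tfrac{1}{\lfloor n/2\rfloor}\sum_{\ell=1}^{\lfloor n/2\rfloor}\psi_{\sigma(\ell),\,\sigma(\lfloor n/2\rfloor+\ell)}(S)\big]$, the average being over uniform permutations $\sigma$ of $[n]$, so that inside a block the index pairs are disjoint; by Jensen $\|X_2\|_p$ is controlled by the $L_p$-norm of a single such block. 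Second --- and this is where the logarithm appears --- within a block the summands still depend on \emph{all} of $S$, so I would run a recursive ``moment peeling'' as in the pointwise analysis of uniformly stable algorithms (Bousquet--Klochkov--Zhivotovskiy), adapted to the pairwise structure: split the block of $\lfloor n/2\rfloor$ terms in half, use the $2\beta$-bounded-difference property and a Bernstein-type moment inequality to replace one half by its conditional expectation at a controlled cost, and recurse on the remaining half of the coordinates. After $\lceil\log_2(n-1)\rceil$ levels only trivial pieces remain, and summing the per-level contributions produces $\|X_2\|_p\lesssim p\,n(n-1)\,\beta\,\lceil\log_2(n-1)\rceil$ for $p\ge 2$; the factor $\lceil\log_2(n-1)\rceil$ is exactly the recursion depth.

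Applying the same construction to $\sum_{i\ne j}\tilde g_i^{(j)}(S)$ gives $\tilde X_1,\tilde X_2$ with identical bounds, which finishes the proof. The main obstacle is the $X_2$ estimate: a naive McDiarmid bound on $X_2$ as a whole, or a term-by-term triangle inequality, overcounts the $\beta$-fluctuations by polynomial factors in $n$, so the near-optimal shape of the bound --- linear in the number of pairs, linear in $p$, and only logarithmic overhead --- forces the permutation representation (to expose the approximate independence) together with the recursive halving (to convert $2\beta$-Lipschitzness into the single $\log n$ factor). Carrying the explicit constants ($24\sqrt2$, and $8$ for $X_1$) through the recursion is the remaining bookkeeping.
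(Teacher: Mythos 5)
The paper does not actually prove Lemma~\ref{lem:feldman}; it is quoted verbatim as Theorem~1 of \citet{lei2020sharper}, which is the pairwise lift of the moment inequality of Bousquet, Klochkov and Zhivotovskiy (BKZ) for uniformly stable algorithms. So I am comparing your sketch to that source, not to anything inside this manuscript. Your opening move is exactly right: the split $g_j^{(i)}=a_j^{(i)}+b_j^{(i)}$ with $a_j^{(i)}(z_j):=\ebb_{S\backslash\{z_j\}}[g_j^{(i)}(S)]$, the identification $X_1=\sum_{i\ne j}a_j^{(i)}$, $X_2=\sum_{i\ne j}b_j^{(i)}$, the verification that the three hypotheses transfer (boundedness, $z_j$-centering, $2\beta$-bounded differences off the special coordinate), and the $X_1$ estimate via the independent, bounded, mean-zero blocks $A_j(z_j)=\sum_{i\ne j}a_j^{(i)}(z_j)$ are all correct. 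Where you diverge is in $X_2$. You interleave a permutation/block representation in the style of \eqref{eq:u-sta_decomp} with a re-derivation of the BKZ halving recursion. This can be made to work (inside a block the special indices $j_\ell$ are distinct, so BKZ applies with $\lfloor n/2\rfloor$ terms, $M'=0$ and $\beta'=2\beta$, and Jensen plus the prefactor $n(n-1)/\lfloor n/2\rfloor$ recovers the right order), but it is an unnecessary detour: the block trick is designed for genuine $U$-statistics $q(z_i,z_j)$ where each summand depends only on its own pair and the block therefore becomes a sum of \emph{independent} terms, which is precisely what fails for $b_j^{(i)}$, as you yourself note. The source proof instead simply groups by the special index: set $H_j:=\sum_{i\ne j}g_j^{(i)}$ and check that $\{H_j\}_{j\in[n]}$ satisfies the pointwise BKZ hypotheses with both parameters inflated by $(n-1)$, namely $\ebb_{z_j}[H_j]=0$, $|\ebb_{S\backslash\{z_j\}}[H_j]|\le 2M(n-1)$, and $|H_j(S)-H_j(S^{(k)})|\le 2\beta(n-1)$ for $k\ne j$; one application of the BKZ moment decomposition to $\sum_j H_j$ then produces both $X_1$ and $X_2$ with the stated constants, the $\lceil\log_2(\cdot)\rceil$ being the recursion depth, and the symmetric grouping by $i$ handles $\sum_{i\ne j}\tilde g_i^{(j)}$. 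Two small fixes for your version: the Jensen step should bound by the \emph{worst} block (the $g_j^{(i)}$ need not be identical functions, so the block $L_p$ norms can differ), and the recursion itself is where all the content lies — ``recurse'' is fine for a plan, but the per-level Bernstein step, the choice of conditioning, and the constants $8$ and $24\sqrt 2$ all live there, so a complete write-up should either reproduce that recursion or, more economically, do the group-by-$j$ reduction and cite the pointwise BKZ theorem directly.
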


\if 0
{\zsj In addition, we also need a multiplicative form inequality change of measure for the KL divergence, given in the following lemma.
\begin{lemma}[Multiplicative change of measure inequality~\citep{begin2016pac}\label{lem:beginpac}]
  Let $X$ be a random variable and let $\psi$ be a measurable function. Then for any $\alpha>1$, and any two distributions $\pbb$ and $\qbb$
  \[
  \log\ebb_{X\sim\qbb}[|\psi(X)|]\leq \frac{1}{\alpha}\log\ebb_{X\sim\pbb}\Big[\Big(\frac{\qbb(X)}{\pbb(X)}\Big)^\alpha\Big]+\frac{\alpha-1}{\alpha}\log\ebb_{X\sim\pbb}\big[|\psi(X)|^{\frac{\alpha}{\alpha-1}}\big].
  \]
\end{lemma}
In particular, if $\psi(X)=\ibb_{[X\in A]}$ for a set $A$, then  Lemma \ref{lem:beginpac} implies that the following inequality for any $\alpha>1$
\[
\log\qbb(A)\leq \frac{1}{\alpha}\log\ebb_{X\sim\pbb}\Big[\Big(\frac{\qbb(X)} {\pbb(X)}\Big)^\alpha\Big]+\frac{\alpha-1}{\alpha}\log\pbb(A),
\]
which further shows
\begin{equation}\label{change-probability}
  \qbb(A)\leq \inf_{\alpha>1}\Big(\ebb_{X\sim\pbb}\Big[\Big(\frac{\qbb(X)}{\pbb(X)}\Big)^\alpha\Big]\Big)^{\frac{1}{\alpha}}\cdot\pbb^{\frac{\alpha-1}{\alpha}}(A).
\end{equation} }
\fi


\begin{proof}[Proof of Lemma \ref{thm:main}]

Based on the Lemma \ref{lem:variation}, if we set $g(\phi) = \lambda h(\phi)$, then
\begin{equation}
\ebb_{ \qbb}[h(\phi )] \leq \frac{1}{\lambda}\left( \log \ebb_\pbb[\exp(\lambda h(\phi ))]  + \rm{KL}(\qbb\|\pbb) \right).\label{eq:va}\end{equation}
To control the deviations of  $\log \ebb_\pbb[\exp(\lambda h(\phi ))]$, we use Markov's inequality. With a probability  $1-\epsilon$, we have
\[
\ebb_\pbb\left[e^{\lambda h(\phi )  }\right] \leq \frac{\ebb_S  \ebb_\pbb \left[e^{\lambda h( \phi ) }\right] }{\epsilon}.\]
Applying the above results to Eq. \eqref{eq:va}, with a probability  $1-\epsilon$, we get
\begin{equation}
  \ebb_{ \qbb}[h(\phi )]\!\leq\!\frac{1}{\lambda}( \log\! \ebb_\pbb\!\left[e^{\lambda h(\phi )  }\right] \!+\! \rm{KL}(\qbb\|\pbb))\leq \frac{1}{\lambda}\big(\log \frac{\ebb_S  \ebb_\pbb \left[e^{\lambda h( \phi ) }\right]}{\epsilon}+ \rm{KL}(\qbb\|\pbb) \big).\label{eq:markov}\end{equation}

We can exchange $\ebb_\pbb$ and  $\ebb_S$ using Fubini's theorem. Next, we will bound the generalization gap w.r.t. $\pbb$.
Let $\delta=1/n$. We denote $\Omega_\delta$ a subset with $\mbox{Pr}(\Omega_\delta)\geq1-\delta$ on which the Assumption \ref{ass:beta-theta} holds and $\Omega_{\delta}^c$ the complement of $\Omega_\delta$.
We first give results for any fixed $\phi \in \Omega_\delta$.  Given $\phi \in \Omega_\delta$, it was shown in~\citet{lei2020sharper}, $\forall i, j \in[n]$,
\[
G( S,\phi )\leq 4\beta_\phi+\frac{1}{n(n-1)}\sum_{i\neq j}g_{i,j}(S),
\]
\[
g_{i, j}(S)=\mathbb{E}_{z_{i}^{\prime}, z_{j}^{\prime}}\left[\mathbb{E}_{Z, \tilde{Z}}\left[\ell\left(A\left(S_{i, j}\right) ; Z, \tilde{Z}\right)\right]-\ell\left(A\left(S_{i, j}\right) ; z_{i}, z_{j}\right)\right].
\]
As shown in~\citet{lei2020sharper}, $g_{i, j}$ satisfies all the conditions in Lemma \ref{lem:feldman} and therefore one can apply Lemma \ref{lem:feldman} to show the existence of four random variables $X_1$, $X_2$, $\Tilde{X}_1$, $\Tilde{X}_2$ such that
$\ebb[X_1]=\ebb[X_2]=\ebb[\Tilde{X}_1]=\ebb[\Tilde{X}_2] =0$
\[
\frac{1}{n(n-1)}\sum_{i\neq j}g_{i,j}(S)=X_1+X_2+\Tilde{X}_1 +\Tilde{X}_2
\]
\begin{gather*}
\text{and}\quad \|X_1\|_p\leq 8\sqrt{ p}M(n-1)^{-\frac{1}{2}},\mkern5mu\forall p\geq1, \quad
\|\Tilde{X}_1\|_p\leq 8\sqrt{ p}M(n-1)^{-\frac{1}{2}},\mkern5mu \forall p\geq1,\\
\|X_2\|_p  \leq 24\sqrt{2}p\beta_\phi\lceil\log_2 (n-1)\rceil,\mkern4mu\forall p\geq2,\mkern4mu
\|\Tilde{X}_2\|_p \leq 24\sqrt{2}p\beta_\phi\lceil\log_2(n-1)\rceil,\mkern4mu\forall p\geq2.
\end{gather*}
By the first part of Lemma \ref{lem:mgf-p} with $X=X_1/8M(n-1)^{-\frac{1}{2}}$ to get 
\begin{equation}\label{mgf-x1}
  \max\{\ebb_S[\exp(\lambda X_1)], \ebb_S[\exp(\lambda \Tilde{X}_1)]\}\leq \exp(64M^2(n-1)^{-1}K_1\lambda^2)
\end{equation}
and by the second part of Lemma \ref{lem:mgf-p} with $X=X_2/24\sqrt{2}\beta_\phi\lceil\log_2(n-1)\rceil$,
\begin{multline}\label{mgf-x2}
\max\{\ebb_S[\exp(\lambda X_2)],\ebb_S[\exp(\lambda \Tilde{X}_2)]\}\leq \exp[2304e^2  \beta_\phi ^2\lceil\log_2(n-1)\rceil^2\lambda^2],\\
  \forall|\lambda|\leq \frac{1}{48e\sqrt{2} \beta_\phi \lceil\log_2(n-1)\rceil}.
\end{multline}
According to Jensen's inequality, we have
\begin{multline*}
\exp(\lambda X_1+\lambda X_2+\lambda \Tilde{X}_1+\lambda \Tilde{X}_2)  =\exp(\lambda X_1)\exp(\lambda X_2)\exp(\lambda \Tilde{X}_1)\exp(\lambda \Tilde{X}_2)\\
\leq \frac{1}{4}(\exp(4\lambda X_1)+\exp(4\lambda X_2) + \exp(4\lambda \Tilde{X}_1) + \exp(4\lambda \Tilde{X}_2)). 
\end{multline*}
This implies
\begin{align*}
  &\ebb_S \exp[\lambda  G( S,\phi ) ] \leq \ebb_S\exp[ \lambda ( 4\beta_\phi+X_1+ X_2 + \Tilde{X}_1+  \Tilde{X}_2 )]\notag \\
  &\leq \exp(4\lambda\beta_\phi)\frac{1}{4}\Big(\ebb_S[\exp(4\lambda X_1)+\exp(4\lambda X_2)+\exp(4\lambda \Tilde{X}_1)+\exp(4\lambda \Tilde{X}_2)]\Big). \label{general_bound}
\end{align*}
As the Assumption \ref{ass:beta-theta} $
\beta_\phi\leq \aaa+\bb\log(1/\delta)$ holds when $\phi \in \Omega_\delta$, the above inequality together with Eq. \eqref{mgf-x1}-\eqref{mgf-x2} 
imply that, for all \[0<\lambda\leq \frac{1}{192e\sqrt{2}\big(\aaa+c\log(1/\delta)\big)\lceil\log_2(n-1)\rceil},\]
we have
\begin{multline}
    \ebb_S [\exp (\lambda G( S,\phi ) )  ]\leq     \exp(4\lambda(\aaa+c\log(1/\delta)))  (\exp(256M^2(n-1)^{-1}K_1\lambda^2) \\ + \exp(9216\times(2e)^2 (\aaa+c\log(1/\delta) )^2\lceil\log_2(n-1)\rceil^2\lambda^2) ). \label{eq:gengap}
\end{multline}

Next, we give results for any fixed $\phi$.
We define $H:\zcal^n\times\Phi\mapsto\rbb$ as
$H(S,\phi)= G( S,\phi ) \mathbb{I}[\phi\in\Omega_\delta]$, where $\mathbb{I}[\cdot]$ is the indicator function. We have
\begin{equation}
\ebb_{ \qbb}[G( S,\phi )] =\ebb_{ \qbb}[H(S,\phi)]+  \ebb_{ \qbb}[G( S,\phi )|\phi\in \Omega_\delta^c]\qbb(\Omega_\delta^c).\label{pca-0}
\end{equation}
Based on Eq. (A.8) and Eq. (A.9) in~\citet{zhou2023toward}, for $\alpha >1$, we have
\begin{equation}\label{pac-0}
  \ebb_{ \qbb}[ G( S,\phi ) ] \leq \ebb_{ \qbb}[H(S,\phi)]+M\inf_{\alpha>1}\delta^{\frac{\alpha-1}{\alpha}}\Big(\ebb_{\pbb}\Big[\Big(\frac{\qbb(\phi)}{\pbb(\phi)}\Big)^\alpha\Big]\Big)^{\frac{1}{\alpha}},
\end{equation}
where $\ell(A(S;\phi))\in[0,M]$
and
\begin{align}
\ebb_{S}\ebb_{ \pbb}[\exp(\lambda H(S,\phi))] \leq \ebb_{S}\ebb_{\pbb}\big[\exp\big(\lambda\big(G( S,\phi )\big)|\phi\in \Omega_\delta\big)\big]+\delta.\label{pac-1}
\end{align}

Combining the above Eq. \eqref{pac-1} with Eq. \eqref{eq:gengap}, we obtain
\begin{multline}    \ebb_{\pbb}\ebb_S[\exp(\lambda H(S,\phi))] \leq  \exp(2\lambda(\aaa+c\log(1/\delta)))\times  \big(\exp(256M^2(n-1)^{-1}K_1\lambda^2)+\\\exp(9216\times(2e)^2\big(\aaa+\bb\log(1/\delta)\big)^2\lceil\log_2(n-1)\rceil^2\lambda^2)\big)+\delta.\label{temp__1}
\end{multline}
For any $u,v,w>0$ and $\delta\in(0,1)$, we have
\[\exp(u)(\exp(v)+\exp(w))+\delta\le \exp(u+1/2)(\exp(v)+\exp(w)).
\]
Applying the above inequality into  Eq. \eqref{temp__1}, if $u=2\lambda(\aaa+\bb\log(1/\delta))$, $v=256M^2n^{-1}K_1\lambda^2$, $w=9216\times(2e)^2 \big(\aaa+\bb\log(1/\delta)\big)^2\lceil\log_2n\rceil^2\lambda^2$, it gives
\begin{multline}
\ebb_{\pbb}\ebb_S[\exp(\lambda H(S,\phi))] \leq  \exp\big(2\lambda(\aaa+b\log(1/\delta))+1/2\big)\times \big(\exp(256M^2 \frac{K_1\lambda^2}{n-1})+ \\ \exp(9216\times(2e)^2\big(\aaa+\bb\log(\frac{1}{\delta})\big)^2\lceil\log_2(n-1)\rceil^2\lambda^2)\big).\label{pca-2}
\end{multline}
We choose
\begin{equation}\label{pca-3}
\lambda=\min\Big\{\frac{1}{192e\sqrt{2}\big(\aaa+\bb\log (1/\delta)\big)\lceil\log_2(n-1)\rceil},\frac{\sqrt{(n-1)}}{16\sqrt{K_1}M}\Big\},
\end{equation}
so that we have
\begin{gather*}
  2\lambda(\aaa+\bb\log(1/\delta)) + 1/2\leq1, \\
  256M^2(n-1)^{-1}K_1\lambda^2\leq1, \\  9216\times(2e)^2\big(\aaa+\bb\log(1/\delta)\big)^2\lceil\log_2(n-1)\rceil^2\lambda^2\leq 1.
\end{gather*}
Plugging this back into Eq. \eqref{pca-2} yields the MGF of our truncated generalization gap, $H(S;\phi)$, which is a key quantity in PAC-bays analysis
\[
\ebb_{ \pbb}\ebb_S[\exp(\lambda H(S,\phi))]\leq e(e+e)\leq e^3.
\]
Applying the above results to Eq.\eqref{eq:markov}, we have,  with a probability  $1-\delta'$,
\[  \ebb_{ \qbb}[ H( S,\phi ) ] \leq \frac{1}{\lambda}(\log (e^3/\delta') + \rm{KL}(\qbb\|\pbb)) = \frac{1}{\lambda}(3 +\log(1/\delta') + \rm{KL}(\qbb\|\pbb)).
\]
Based on the above inequality and Eq. \eqref{pac-0}, Eq. \eqref{pac-1}, the following inequality holds uniformly for all $\qbb$ with probability at least $1-\delta'$
\begin{align*}
& \ebb_{\phi\sim\qbb}[ G( S,\phi ) ] \leq \ebb_{\phi\sim\qbb}[H(S,\phi)]+M\inf_{\alpha>1}\delta^{\frac{\alpha-1}{\alpha}}\Big(\ebb_{\pbb}\Big[\Big(\frac{\qbb(\phi)}{\pbb(\phi)}\Big)^\alpha\Big]\Big)^{\frac{1}{\alpha}} \\ & \leq\frac{\rm{KL}(\qbb\|\pbb) + \log(1/\delta')  +3 }{\lambda} +M\inf_{\alpha>1}\delta^{\frac{\alpha-1}{\alpha}}\Big(\ebb_{\pbb}\Big[\Big(\frac{\qbb(\phi)}{\pbb(\phi)}\Big)^\alpha\Big]\Big)^{\frac{1}{\alpha}}.
\end{align*}
If we choose $\alpha=6$ in Lemma \ref{thm:main}, with $\delta=1/n$, we have
\begin{multline*}     \ebb_{\phi\sim\qbb}\left[G(S,\phi)\right] \lesssim  Mn^{-5/6}\left(\ebb_{\phi\sim\pbb}\left[\left(\frac{\qbb(\phi)}{\pbb(\phi)}\right)^6\right]\right)^{\frac{1}{6}}\\
  +\left(\rm{KL}(\qbb\|\pbb)+\log(1/\delta_1)\right)\max\left\{(\aaa+\bb\log(n))\lceil\log_2n\rceil,\frac{M}{\sqrt{n}}\right\}.
 \end{multline*}
In the above inequality, comparing the first term with the second term, the first term is negligible.
Therefore, our analysis shows
 \[ \ebb_{\phi\sim\qbb}\left[G(S,\phi)\right]\lesssim\left(\rm{KL}(\qbb\|\pbb)+\log(1/\delta_1)\right)\max\left\{(\aaa+\bb\log(n))\lceil\log_2n\rceil,\frac{M}{\sqrt{n}}\right\}.
 \]
 The proof is completed.\end{proof}

\if 0

\begin{proof}[Proof of Lemma \ref{thm:main}]
Let $\delta=1/n $. We can define $H:\zcal^n\times\Phi\mapsto\rbb$ as follows
\[
H(S,\phi)=\begin{cases}              G( S,\phi )=R(A(S;\phi))-R_S(A(S;\phi)), & \mbox{if } \phi\in\Omega_\delta \\
              0, & \mbox{otherwise},
            \end{cases}
\]
where $\Omega_\delta$ is a set with $\mbox{Pr}(\Omega_\delta)\geq1-\delta$ on which the Assumption \ref{ass:beta-theta} holds.
\[
\beta_\phi\leq \aaa+\bb\log(1/\delta),\quad\forall\phi\in \Omega_\delta.
\]
$\Omega_{\delta}^c$ denotes the complement of $\Omega_\delta$. We have
\begin{equation}
\ebb_{\phi\sim\qbb}[G( S,\phi )] =\ebb_{\phi\sim\qbb}[H(S,\phi)]+  \ebb_{\phi\sim\qbb}[G( S,\phi )|\phi\in \Omega_\delta^c]\qbb(\Omega_\delta^c).\label{pca-0}
\end{equation}
Based on Eq. (A.8) and Eq. (A.9) in~\citet{zhou2023toward}, we have
\begin{equation}\label{pac-0}
  \ebb_{\phi\sim\qbb}[\big|G( S,\phi )\big|] \leq \ebb_{\phi\sim\qbb}[H(S,\phi)]+M\inf_{\alpha>1}\delta^{\frac{\alpha-1}{\alpha}}\Big(\ebb_{\pbb}\Big[\Big(\frac{\qbb(\phi)}{\pbb(\phi)}\Big)^\alpha\Big]\Big)^{\frac{1}{\alpha}},
\end{equation}
where $\ell(A(S;\phi))\in[0,M]$,
and
\begin{align}
\ebb_{S}\ebb_{\phi\sim\pbb}[\exp(\lambda H(S,\phi))] \leq \ebb_{S}\ebb_{\pbb}\big[\exp\big(\lambda\big(G( S,\phi )\big)|\phi\in \Omega_\delta\big)\big]+\delta.\label{pac-1}
\end{align}

\if 0
{\zsj
We define
Since , the law of total expectation shows

By the multiplicative form of change of measure, Eq. \eqref{change-probability}, we know
\begin{align*}
  \qbb(\Omega_\delta^c) &\leq \inf_{\alpha>1}\Big(\ebb_{\phi\sim\pbb}\Big[\Big(\frac{\qbb(\phi)}{\pbb(\phi)}\Big)^\alpha\Big]\Big)^{\frac{1}{\alpha}}\cdot\pbb^{\frac{\alpha-1}{\alpha}}(\Omega_\delta^c)
  \leq \inf_{\alpha>1}\delta^{\frac{\alpha-1}{\alpha}}\Big(\ebb_{\phi\sim\pbb}\Big[\Big(\frac{\qbb(\phi)}{\pbb(\phi)}\Big)^\alpha\Big]\Big)^{\frac{1}{\alpha}}.
\end{align*}
It then follows that

Furthermore, by the law of total expectation, we have

}
\fi

We now fix any $\phi\in\Omega_\delta$. It was shown in~\citet{lei2020sharper}
\[
G( S,\phi )\leq 4\beta_\phi+\frac{1}{n(n-1)}\sum_{i\neq j}g_{i,j}(S),
\]
where 
$\forall i, j \in[n]$,
\[
g_{i, j}(S)=\mathbb{E}_{z_{i}^{\prime}, z_{j}^{\prime}}\left[\mathbb{E}_{Z, \tilde{Z}}\left[\ell\left(A\left(S_{i, j}\right) ; Z, \tilde{Z}\right)\right]-\ell\left(A\left(S_{i, j}\right) ; z_{i}, z_{j}\right)\right].
\]
As shown in~\citet{lei2020sharper}, $g_{i, j}$ satisfies all the conditions in Lemma \ref{lem:feldman} and therefore one can apply Lemma \ref{lem:feldman} to show the existence of four random variables $X_1$, $X_2$, $\Tilde{X}_1$, $\Tilde{X}_2$ such that
$\ebb[X_1]=\ebb[X_2]=\ebb[\Tilde{X}_1]=\ebb[\Tilde{X}_2] =0$
\[
\frac{1}{n(n-1)}\sum_{i\neq j}g_{i,j}(S)=X_1+X_2+\Tilde{X}_1 +\Tilde{X}_2
\]
and
\begin{gather*}
\|X_1\|_p\leq 8\sqrt{ p}M(n-1)^{-\frac{1}{2}},\quad\forall p\geq1,\\
\|\Tilde{X}_1\|_p\leq 8\sqrt{ p}M(n-1)^{-\frac{1}{2}},\quad\forall p\geq1,\\
\|X_2\|_p  \leq 24\sqrt{2}p\beta_\phi\lceil\log_2 (n-1)\rceil\leq 24\sqrt{2}p\big(\aaa+\bb\log(1/\delta)\big)\lceil\log_2(n-1)\rceil,\mkern5mu\forall p\geq2,\\
\|\Tilde{X}_2\|_p \leq 24\sqrt{2}p\big(\aaa+\bb\log(1/\delta)\big)\lceil\log_2(n-1)\rceil,\quad\forall p\geq2.
\end{gather*}
By the first part of Lemma \ref{lem:mgf-p} with $X=X_1/8M(n-1)^{-\frac{1}{2}}$ to get 
\begin{equation}\label{mgf-x1}
  \max\{\ebb_S[\exp(\lambda X_1)], \ebb_S[\exp(\lambda \Tilde{X}_1)]\}\leq \exp(64M^2(n-1)^{-1}K_1\lambda^2)
\end{equation}
and by the second part of Lemma \ref{lem:mgf-p} with $X=X_2/24\sqrt{2}\big(\aaa+\bb\log(1/\delta)\big)\lceil\log_2(n-1)\rceil$, 
\begin{multline}\label{mgf-x2}
\max\{\ebb_S[\exp(\lambda X_2)],\ebb_S[\exp(\lambda \Tilde{X}_2)]\}\leq \exp[2304e^2 (\aaa+\bb\log\frac{1}{\delta} )^2\lceil\log_2(n-1)\rceil^2\lambda^2],\\
  \forall|\lambda|\leq \frac{1}{48e\sqrt{2} (\aaa+\bb\log\frac{1}{\delta} )\lceil\log_2(n-1)\rceil}.
\end{multline}
\if 0
Similarly, we have

\begin{equation}\label{mgf-x1t}
  \ebb_S[\exp(\lambda \Tilde{X}_1)]\leq \exp(64M^2(n-1)^{-1}K_2\lambda^2)
\end{equation} and
\begin{multline}\label{mgf-x2t}
  \ebb_S[\exp(\lambda \Tilde{X}_2)]\leq \exp(2304e^2\big(\aaa+\bb\log(1/\delta)\big)^2\lceil\log_2(n-1)\rceil^2\lambda^2),\\
 |\lambda|\leq \frac{1}{48e\sqrt{2}\big(\aaa+\bb\log(1/\delta)\big)\lceil\log_2(n-1)\rceil}.
\end{multline}
\fi

By Jensen's inequality
\begin{multline*}
\exp(\lambda X_1+\lambda X_2+\lambda \Tilde{X}_1+\lambda \Tilde{X}_2)  =\exp(\lambda X_1)\exp(\lambda X_2)\exp(\lambda \Tilde{X}_1)\exp(\lambda \Tilde{X}_2)\\
\leq \frac{1}{4}(\exp(4\lambda X_1)+\exp(4\lambda X_2) + \exp(4\lambda \Tilde{X}_1) + \exp(4\lambda \Tilde{X}_2)), 
\end{multline*}
Therefore,
\begin{align*}
  &\ebb_S \exp[\lambda  G( S,\phi ) ] \leq \ebb_S\exp[ \lambda ( 4\beta_\phi+X_1+ X_2 + \Tilde{X}_1+  \Tilde{X}_2 )]\notag \\
  &\leq
  \exp(4\lambda\beta_\phi)\frac{1}{4}\Big(\ebb_S[\exp(4\lambda X_1)+\exp(4\lambda X_2)+\exp(4\lambda \Tilde{X}_1)+\exp(4\lambda \Tilde{X}_2)]\Big). \label{general_bound}
\end{align*}
This inequality together with Eq. \eqref{mgf-x1}-\eqref{mgf-x2} 
imply that, for all \[0<\lambda\leq \frac{1}{192e\sqrt{2}\big(\aaa+c\log(1/\delta)\big)\lceil\log_2(n-1)\rceil},\]
\begin{multline*}
    \ebb_S\big[\exp\big(\lambda G( S,\phi )\big) \big]\leq     \exp(4\lambda(\aaa+c\log(1/\delta)))\times \\
    \Big(\exp(256M^2(n-1)^{-1}K_1\lambda^2)+\exp(9216\times(2e)^2\big(\aaa+c\log(1/\delta)\big)^2\lceil\log_2(n-1)\rceil^2\lambda^2)\Big)
\end{multline*}
Combining the above inequality with Eq. \eqref{pac-1}, we obtain
\begin{multline}    \ebb_{\phi\sim\pbb}\ebb_S[\exp(\lambda H(S,\phi))] \leq  \exp(2\lambda(\aaa+c\log(1/\delta)))\times  \big(\exp(256M^2(n-1)^{-1}K_1\lambda^2)+\\\exp(9216\times(2e)^2\big(\aaa+\bb\log(1/\delta)\big)^2\lceil\log_2(n-1)\rceil^2\lambda^2)\big)+\delta.\label{temp__1}
\end{multline}
For any $u,v,w>0$ and $\delta\in(0,1)$, we have
\[\exp(u)(\exp(v)+\exp(w))+\delta\le \exp(u+1/2)(\exp(v)+\exp(w)).
\]
Applying this with $u=2\lambda(\aaa+\bb\log(1/\delta))$, $v=256M^2n^{-1}K_1\lambda^2$, $w=9216\times(2e)^2 \big(\aaa+\bb\log(1/\delta)\big)^2\lceil\log_2n\rceil^2\lambda^2$, Eq. \eqref{temp__1} further implies
\begin{multline}
\ebb_{\phi\sim\pbb}\ebb_S[\exp(\lambda H(S,\phi))] \leq  \exp\big(2\lambda(\aaa+b\log(1/\delta))+1/2\big)\times \\
\big(\exp(256M^2(n-1)^{-1}K_1\lambda^2)+\exp(9216\times(2e)^2\big(\aaa+\bb\log(\frac{1}{\delta})\big)^2\lceil\log_2(n-1)\rceil^2\lambda^2)\big).\label{pca-2}
\end{multline}
We choose
\begin{equation}\label{pca-3}
\lambda=\min\Big\{\frac{1}{192e\sqrt{2}\big(\aaa+\bb\log (1/\delta)\big)\lceil\log_2(n-1)\rceil},\frac{\sqrt{(n-1)}}{16\sqrt{K_1}M}\Big\}.
\end{equation}
so that we have
\begin{gather*}
  2\lambda(\aaa+\bb\log(1/\delta)) + 1/2\leq1, \\
  256M^2(n-1)^{-1}K_1\lambda^2\leq1, \\  9216\times(2e)^2\big(\aaa+\bb\log(1/\delta)\big)^2\lceil\log_2(n-1)\rceil^2\lambda^2\leq 1.
\end{gather*}
so that we have
\begin{gather*}
  2\lambda(\aaa+\bb\log(1/\delta)) + 1/2\leq1, \\
  256M^2(n-1)^{-1}K_1\lambda^2\leq1, \\  9216\times(2e)^2\big(\aaa+\bb\log(1/\delta)\big)^2\lceil\log_2(n-1)\rceil^2\lambda^2\leq 1.
\end{gather*}
Indeed, by the variational formula (Lemma \ref{lem:variation}), we know
\begin{align*} \ebb_S\big[\exp\big(\sup_{\qbb}\big(\lambda\ebb_{\phi\sim\qbb}H(S,\phi)-\rm{KL}(\qbb\|\pbb)\big)\big)\big] &= \ebb_S\ebb_{\phi\sim\pbb}[\exp(\lambda H(S,\phi))]
\leq e^3. \end{align*}
According to the Markov's inequality, we further get
\begin{align*}
   & \mbox{Pr}\{\sup_{\qbb}[\ebb_{\phi\sim\qbb} \big[\lambda H(S,\phi)\big] -\rm{KL}(\qbb\|\pbb)] \geq\epsilon\} \\
   & = \mbox{Pr}\{\exp\big(\sup_{\qbb}[\ebb_{\phi\sim\qbb} \big[\lambda H(S,\phi)\big]-\rm{KL}(\qbb\|\pbb)] \big)\geq\exp(\epsilon)\}  \\
   & \leq \ebb_S\big[\exp\big(\sup_{\qbb}\big(\lambda\ebb_{\phi\sim\qbb}\big[H(S,\phi)\big]-\rm{KL}(\qbb\|\pbb)\big) \big)\big]\exp(-\epsilon)\\
   & \leq \exp(3)\exp(-\epsilon)=\exp(3-\epsilon).
\end{align*}
We choose $\epsilon=3 + \log(1/\delta')$ and derive the following inequality with probability at least $1-\delta'$
\[
\sup_{\qbb}\big[\ebb_{\phi\sim\qbb}\big[ \lambda H(S,\phi)\big]-\rm{KL}(\qbb\|\pbb)  \big]\leq\log(1/\delta') +3.
\]
Therefore, with probability at least $1-\delta'$ the following inequality holds uniformly for all $\qbb$
\[
\ebb_{\phi\sim\qbb} [H(S,\phi)] \leq \frac{\rm{KL}(\qbb\|\pbb) + \log(1/\delta') +3}{\lambda}.
\]
This together with Eq.\eqref{pca-0} with $\delta=1/n$ gives with probability at least $1-\delta'$ the following inequality for all $\qbb$

\begin{align*}
& \ebb_{\phi\sim\qbb}\big[G(S,\phi)\big]\leq \frac{\rm{KL}(\qbb\|\pbb)+\log(1/\delta') +3}{\lambda}+M\inf_{\alpha>1}n^{\frac{(1-\alpha)}{\alpha}}\Big(\ebb_{\phi\sim\pbb}\Big[\Big(\frac{\qbb(\phi)}{\pbb(\phi)}\Big)^\alpha\Big]\Big)^{\frac{1}{\alpha}}\\
& \leq \big(\rm{KL}(\qbb\|\pbb)+\log(\frac{1}{\delta'}) +3\big)\max\Big\{192e\sqrt{2}(\aaa+ \bb\log n)\lceil\log_2(n-1)\rceil,\frac{16\sqrt{ K_1}M}{\sqrt{n-1}}\Big\}\\
&+ M\inf_{\alpha>1}n^{\frac{(1-\alpha) }{\alpha}}\Big(\ebb_{\phi\sim\pbb}\Big[\Big(\frac{\qbb(\phi)}{\pbb(\phi)}\Big)^\alpha\Big]\Big)^{\frac{1}{\alpha}}.
\end{align*}
\if 0
If we take $\gamma=5$, we get
\begin{multline*}
  \ebb_{\phi\sim\qbb}\big[G( S,\phi )\big] \lesssim
  M\inf_{\alpha>1}n^{\frac{5(1-\alpha)}{\alpha}}\Big(\ebb_{\phi\sim\pbb}\Big[\Big(\frac{\qbb(\phi)}{\pbb(\phi)}\Big)^\alpha\Big]\Big)^{\frac{1}{\alpha}}\\
  +\Big(\rm{KL}(\qbb\|\pbb)+\log(1/\delta_1)\Big)\max\Big\{(\aaa+\bb\log(n))\lceil\log_2n\rceil,\frac{M}{\sqrt{n}}\Big\}
\end{multline*}

 Recall that the $\chi^{2}$ divergence between $\pbb$ and $\qbb$ is defined as follows~\citep{begin2016pac}
\[\chi^{2}(\mathbb{Q} \| \mathbb{P}) \triangleq \mathbb{E}_{\phi \sim \mathbb{P}}\left[\left(\frac{\mathbb{Q}(\phi)}{\mathbb{P}(\phi)}\right)^{2}-1\right].\]\fi
If we choose $\alpha=6$ in Lemma \ref{thm:main}, we have
\begin{multline*}     \ebb_{\phi\sim\qbb}\big[G(S,\phi)\big] \lesssim  Mn^{-5/6}\Big(\ebb_{\phi\sim\pbb}\Big[\Big(\frac{\qbb(\phi)}{\pbb(\phi)}\Big)^6\Big]\Big)^{\frac{1}{6}}\\
  +\Big(\rm{KL}(\qbb\|\pbb)+\log(1/\delta_1)\Big)\max\Big\{(\aaa+\bb\log(n))\lceil\log_2n\rceil,\frac{M}{\sqrt{n}}\Big\}.
 \end{multline*}
 It is reasonable to assume $Mn^{-5/6}\Big(\ebb_{\phi\sim\pbb}\Big[\Big(\frac{\qbb(\phi)}{\pbb(\phi)}\Big)^6\Big]\Big)^{\frac{1}{6}}$ is negligible as compared to the second term in the above inequality, and in this case our analysis shows
 \[ \ebb_{\phi\sim\qbb}\big[G(S,\phi)\big]\lesssim\Big(\rm{KL}(\qbb\|\pbb)+\log(1/\delta_1)\Big)\max\Big\{(\aaa+\bb\log(n))\lceil\log_2n\rceil,\frac{M}{\sqrt{n}}\Big\}.
 \]
 The proof is completed.
\end{proof}
\fi


Here, we discuss the existence of $\ebb_{\phi\sim\pbb}\Big[\Big(\frac{\qbb(\phi)}{\pbb(\phi)}\Big)^\alpha\Big]$. In practice, we consider $\qbb$ and $\pbb$ to be sampling distributions. In these cases, $\qbb$ and $\pbb$ are discrete distributions on the same dataset. In particular, we are interested in the case with $\pbb$ being the uniform distribution. Under these circumstances, this expectation exists.

\section*{Proofs on Applications }\label{app:app}

\if 0
\subsection*{Randomized Coordinate Descent \label{prf:rcd}} In this subsection, we apply Lemma \ref{thm:main} to RCD. To this aim,  we first introduce a lemma on concentration inequality for martingales. \textcolor{red}{we need to introduce concepts first - what is a martingale, and what is a martingale difference sequence}

\begin{lemma}[Bernstein Inequality for Martingales~\citep{freedman1975tail}\label{lem:bernstein-martingale}] 
For a random variable sequence $X_1,\ldots,X_t$, we consider a functional sequence  $F_k(X_1,\ldots,X_k)$ \textcolor{red}{such that...?}. Let
\[
\sum_{k=1}^{t}\rm{Var}_{X_k}[F_k]\leq\sigma_t^2,
\] 
for $\sigma_t\geq0$, where ${\rm{Var}}_{X_k}[F_k]$ denotes the variance of $F_k$ w.r.t. $X_k$. \textcolor{red}{what happens to $X_1,\dots, X_{k-1}$ in ${\rm{Var}}_{X_k}[F_k]$ -- are they integrated out, or are they conditioned on?} Then, for any $\delta\in(0,1)$, with probability at least $1-\delta$ we have \textcolor{red}{$c$ below is undefined}
\[
\sum_{k=1}^{t}F_k-\sum_{k=1}^{t}\ebb_{X_k}[F_k]\leq \frac{2c\log(1/\delta)}{3}+\sigma_t\sqrt{2\log(1/\delta)}.
\]
\end{lemma}

First, we prove stability bounds for smooth convex cases.

\begin{lemma} \label{lem:martgale} We define the following martingale difference sequence for $L$-Lipschitz $\alpha$-smooth, and convex loss function $\ell$,
    \begin{align}\label{equ:martgale}
        &F_t =  \sum_{j=1}^{n-1}|\nabla_{i_t}\ell(\bw_t;z_n,z_j)- \nabla_{i_t}\ell(\bw_t;z'_n,z_j)+ \nabla_{i_t}\ell(\bw_t;z_j,z_n)- \nabla_{i_t}\ell(\bw_t;z_j,z'_n)| - \notag\\ & \!\sum_{j=1}^{n-1}\!\ebb_{i_t}\![|\nabla_{i_t}\ell(\bw_t;z_n,z_j)\!-\! \nabla_{i_t}\ell(\!\bw_t;z'_n,z_j\!)\!+\! \nabla_{i_t}\ell(\!\bw_t;z_j,z_n\!)\!-\!\nabla_{i_t}\ell(\bw_t;z_j,z'_n)|].
    \end{align}
    For any $\delta \in (0, 1),$ with probability at least $1 - \delta$, we have
    \[ \sum_{k=1}^{t}F_k \leq \frac{8L(n-1)\log (1/\delta)}{3} + 4L(n-1)\sqrt{\frac{2t\log (1/ \delta)}{d}}. \]
\end{lemma}
\begin{proof}
It is easy to see that $\ebb_{i_t}[F_t] = 0$ and the following holds
\begin{align*} 
    &|F_t|=\sum_{j=1}^{n-1}|\nabla_{i_t}\ell(\bw_t;z_n,z_j)- \nabla_{i_t}\ell(\bw_t;z'_n,z_j)+ \nabla_{i_t}\ell(\bw_t;z_j,z_n)- \nabla_{i_t}\ell(\bw_t;z_j,z'_n)|  \\ & -\sum_{j=1}^{n-1}\ebb_{i_t}[|\nabla_{i_t}\ell(\bw_t;z_n,z_j)- \nabla_{i_t}\ell(\bw_t;z'_n,z_j)+ \nabla_{i_t}\ell(\bw_t;z_j,z_n)- \nabla_{i_t}\ell(\bw_t;z_j,z'_n)|]\\ & \leq 4L(n-1),
\end{align*}
where we use the $L$-Lipschitzness of the loss function.
Because $i_t$ is uniformly drawn from $[d]$. Based on Cauchy-Schwarz inequality and $L$-Lipschitzness, we have (note $\bw_t$ is independent of $i_t$), 

\begin{multline*}
\ebb_{i_t}[F_t^2]
     \leq 4(n-1)\ebb_{i_t}\sum_{j=1}^{n-1}|\nabla_{i_t}\ell(\bw_t;z_n,z_j)- \nabla_{i_t}\ell(\bw_t;z'_n,z_j)+ \nabla_{i_t}\ell(\bw_t;z_j,z_n)- \\ \nabla_{i_t}\ell(\bw_t;z_j,z'_n)|^2 \leq \frac{4(n-1)}{d}\sum_{k=1}^{d}\sum_{j=1}^{n-1}\big(|\nabla_{k}\ell(\bw_t;z_n,z_j)|^2+| \nabla_{k}\ell(\bw_t;z'_n,z_j)|^2+ \\ |\nabla_{k}\ell(\bw_t;z_j,z_n)|^2+| \nabla_{k}\ell(\bw_t;z_j,z'_n)|^2 \big)   = \frac{4 {L}(n-1)}{d}\sum_{j=1}^{n-1} (
    \|\nabla\ell(\bw_t;z_n,z_j)\|_2^2+ \\ \| \nabla\ell(\bw_t;z'_n,z_j)\|_2^2+\|\nabla\ell(\bw_t;z_j,z_n)\|_2^2+\| \nabla\ell(\bw_t;z_j,z'_n)\|_2^2)
    \leq \frac{16L^2(n-1)^2}{d} .
\end{multline*}

Applying Lemma \ref{lem:bernstein-martingale} with $c=4L(n-1)$ and $\sigma_t=4L(n-1)\sqrt{t/d}$ gives the following inequality. With probability at least $1-\delta$, we have
 \[ \sum_{k=1}^{t} {F_k} \leq \frac{8L(n-1)\log (1/\delta)}{3} + 4L(n-1) \sqrt{\frac{2t\log (1/\delta)}{d}}.\]
The proof is completed.
\end{proof}

\begin{proof}[Proof of Lemma \ref{sta_rcd}, 2)]
According to Eq. (12) in~\citet{wang2021stability}, for smooth and convex cases we have
\begin{align}
    &\|\bw_{t+1} - \bw'_{t+1} \|_2 \leq
    \|\bw_{t} - \bw'_{t} \|_2 + \eta_t\|\nabla_{i_t}R_{S'}(\bw_t)\be_{i_t} - \nabla_{i_t}R_S(\bw_t)\be_{i_t}\|_2.\label{equ:rcd:update}
\end{align}
Assume that $S$ and $S'$ differ at $n$-th example for simplicity. At $t$-th iteration, we have 
\begin{align}
&|\nabla_{i_t}R_{S'}(\bw_t)- \nabla_{i_t}R_S(\bw_t)| = \frac{1}{n}\sum_{j=1}^{n-1}|\nabla_{i_t}\ell(\bw_t;z_n,z_j)- \nabla_{i_t}\ell(\bw_t;z'_n,z_j)+ \notag \\
& \nabla_{i_t}\ell(\bw_t;z_j,z_n)- \nabla_{i_t}\ell(\bw_t;z_j,z'_n)|\label{equ:rcd_3}\\
&\leq\!\frac{1}{n}\!\sum_{j=1}^{n-1} \! (|\nabla_{i_t}\ell(\bw_t;z_n,z_j)|\!+\!| \nabla_{i_t}\ell(\bw_t;z'_n,z_j)|\!+\!|\nabla_{i_t}\ell(\bw_t;z_j,z_n)|\!+\!| \nabla_{i_t}\ell(\bw_t;z_j,z'_n)|  ).\notag\end{align}
\if 0 By take expectation w.r.t. the $i_t$-th coordinate chosen at $t$-th iteration of the above results, where the probability of each coordinate to be chosen is $1/d$, we have
\begin{multline}
\sum_{j=1}^{n-1}\ebb_{i_t}[|\nabla_{i_t}\ell(\bw_t;z_n,z_j)- \nabla_{i_t}\ell(\bw_t;z'_n,z_j)+ \nabla_{i_t}\ell(\bw_t;z_j,z_n)- \nabla_{i_t}\ell(\bw_t;z_j,z'_n)|\\\leq \! \frac{1}{d}\sum_{k=1}^{d}\sum_{j=1}^{n-1}  (|\nabla_{k}\ell(\bw_t;z_n,z_j)|\!+\!| \nabla_{k}\ell(\bw_t;z'_n,z_j)|+|\nabla_{k}\ell(\bw_t;z_j,z_n)|+| \nabla_{k}\ell(\bw_t;z_j,z'_n)| )\\= \! \frac{1}{d}\sum_{j=1}^{n-1} (\|\nabla\ell(\bw_t;z_n,z_j)\|_1\!+\!\| \nabla\ell(\bw_t;z'_n,z_j)\|_1+\|\nabla\ell(\bw_t;z_j,z_n)\|_1+\| \nabla\ell(\bw_t;z_j,z'_n)\|_1  ) \\ \leq \frac{4L_1(n-1)}{d}\label{equ:rcd_12},
\end{multline}
where we denote $\|\nabla \ell(\bw;z,\tilde{z})\|_1\leq L_1$. \fi

Based on Eq. \eqref{equ:rcd:update} and Eq. \eqref{equ:rcd_3}, we compute them recursively,
\begin{align}
&\|\bw_{t+1} -\bw'_{t+1}\|_2\notag
 \leq \|\bw_{t} - \bw'_{t} \|_2 + \frac{\eta_t}{n(n-1)}\sum_{j=1}^{n-1}|\nabla_{i_t}\ell(\bw_t;z_n,z_j)- \nabla_{i_t}\ell(\bw_t;z'_n,z_j)+ \\ & \nabla_{i_t}\ell(\bw_t;z_j,z_n)- \nabla_{i_t}\ell(\bw_t;z_j,z'_n)| \label{equ:ecd:beta}\\
& \leq \sum_{k=1}^{t} \frac{\eta_k}{n(n-1)}\sum_{j=1}^{n-1}|\nabla_{i_k}\ell(\bw_k;z_n,z_j)- \nabla_{i_k}\ell(\bw_k;z'_n,z_j)+ \nabla_{i_k}\ell(\bw_k;z_j,z_n)- \nabla_{i_k}\ell(\bw_k;z_j,z'_n)|\notag.     \end{align}
\if 0
Therefore, for $L$-Lipschitz continuity, we prove that RCD is $\beta_\phi$-uniformly stable with
\begin{multline}\label{eqa:rcd_pbound}
\beta_\phi = \frac{L}{n(n-1)} \max_{m\in[n]}\sum_{k=1}^{t}\eta_k \sum_{j\in[n],j\neq m}  |\nabla_{i_k}\ell(\bw_k;z_m,z_j)- \nabla_{i_k}\ell(\bw_k;z'_m,z_j)+ \\ \nabla_{i_k}\ell(\bw_k;z_j,z_m)- \nabla_{i_k}\ell(\bw_k;z_j,z'_m)|.\end{multline}
Based on  Eq. \eqref{ine:rcd}, taking expectation of both sides of Eq. \eqref{eqa:rcd_pbound} gives \begin{align}\label{equ:rcd_ebound}
    & \aaa = \frac{4 L^2}{nd} \sum_{k=1}^{t} \eta_k \geq \beta_\phi.
\end{align}
\fi
Next, we show that RCD satisfies the Assumption \ref{ass:beta-theta}.
According to Eq. \eqref{ine:rcd}, Eq. \eqref{equ:ecd:beta}, and $\{F_t\}$ introduced in Eq. \eqref{equ:martgale}, we have
\begin{equation} \label{equ:rcd_2}
\|\bw_{t+1} - \bw'_{t+1} \|_2 \leq \|\bw_{t} - \bw'_{t} \|_2 + \frac{\eta_t}{n(n-1)} F_t+ \frac{4L \eta_t}{nd},
\end{equation}
Taking a summation of the above inequality, we get
\[ \|\bw_{t+1} - \bw'_{t+1} \|_2 \leq \frac{\eta}{n(n-1)} \sum_{k=1}^{t} F_k+ \frac{4L \eta t}{nd},  \]
where $\eta_t =\eta$ for all $t$.
According to Lemma \ref{lem:martgale}, for $\delta \in (0,1)$, with probability at least $1-\delta$, the following inequality holds
\[ \|\bw_{t+1} - \bw'_{t+1} \|_2 \leq  \frac{4L \eta_t t}{nd} + \frac{\eta L\log (1/ \delta)}{n}\Big(\frac{8}{3} + 4\sqrt{\frac{2t}{d}}\Big).\]
The following inequality holds by the Lipschitz continuity
\begin{align*} \beta_\phi \leq \frac{4L^2 \eta_t t}{nd} + \frac{\eta L^2\log (1/ \delta)}{n}\Big(\frac{8}{3} +4 \sqrt{\frac{2t}{d}}\Big).\end{align*}
The proof is completed. \end{proof}

\if 0
\begin{proof}[Proof of Theorem \ref{cor_rcd_smooth}, 2)]
With $A(S;\phi)= \bw_T$ and Lemma \ref{sta_rcd}, 2), it is clear that RCD with convex non-smooth loss functions also satisfies Assumption \ref{ass:beta-theta}. Therefore, applying the upper bound on $\beta_\phi$ to Lemma \ref{thm:main} derives the result. \end{proof}
\fi

Next, we prove stability bounds for non-smooth convex cases.

\begin{lemma} \label{lem:martgale_nonsmooth}
Suppose loss function $\ell$ is convex and $L$-Lipschitz. We define the martingale difference sequence    \begin{multline}\label{equ:martgale_nonsmooth}
        F_t = \sum_{j=1}^{n-1} \! (|\nabla_{i_t}\ell(\bw_t;z_n,z_j)|^2\!+\!| \nabla_{i_t}\ell(\bw_t;z'_n,z_j)|^2\!+\!|\nabla_{i_t}\ell(\bw_t;z_j,z_n)|^2\!+\!| \nabla_{i_t}\ell(\bw_t;z_j,z'_n)|^2  ) \\   -  \sum_{j=1}^{n-1}\ebb_{i_t}[   (|\nabla_{i_t}\ell(\bw_t;z_n,z_j)|^2 + | \nabla_{i_t}\ell(\bw_t;z'_n,z_j)|^2 + |\nabla_{i_t}\ell(\bw_t;z_j,z_n)|^2 + | \nabla_{i_t}\ell(\bw_t;z_j,z'_n)|^2  ) ].
    \end{multline}
    For any $\delta \in (0, 1),$ with probability at least $1 - \delta$ there holds
    \[ \sum_{k=1}^{t}F_k \leq \frac{8L^2(n-1)\log (1/\delta)}{3} + 4L^2(n-1)\sqrt{\frac{2t\log (1/\delta)}{d}}. \]
\end{lemma}
\begin{proof}
We have
\begin{align}
    |F_t| \leq 4L^2(n-1). \label{c}
\end{align}

Using $L$-Lipschitzness and \eqref{c}, and noting that $\ebb_{i_t}[F_t] = 0$,
we have 
we have
\begin{multline*}
\rm{Var}_{i_t}[\vert F_t\vert] =
\ebb_{i_t}[F_t^2]{\le \ebb_{i_t}[\vert F_t\vert \cdot \vert F_t\vert]}
     \leq 4{L^2}(n-1)\sum_{j=1}^{n-1}\ebb_{i_t}[(|\nabla_{i_t}\ell(\bw_t;z_n,z_j)|^{2} +\\ | \nabla_{i_t}\ell(\bw_t;z'_n,z_j)|^{2} + |\nabla_{i_t}\ell(\bw_t;z_j,z_n)|^2  + | \nabla_{i_t}\ell(\bw_t;z_j,z'_n)|^2  ) ]
     \leq \frac{16L^4(n-1)^2}{d},
\end{multline*}
where in the second inequality we use the fact that $i_t$ is drawn from the uniform distribution on $[d]$, 

Applying Lemma \ref{lem:bernstein-martingale}  {to the functional sequence $\vert F_k\vert$ for which we have shown} 
$c=4L^2(n-1)$ and $\sigma_t=4L^2(n-1)\sqrt{t/d}$ gives the following inequality with probability at least $1-\delta$
 \[ \sum_{k=1}^{t} {F_k} \leq \sum_{k=1}^{t} {\vert F_k \vert} \leq \frac{8L^2(n-1)\log (1/\delta)}{3} + 4L^2(n-1)\sqrt{\frac{2t\log (1/\delta)}{d}}.\]
The proof is completed. \end{proof}


Next, we prove the Assumption \ref{ass:beta-theta} holds for RCD in non-smooth case.

\begin{proof}[Proof of Lemma \ref{sta_rcd}, 1)]
By the RCD update, we know
\begin{align}
    & \left\|\bw_{t+1} - \bw'_{t+1} \right\|_2^2  =
    \left\|\bw_{t}-\eta_t \nabla_{i_t}R_S(\bw_t)\be_{i_t} - \bw'_{t} + \eta_t \nabla_{i_t}R_{S'}(\bw'_t)\be_{i_t} \right\|_2^2  \notag\\
&= \left\|\bw_{t}-\eta_t \nabla_{i_t}R_{S'}(\bw_t)\be_{i_t} - \bw'_{t} + \eta_t \nabla_{i_t}R_{S'}(\bw'_t)\be_{i_t} +\eta_t \nabla_{i_t}R_{S'}(\bw_t)\be_{i_t}-\eta_t \nabla_{i_t}R_S(\bw_t)\be_{i_t} \right\|_2^2  \notag\\
&\leq (1+p)\left\|\bw_{t}-\eta_t \nabla_{i_t}R_{S'}(\bw_t)\be_{i_t} - \bw'_{t} + \eta_t \nabla_{i_t}R_{S'}(\bw'_t)\be_{i_t} \right\|_2^2 \notag\\
&
+ (1 + 1/p) \left\|\eta_t \nabla_{i_t}R_{S'}(\bw_t)\be_{i_t}-\eta_t \nabla_{i_t}R_S(\bw_t)\be_{i_t} \right\|_2^2 \notag\\
& \leq (1+p)\left(\|\bw_{t} - \bw'_{t}\|_2^2  + \eta_t^2  |\nabla_{i_t}R_{S'}(\bw_t) - \nabla_{i_t}R_{S'}(\bw'_t)  |^2\right) \notag\\
&+ (1 + 1/p)\eta_t ^2\left\|  \nabla_{i_t}R_{S'}(\bw_t)\be_{i_t}- \nabla_{i_t}R_S(\bw_t)\be_{i_t} \right\|_2^2 .\label{eqa:rcd_nonsmooth}
\end{align}
where we use $ (a + b)^2 \leq (1 + p)a^2 + (1 + 1/p)b^2$ in the first inequality and convexity in the second inequality (i.e., $\langle\bw_{t}- \bw'_{t}, \nabla_{i_t}R_{S'}(\bw_t)\be_{i_t} - \nabla_{i_t}R_{S'}(\bw'_t)\be_{i_t} \rangle\geq0$).

Without loss of generality, we first assume that $S$ and $S'$ differ by the last example. At $t$-th iteration, we have
\begin{align*}
&|\nabla_{i_t}R_{S'}(\bw_t)- \nabla_{i_t}R_S(\bw_t)|^2 = \Big ( \frac{1}{n(n-1)}\sum_{j=1}^{n-1}  (\nabla_{i_t}\ell(\bw_t;z_n,z_j)- \nabla_{i_t}\ell(\bw_t;z'_n,z_j)+   \\
& \nabla_{i_t}\ell(\bw_t;z_j,z_n)- \nabla_{i_t}\ell(\bw_t;z_j,z'_n) )\Big )^2 \leq\!\frac{4}{n^2(n-1) }\!\sum_{j=1}^{n-1} \! (|\nabla_{i_t}\ell(\bw_t;z_n,z_j)|^2\!\\
&+\!| \nabla_{i_t}\ell(\bw_t;z'_n,z_j)|^2\!+\!|\nabla_{i_t}\ell(\bw_t;z_j,z_n)|^2\!+\!| \nabla_{i_t}\ell(\bw_t;z_j,z'_n)|^2  ), \end{align*}
where we use the Cauchy-Schwarz inequality in the last inequality. Applying the above inequality to Eq. \eqref{eqa:rcd_nonsmooth} and for $L$-Lipschitz continuity, we have

\begin{align}
    &\|\bw_{t+1} - \bw'_{t+1} \|_2^2 \leq  (1+p)\|\bw_{t} - \bw'_{t} \|_2^2 + (1+1/p)\frac{4\eta_t^2}{n^2(n-1) }\!\sum_{j=1}^{n-1} \! (|\nabla_{i_t}\ell(\bw_t;z_n,z_j)|^2\!\notag\\  & +\!|\nabla_{i_t}\ell(\bw_t;z'_n,z_j)|^2\!+|\nabla_{i_t}\ell(\bw_t;z_j,z_n)|^2+| \nabla_{i_t}\ell(\bw_t;z_j,z'_n)|^2 +4(1+p)L^2 \eta_t^2 \notag\\
  & \leq  4L^2\sum_{k=1}^{t}(1+p)^{t+1-k}\eta_k^2 + \frac{4(1+1/p)}{n^2(n-1)} \sum_{k=1}^{t}(1+p)^{t-k}\eta_k^2 \sum_{j=1}^{n-1} (|\nabla_{i_t}\ell(\bw_k;z_n,z_j)|^2\!\notag\\  & +\!| \nabla_{i_t}\ell(\bw_k;z'_n,z_j)|^2\!+\!|\nabla_{i_t}\ell(\bw_k;z_j,z_n)|^2\!+\!| \nabla_{i_t}\ell(\bw_k;z_j,z'_n)|^2  ) \notag\\
  &  \leq 4L^2 (1+p)^{t+1}\eta^2 + \frac{4(1+1/p)}{n^2(n-1)} (1+p)^{t}\eta^2 \sum_{k=1}^{t}\sum_{j=1}^{n-1} (|\nabla_{i_t}\ell(\bw_k;z_n,z_j)|^2\!\notag\\  & +\!| \nabla_{i_t}\ell(\bw_k;z'_n,z_j)|^2\!+\!|\nabla_{i_t}\ell(\bw_k;z_j,z_n)|^2\!+\!| \nabla_{i_t}\ell(\bw_k;z_j,z'_n)|^2  )  ,\notag
\end{align}
where we assume $\eta_k = \eta$ in the last inequality.

To show that RCD satisfies Assumption \ref{ass:beta-theta}, we need a high probability bound on $\|\bw_{t+1} - \bw'_{t+1} \|_2^2$.
Applying the $\{F_t\}$ defined in Eq. \eqref{equ:martgale_nonsmooth} to the above inequality gives
\begin{multline}
\|\bw_{t+1} - \bw'_{t+1} \|_2^2 \\ \leq  4L^2t\eta ^2 (1+p)^{t+1} + \frac{4(1+1/p)(1+p)^t\eta ^2}{n^2(n-1)}  \sum_{k=1}^{t}\left(F_k + \frac{4 L^2(n-1)}{ d}\right). \label{eq:rcd_nonsmooth_inter}
\end{multline}
where the inequality holds because the probability of each coordinate being chosen is $1/d$ and we have, 
\begin{align}
& \ebb_{i_t}[    |\nabla_{i_t}\ell(\bw_t;z_n,z_j)|^2 + | \nabla_{i_t}\ell(\bw_t;z'_n,z_j)|^2 + |\nabla_{i_t}\ell(\bw_t;z_j,z_n)|^2 + | \nabla_{i_t}\ell(\bw_t;z_j,z'_n)|^2]   \notag\\
& = \frac{1}{d}\!\sum_{k=1}^{d}  (\! | \nabla_{k} \ell(\bw_t;z_n,z_j)|^2 +|\nabla_{k} \ell(\bw _t;z'_n,z_j) |^2   + |\nabla_{k} \ell (\bw_t;z_j,z_n)|^2+|\nabla_{k} \ell(\bw _t;z_j,z'_n) |^2 \!) \notag\\
& = \frac{1}{d}(   \|  \nabla  \ell(\bw_t;z_n,z_j)\|^2_2 +\|\nabla \ell(\bw _t;z'_n,z_j) \|^2_2   + \|\nabla \ell (\bw_t;z_j,z_n)\|^2_2+\|\nabla \ell(\bw _t;z_j,z'_n) \|^2_2)  \notag\\
& \leq  \frac{4L^2}{d} \label{ine:rcd}.
\end{align}

We set $p=1/(t+1)$ and use inequality $(1+1/x)^x\leq e$ in the Eq .\eqref{eq:rcd_nonsmooth_inter}, we have
\begin{align*}
    &\|\bw_{t+1} - \bw'_{t+1} \|_2^2 \leq  4L^2t\eta ^2e + \frac{4e(1+t)\eta ^2}{n^2(n-1)}   \sum_{k=1}^{t}\left(F_k + \frac{4 L^2(n-1)}{ d}\right).
\end{align*}

According to Lemma \ref{lem:martgale_nonsmooth}, for $\delta \in (0,1)$, with probability at least $1-\delta$ the following inequality holds
\[
\|\bw_{t+1} - \bw'_{t+1} \|_2^2 \leq  4L^2t\eta ^2e + \frac{4e(1+t)}{n^2}  \eta ^2 \left(\frac{8L^2 \log (1/\delta)}{3} + 4L^2 \sqrt{\frac{2t\log (1/\delta)}{d}} + \frac{4 L^2t }{ d}\right).
\]
The proof is completed.
\end{proof}

\if 0
\begin{proof}[Proof of Lemma \ref{sta_rcd}, 1)]
Without loss of generality, we first assume that $S$ and $S'$ differ by the last example.

At the $t$-th iteration, the $i_t$-th coordinate is chosen with the probability $1/d$. We decompose $\bw_t $ similarly to that in~\citet{wang2021stability} as follows
\begin{align}
    &\|\bw_{t+1} - \bw'_{t+1} \|_2^2 \leq
    \|\bw_{t}-\eta_t \nabla_{i_t}R_S(\bw_t)\be_{i_t} - \bw'_{t} + \eta_t \nabla_{i_t}R_{S'}(\bw'_t)\be_{i_t} \|_2^2 \notag\\
    & \leq \sum_{j\neq i_t}|w_{t,j}-w'_{t,j}|^2 + |w_{t,i_t}- w_{t,i_t}' -\eta_t \nabla_{i_t}R_S(\bw_t) + \eta_t \nabla_{i_t}R_{S'}(\bw'_t)|^2 \notag\\
    & \leq \|\bw_{t} - \bw'_{t}\|_2^2 + \eta_t^2 |\nabla_{i_t}R_S(\bw_t) - \nabla_{i_t}R_{S'}(\bw'_t) |^2 - 2\eta_t\langle w_{t,i_t} - w_{t,i_t}', \notag\\
    &  \nabla_{i_t}R_S(\bw_t) -\nabla_{i_t}R_{S'}(\bw'_t)\rangle \notag \leq \|\bw_{t} - \bw'_{t}\|_2^2 + 4L^2\eta_t^2 - \frac{2\eta_t}{n(n-1)}\sum_{j=1}^{n-1}\langle w_{t,i_t} - w_{t,i_t}',  \notag\\ & \nabla_{i_t}\ell(\bw_t;z_n,z_j)- \nabla_{i_t}\ell(\bw'_t;z'_n,z_j) +
    \nabla_{i_t}\ell(\bw_t;z_j,z_n)- \nabla_{i_t}\ell(\bw'_t;z_j,z'_n)\rangle \label{eqa:rcd_nonsmooth},
\end{align}
where we have used the following inequality due to the convexity of $\ell$ in the last step
\begin{multline*}
    \langle w_{t,i_t} - w_{t,i_t}',\nabla_{i_t}R_S(\bw_t) -\nabla_{i_t}R_{S'}(\bw'_t)\rangle   = \frac{1}{n(n-1)} \Big( \sum_{i\neq j, i,j \neq n}\langle w_{t,i_t} - w_{t,i_t}', \\ \nabla_{i_t}\ell(\bw_t;z_i,z_j) - \nabla_{i_t}\ell(\bw'_t;z_i,z_j)\rangle
    + \sum_{j=1}^{n-1}\langle w_{t,i_t} - w_{t,i_t}',\nabla_{i_t}\ell(\bw_t;z_n,z_j)-  \nabla_{i_t}\ell(\bw'_t;z'_n,z_j) \\ +    \nabla_{i_t}\ell(\bw_t;z_j,z_n)- \nabla_{i_t}\ell(\bw'_t;z_j,z'_n)\rangle \Big)
    \geq \frac{1}{n(n-1)}  \sum_{j=1}^{n-1} \langle w_{t,i_t} -  w_{t,i_t}',\\ \nabla_{i_t}\ell(\bw_t;z_n,z_j)- \nabla_{i_t}\ell(\bw'_t;z'_n,z_j)+
    \nabla_{i_t}\ell(\bw_t;z_j,z_n)- \nabla_{i_t}\ell(\bw'_t;z_j,z'_n)\rangle .
\end{multline*}

Applying this inequality recursively, we have
\begin{multline*}
    \|\bw_{t+1} - \bw'_{t+1} \|_2^2 \leq 4L^2\sum_{k=1}^{t}\eta_k^2 +  \sum_{k=1}^{t} \sum_{j=1}^{n-1} \frac{2\eta_k}{n(n-1)}|\langle w_{k,i_k} - w_{k,i_k}',\nabla_{i_t}\ell(\bw_t;z_n,z_j)- \\ \nabla_{i_t}\ell(\bw'_t;z'_n,z_j)+
    \nabla_{i_t}\ell(\bw_t;z_j,z_n)- \nabla_{i_t}\ell(\bw'_t;z_j,z'_n)\rangle|\notag.
\end{multline*}

Therefore, for $L$-Lipschitz loss, we prove that RCD is $\beta_\phi$-uniformly stable with
\begin{multline} \label{eqa:rcd_pbound_nonsmooth}
\beta_\phi^2 = 4L^4\sum_{k=1}^{t}\eta_k^2 + L^2\max_{m\in[n]}\sum_{k=1}^{t} \sum_{j\in[n],j\neq m} \frac{2\eta_k}{n(n-1)}|\langle w_{k,i_k} - w_{k,i_k}',\nabla_{i_t}\ell(\bw_t;z_m,z_j)- \\ \nabla_{i_t}\ell(\bw'_t;z'_m,z_j)+  \nabla_{i_t}\ell(\bw_t;z_j,z_m)- \nabla_{i_t}\ell(\bw'_t;z_j,z'_m)\rangle|.\end{multline}

Since the probability of each coordinate to be chosen is $1/d$, we have
\begin{multline}
\ebb_{i_t} [ | \nabla_{i_t}\ell(\bw_t;z_n,z_j)-  \nabla_{i_t}\ell(\bw'_t;z'_n,z_j)+ \nabla_{i_t}\ell(\bw_t;z_j,z_n)-  \nabla_{i_t}\ell(\bw'_t;z_j,z'_n)|^2 ] \\
\leq \frac{4}{d}\!\sum_{k=1}^{d}  (\! | \nabla_{k} \ell(\bw_t;z_n,z_j)|^2 +|\nabla_{k} \ell(\bw'_t;z'_n,z_j) |^2   + |\nabla_{k} \ell (\bw_t;z_j,z_n)|^2+|\nabla_{k} \ell(\bw'_t;z_j,z'_n) |^2 \!) \\
\leq \frac{4}{d}(   \|  \nabla  \ell(\bw_t;z_n,z_j)\|^2_2 +\|\nabla \ell(\bw'_t;z'_n,z_j) \|^2_2   + \|\nabla \ell (\bw_t;z_j,z_n)\|^2_2+\|\nabla \ell(\bw'_t;z_j,z'_n) \|^2_2)  \\ \leq  \frac{16L^2}{d} \label{ine:rcd}.
\end{multline}
Based on Young's inequality, for $a, b \in \mathbb{R}, m, n>0$, we have $(ap) (b/p) \leq m^{-1}|ap|^{m}+n^{-1}|b/p|^{n}, \text { with } m^{-1}+n^{-1}=1$. According to Eq. \eqref{ine:rcd}, for $m=2, n=2$, 
we have
\begin{align}&\sum_{j=1}^{n-1} \ebb_{i_t} [ \langle w_{t,i_t} - w_{t,i_t}',\nabla_{i_t}\ell(\bw_t;z_n,z_j)- \nabla_{i_t}\ell(\bw'_t;z'_n,z_j)+ \nabla_{i_t}\ell(\bw_t;z_j,z_n)- \notag\\
&\nabla_{i_t}\ell(\bw'_t;z_j,z'_n)\rangle ] \leq \frac{(n-1)p^2}{2} \ebb_{i_t} \left[ |w_{t,i_t} - w_{t,i_t}' |^2 \right] +  \sum_{j=1}^{n-1}\frac{1 }{2p^2} \ebb_{i_t} [ | \nabla_{i_t}\ell(\bw_t;z_n,z_j)- \notag\\
&\nabla_{i_t}\ell(\bw'_t;z'_n,z_j)+ \nabla_{i_t}\ell(\bw_t;z_j,z_n)-  \nabla_{i_t}\ell(\bw'_t;z_j,z'_n)|^2 ]\\& 
\leq
\frac{(n-1)p^2}{2d}\sum_{k=1}^{d}|w_{t,k} - w_{t,k}' |^2+ \frac{1 }{2d p^2}\sum_{j=1}^{n-1} 16 L^2   \notag \\
&= \frac{(n-1)p^2}{2d}\|\bw_{t} - \bw_{t}' \|^2_2 + \frac{8(n-1)L^2}{d p^2} \label{eqa:rcd_nonsmooth1}, \end{align}
where the first equality is that $i_t$ is uniform over $[d]$.
\if 0
{\zsj can be removed.}Taking the expectation of both sides of Eq. \eqref{eqa:rcd_nonsmooth} (note $\bw_t$ is independent of $i_t$), we have
\begin{align*}
    &\ebb_{i_t}[\|\bw_{t+1} - \bw'_{t+1} \|_2^2]\\
    & \leq \|\bw_{t} - \bw'_{t}\|_2^2 +4 \eta_t^2 L^2-\frac{2\eta_t}{n(n-1)} \ebb_{i_t}\sum_{j=1}^{n-1}[\langle w_{t,i_t} - w_{t,i_t}',\nabla_{i_t}\ell(\bw_t;z_n,z_j) \\ & - \nabla_{i_t}\ell(\bw'_t;z'_n,z_j) + \nabla_{i_t}\ell(\bw_t;z_j,z_n)- \nabla_{i_t}\ell(\bw'_t;z_j,z'_n)\rangle]\\
    &\leq \|\bw_{t} - \bw'_{t}\|_2^2 + 4\eta_t^2 L^2 +\frac{p^2\eta_t}{nd}\|\bw_{t} - \bw_{t}' \|^2_2 +\frac{16L^2\eta_t}{ndp^2}.
\end{align*}
Solving this iteratively, we have %
\begin{align*}
    \ebb_{i_t}[\|\bw_{t+1} - \bw'_{t+1} \|_2^2]
    &\leq \Big(1+\frac{p^2\eta_t}{nd}\Big)\|\bw_{t} - \bw_{t}' \|^2_2 +4L^2\eta_t\Big(\eta_t+\frac{4}{ndp^2}\Big)\\
    & =  4L^2\eta\Big(\eta+\frac{4}{ndp^2}\Big) \sum_{k=0}^{t-1}(1+\frac{p^2\eta}{nd})^k\\
    & \leq 4L^2\eta\Big(\eta+\frac{4}{ndp^2}\Big) \Big(1+\frac{p^2\eta}{nd}\Big)^{t-1}\frac{nd}{p^2\eta},
\end{align*} 
for $\eta_t = \eta$.
Setting $p^2=\frac{nd}{\eta(t-1)}$, we have $(1+\frac{p^2\eta}{nd})^{t-1} \leq e$ according to inequality $(1+1/x)^x\leq e$.
Therefore, we have 
\begin{align*}
    &\ebb_{i_t}[\|\bw_{t+1} - \bw'_{t+1} \|_2^2]
    \leq 4eL^2\eta^2t\Big(1+\frac{4t}{n^2d^2}\Big).
\end{align*}
It then follows from the Jensen's inequality that 
\[\ebb_{i_t}[\|\bw_{t+1} - \bw'_{t+1} \|_2] \leq 2\sqrt{e}L\eta\sqrt{t}\Big(  1 + \frac{2\sqrt{t}}{dn} \Big).
\]
This gives the following expectation bound
\begin{align*}
    & \aaa = 2\sqrt{e}L^2\eta\sqrt{t}\Big(  1 + \frac{2\sqrt{t}}{dn} \Big).
\end{align*} \fi

To show that RCD satisfies Assumption \ref{ass:beta-theta}, we need a high probability bound on $\|\bw_{t+1} - \bw'_{t+1} \|_2^2$.
According to Eq. \eqref{eqa:rcd_nonsmooth} and Eq. \eqref{eqa:rcd_nonsmooth1}, we have
\begin{align*}
    &\|\bw_{t+1} - \bw'_{t+1} \|_2^2 \leq \|\bw_{t} - \bw'_{t}\|_2^2 + \eta_t^2 \|\nabla_{i_t}R_S(\bw_t)\be_{i_t}  - \nabla_{i_t}R_{S'}(\bw'_t)\be_{i_t} \|_2^2  \\
    & +  \frac{2\eta_t}{n(n-1)}F_t +  \frac{\eta_tp^2}{nd}\|\bw_{t} - \bw_{t}' \|^2_2 + \frac{16\eta_tL^2}{ndp^2}\\
    &\leq \Big(1 +\frac{\eta_tp^2}{nd} \Big) \|\bw_{t} - \bw_{t}' \|^2_2 +  4L^2\eta_t\Big(\eta_t+\frac{4}{ndp^2}\Big) + \frac{2\eta_t}{n(n-1)}F_t,
\end{align*}
where we use $\{F_t\}$ defined in Eq. \eqref{equ:martgale_nonsmooth}.
Then we get {\color{blue} Why one summation is from $k=0$ to $t-1$, and the other is from $k=1$ to $t$. Note you use the inequality $\sum_{k=1}^{t}\Big(1+ \frac{\eta p^2}{nd}\Big)^{t-k}F_k\leq \Big(1+ \frac{\eta p^2}{nd}\Big)^{t}\sum_{k=1}^{t}F_k$. This is not correct since one cannot guarantee $F_k$ is positive!} {\zsj I see. I will check it. }
\begin{align*}
    &\|\bw_{t+1} - \bw'_{t+1} \|_2^2 \\&= 4L^2\eta\Big(\eta+\frac{4}{ndp^2}\Big) \sum_{k=0}^{t-1}\Big(1+ \frac{\eta p^2}{nd}\Big)^k  + \frac{2\eta}{n(n-1)} \sum_{k=1}^{t}\Big(1+ \frac{\eta p^2}{nd}\Big)^{t-k}F_k \\
    &\leq  4eL^2\eta^2t\Big(1+\frac{4t}{n^2d^2}\Big) + \frac{2\eta}{n(n-1)}\Big(1+ \frac{\eta p^2}{nd}\Big)^{t-1}\sum_{k=1}^{t}F_k\\
    &\leq  4eL^2\eta^2t\Big(1+\frac{4t}{n^2d^2}\Big) + \frac{2e\eta}{n(n-1)}\sum_{k=1}^{t}F_k,\end{align*}
{\zsj I rearrange the the summation to make the inner term positive. \color{blue} Can you add explanations why the inner term is positive?} {\zsj I see. It can not be guaranteed. Maybe $F_t$ should be defined as $|F_t|$?} {\color{blue}I think we should not do this. The solution can be found in our JMLR submission!}
for $\eta_t=\eta$, $p^2=\frac{nd}{\eta(t-1)}$ and inequality $(1+1/x)^x\leq e$.
According to Lemma \ref{lem:martgale_nonsmooth}, for $\delta \in (0,1)$, with probability at least $1-\delta$ the following inequality holds
\begin{multline*}
 \|\bw_{t+1} - \bw'_{t+1} \|_2^2 \leq \\ 4eL^2\eta^2t\Big(\!1+\frac{4t}{n^2d^2}\!\Big) + \frac{2e\eta}{n(n-1)}\Big(\frac{16LR(n-1)\log (1/\delta)}{3} + 8LR(n-1)\sqrt{\frac{2t\log (1/\delta)}{d}}\Big).\end{multline*}
As $\delta \in (0,1)$, the following inequality holds by the Lipschitz continuity 
\begin{align*} \beta_\phi & \leq 2\sqrt{e}L^2\eta \sqrt{t}\Big(1+\frac{2\sqrt{t}}{dn}\Big) + L\sqrt{\frac{e\eta}{n}}\left(\frac{32LR\log (1/\delta)}{3} + 16LR\sqrt{\frac{2t\log (1/\delta)}{d}}\right)^{\frac{1}{2}}\\
&\leq 2\sqrt{e}L^2\eta \sqrt{t}\Big(1+\frac{2\sqrt{t}}{dn}\Big) + L\sqrt{\frac{e\eta}{n}}\left( {11LR} + 16LR\sqrt{\frac{2t}{d}}\right)^{\frac{1}{2}} \log (1/\delta).\end{align*}
The proof is completed.
\end{proof}
\fi

We plug the above stability bounds to Lemma \ref{thm:main} and derive the PAC-Bayes bounds in Theorem \ref{cor_rcd_smooth} for RCD.

\begin{proof}[Proof of Theorem \ref{cor_rcd_smooth} ]%
Based on Lemma \ref{sta_rcd} 1) and 2), with $A(S;\phi)= \bw_T$, Assumption \ref{ass:beta-theta} holds for RCD with convex non-smooth and convex smooth cases respectively. The proof is completed by combining the upper bound on $\beta_\phi$ with Lemma \ref{thm:main}.
\end{proof}
\fi 

\subsection*{Stochastic Gradient Descent \label{prf:sgd}}

We will prove that stability bounds of SGD meet the Assumption \ref{ass:beta-theta}. Based on this, we can derive the generalization bounds for SGD with smooth and non-smooth convex loss functions. To this aim, we introduce the following lemma to bound the summation of i.i.d events \citep{shalev2014understanding}.

\begin{lemma}[Chernoff's Bound\label{Chernoff's Bound}]
Let $Z_1, \ldots , Z_t$ be independent random variables taking values in $\{0, 1\}$. Let $Z = \sum_{k=1}^{t} Z_k$ and $\mu = \mathbb{E}[Z]$.  Then for any $\delta \in (0,1)$ with probability at least $1- \delta$ we have
\[
Z \leq \mu + \log(1/\delta) + \sqrt{2\mu \log(1/\delta)}.
\]
\end{lemma}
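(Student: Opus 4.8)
The plan is to derive this as a standard consequence of the multiplicative Chernoff bound for sums of independent Bernoulli variables; the statement is classical and could be cited directly from \citet{shalev2014understanding}, but a self-contained argument is short, so I would include it. Write $p_k := \mathbb{E}[Z_k]$, so that $\mu = \sum_{k=1}^{t} p_k$; the case $\mu = 0$ is trivial (then $Z = 0$ almost surely), so I assume $\mu > 0$ and set $a := \log(1/\delta) > 0$.

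First I would control the moment generating function. Since each $Z_k$ is $\{0,1\}$-valued, $\mathbb{E}[e^{\lambda Z_k}] = 1 + p_k(e^{\lambda}-1) \leq \exp\!\bigl(p_k(e^{\lambda}-1)\bigr)$ for every $\lambda \in \mathbb{R}$, whence by independence $\mathbb{E}\bigl[e^{\lambda(Z-\mu)}\bigr] \leq \exp\!\bigl(\mu(e^{\lambda}-1-\lambda)\bigr)$. Next I would use the elementary bound $e^{\lambda}-1-\lambda \leq \tfrac{\lambda^{2}/2}{1-\lambda/3}$ for $0 \leq \lambda < 3$ (which follows by comparing power-series coefficients, using $2\cdot 3^{k} \leq (k+2)!$), giving $\mathbb{E}\bigl[e^{\lambda(Z-\mu)}\bigr] \leq \exp\!\bigl(\tfrac{\mu\lambda^{2}/2}{1-\lambda/3}\bigr)$.

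Then I would apply Markov's inequality to $e^{\lambda(Z-\mu)}$ and optimize $\lambda$. For any $x > 0$, taking $\lambda = \tfrac{x}{\mu+x/3} \in (0,3)$ one has $1-\lambda/3 = \tfrac{\mu}{\mu+x/3}$, so $-\lambda x + \tfrac{\mu\lambda^{2}/2}{1-\lambda/3} = -\tfrac{x^{2}}{2(\mu+x/3)}$, which yields the Bernstein-type tail bound
\[
\Pr[\,Z-\mu \geq x\,] \;\leq\; \exp\!\Bigl(-\frac{x^{2}/2}{\mu + x/3}\Bigr).
\]
Finally, to match the stated form I would set $x := a + \sqrt{2\mu a}$; it then suffices to verify $\tfrac{x^{2}/2}{\mu+x/3} \geq a$, i.e.\ $x^{2} - \tfrac{2a}{3}x - 2a\mu \geq 0$, and substituting $x = a + \sqrt{2\mu a}$ the left-hand side equals $\tfrac{a^{2}}{3} + \tfrac{4a}{3}\sqrt{2\mu a} \geq 0$. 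Hence $\Pr\bigl[\,Z \geq \mu + a + \sqrt{2\mu a}\,\bigr] \leq e^{-a} = \delta$, which is exactly the claimed inequality (holding with probability at least $1-\delta$).

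I do not expect a genuine obstacle here: the only mildly technical points are the power-series inequality $e^{\lambda}-1-\lambda \leq \tfrac{\lambda^{2}/2}{1-\lambda/3}$ and the concluding quadratic comparison, both of which are routine. An alternative would be to start from the textbook estimate $\Pr[Z \geq (1+\varepsilon)\mu] \leq \exp\!\bigl(-\mu\varepsilon^{2}/(2+\varepsilon)\bigr)$ and plug in $\varepsilon\mu = a + \sqrt{2\mu a}$, but the Bernstein route above keeps the bookkeeping cleanest.
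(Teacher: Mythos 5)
The paper does not actually prove this lemma: it is stated as a citation to \citet{shalev2014understanding} and used as a black box, which matches the option you mention at the start of your proposal. Your self-contained derivation is nonetheless correct and complete. The MGF step $\ebb[e^{\lambda(Z-\mu)}]\le\exp\bigl(\mu(e^{\lambda}-1-\lambda)\bigr)$, the power-series comparison $e^{\lambda}-1-\lambda\le\frac{\lambda^{2}/2}{1-\lambda/3}$ (valid for $0\le\lambda<3$ since $2\cdot 3^{k-2}\le k!$ for $k\ge 2$), the choice $\lambda=\frac{x}{\mu+x/3}$ yielding $\Pr[Z-\mu\ge x]\le\exp\bigl(-\frac{x^{2}/2}{\mu+x/3}\bigr)$, and the concluding substitution $x=\log(1/\delta)+\sqrt{2\mu\log(1/\delta)}$, which makes $x^{2}-\frac{2a}{3}x-2a\mu=\frac{a^{2}}{3}+\frac{4a}{3}\sqrt{2\mu a}\ge 0$, are all verified correctly (including the trivial $\mu=0$ case you handle separately). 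So this is a sound proof of the cited lemma by a standard Bernstein-type argument; it is compatible with, though more detailed than, what the paper does, since the paper offers no proof of its own.
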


We first present the stability bounds for non-smooth and convex cases.

\begin{proof}[Proof of Lemma \ref{sta_sgd}, 1)]
Without loss of generality, we assume $S$ and $S'$ differ by the last example. Based on the Eq. (F.2) in~\citet{lei2021generalizationb}, we have 
\begin{align*}\|\bw_{t+1} - \bw_{t+1}'\|^2_2 &\leq 4L^2\eta^2 (1+p)^{\sum_{k=1}^{t} \ibb[i_k=n\mkern5mu\text{ or }\mkern5mu j_k =n] }\Big(  t+ p^{-1} \sum_{k=1}^{t} \ibb[i_k=n\mkern5mu\text{ or }\mkern5mu j_k =n] \Big).
\end{align*}
We set $p=1/\sum_{k=1}^{t} \ibb[i_k=n\mkern5mu\text{ or }\mkern5mu j_k =n]$ and use the inequality $(1+1/x)^x\leq e$ to get
\[
\|\bw_{t+1} - \bw_{t+1}'\|^2_2
\leq 4eL^2\eta^2\Big(  t+ \Big(\sum_{k=1}^{t} \ibb[i_k=n\mkern5mu\text{ or }\mkern5mu j_k =n]\Big)^2 \Big).
\]
It then follows that
\[
\|\bw_{t+1} - \bw_{t+1}'\|_2
\leq 2\sqrt{e}L\eta\Big(  \sqrt{t}+ \sum_{k=1}^{t} \ibb[i_k=n\mkern5mu\text{ or }\mkern5mu j_k =n]\Big).
\]
According to the Lipschitz continuity, we know that SGD is $\beta_\phi$-uniformly stable with
\begin{equation} \label{eqa:nonsmooth_bound}
\beta_\phi = 2\sqrt{e}L^2\eta\Big(  \sqrt{t}+ \max_{k\in[n]}\sum_{m=1}^{t}\ibb[i_m=k\mkern5mu\text{ or }\mkern5mu j_m =k]\Big).
\end{equation}
To bound $\beta_\phi$ with high probability, we set $\beta_{\phi,k}=2\sqrt{e}L^2\eta\big(\sqrt{t}+ \sum_{m=1}^{t} \ibb[i_m=k\mkern5mu\text{ or }\mkern5mu j_m =k]\big)$, and note that
$\mathbb{E} [ \mathbb{I}[i_m=k\mkern5mu\text{ or }\mkern5mu j_m =k]] \leq \mbox{Pr}\{i_m=k\} +\mbox{Pr}\{j_m=k\} = 2/n.$
Applying Lemma \ref{Chernoff's Bound} to the sum in Eq. \eqref{eqa:nonsmooth_bound}, with probability at least $1- \delta/n$, we get
\begin{align*}
\beta_{\phi,k} \leq 2\sqrt{e}L^2\eta (\sqrt{t}+ 2t/n + \log(n/\delta) + 2\sqrt{ t/n\log(n/\delta)}).
\end{align*}
Therefore, with probability at least $1-\delta$, the following holds simultaneously for all $k\in[n]$ by the union bound on probability
\[
\beta_{\phi,k} \leq 2\sqrt{e}L^2\eta (\sqrt{t}+ 2t/n + \log(n/\delta) + 2\sqrt{ t/n\log(n/\delta)}).
\]
For $\delta\in(0,1/n)$, this implies the following inequality with probability at least $1-\delta$
\begin{align}
\beta_\phi\leq 2\sqrt{e}L^2\eta (\sqrt{t}+ 2t/n + 2\log(1/\delta) + 2\sqrt{ 2 t/n \log(1/\delta)}). \label{beta_theta}
\end{align}
Finally, from Eq. \eqref{beta_theta} we know that SGD with the uniformly distributed hyperparameter $\phi$ meets Assumption \ref{ass:beta-theta} with
\[
\aaa = 2\sqrt{e}L^2\eta (\sqrt{t}+ 2t/n), \quad\bb=4\sqrt{e}L^2\eta (1+ \sqrt{2t/n}).
\]
The proof is completed.
\end{proof}

\if 0
{\yunwen would you please complete the proof (note i have changed the statement of the lemma)}
{\ylei yes}
Taking expectations on both sides of \ref{eq_non-smooth_1}, we have
\begin{align*}\mathbb{E}[\|\bw_{t+1} - \bw_{t+1}'\|^2_2]
&\leq (1 + p/n) \ebb[\|\bw_{t} - \bw_{t}'\|^2_2] + 4(1+1/(pn))\eta_t^2L^2, \end{align*}
where we use $\mathbb{E} [ \mathbb{I}[i_k=n]] =1/n.$

Solving the inequality above recursively, we have
\begin{align*}\mathbb{E}[\|\bw_{t+1} - \bw_{t+1}'\|^2_2]
&\leq 4L^2(1 + 1/(p n))\sum_{i=1}^{t}\eta_i^2(1+p/n)^{t-i}\\
&\leq 4L^2(1 + 1/(p n))\eta^2t(1+p/n)^t. \end{align*}

We select $p= n/t$ and use the standard inequality $(1 + 1/t)^t \leq e $ to have
\begin{align*}\mathbb{E}[\|\bw_{t+1} - \bw_{t+1}'\|^2_2] \leq 4L^2e t\eta^2(1+t/n^2). \end{align*}

Combining inequality above with the Lipschitz property, we get
\begin{align*}(\aaa)^2 \leq 4L^2e t\eta^2(1+t/n^2). \end{align*}

This gives the stability bound in expectation.

Applying above inequality to \ref{eq_sta_chern} and $\mu \leq t/n$, we derive the high probability bound. With probability at least $1- \delta$, we get
\begin{align*}&\|\bw_{t+1} - \bw_{t+1}'\|^2_2 \\
& \leq 4L^2\eta^2 (1+p)^{(t/n + \log(1/\delta) + \sqrt{2t/n\log(1/\delta)})}  (t +p^{-1}(t/n + \log(1/\delta) + \sqrt{2t/n\log(1/\delta)}) ). \end{align*}
We choose
\begin{align*} p = \frac{1}{t/n + \log(1/\delta) + \sqrt{2t/n\log(1/\delta)}} \end{align*} in the inequality above and obtain the following result.
With probability at least $1- \delta$, we get ($(1+1/x)^x \leq e $)
\begin{align*}&\|\bw_{t+1} - \bw_{t+1}'\|^2_2 \leq 4L^2\eta^2e (t+(t/n + \log(1/\delta) + \sqrt{2t/n\log(1/\delta)})^2)\end{align*}
and \begin{align*}\beta_\phi^2 \leq 4L^2\eta^2e (t+(t/n + \log(1/\delta) + \sqrt{2t/n\log(1/\delta)})^2) \end{align*}
due to the Lipschitz property.

Given the bounds above, we can show that the Assumption \ref{ass:beta-theta} holds.

For $\delta \in (0,1)$, we have
$\sqrt{\log(1/\delta)} \leq \log(1/\delta).$
Therefore, for $\bb=2L^2\eta\big(1+(2t/n)^{\frac{1}{2}}\big),$ we have
\begin{align*} \beta_\phi^2 &\leq (\aaa)^2 + 4L^4\eta^2 (\log(1/\delta) \big(1+(2t/n)^{\frac{1}{2}}\big) )^2 + 8L^4 e^{\frac{1}{2}} \eta^2(T+ (T/n)^2)^{\frac{1}{2}}\log(1/\delta)\big(1+(2t/n)^{\frac{1}{2}}\big)\\
& \leq (\aaa +\bb \log(1/\delta))^2 . \end{align*}
The proof is completed.

\fi

\begin{proof}[Proof of Lemma \ref{sta_sgd}, 2)]
By an intermediate result in the proof in Lemma C.3 of \citet{lei2020sharper}, for all $z, \tilde{z} \in \mathcal{Z}$ and $i_k, j_k\in [n], i_k\neq j_k$, with $L$-Lipschitz, we have
\[
\left|\ell\left(\mathbf{w}_{t+1} ; z, \tilde{z} \right)-\ell\left(\mathbf{w}_{t+1}; z, \tilde{z}\right)\right| \leq L \|\mathbf{w}_{t+1}-\mathbf{w}_{t+1}^{\prime} \|_{2}  \leq 2 L^{2} \sum_{k=1}^{t} \eta_{k} \mathbb{I}\left[i_{k}=n \text { or } j_{k}=n\right] .\]
From this inequality it follows that SGD is $\beta_\phi$-uniformly stable with
\begin{align}\label{convex:bound}
\beta_\phi = 2L^2 \max_{k\in[n] }\sum_{m=1}^{t}\eta_m  \mathbb{I}[i_m=k\mkern5mu\text{ or }\mkern5mu j_m =k].
\end{align}
Let $\beta_{\phi,k} = 2L^2 \sum_{m=1}^{t}\eta_j  \mathbb{I}[i_m=k\mkern5mu\text{ or }\mkern5mu j_m =k]$ for any $k\in[n]$.
\if 0
Taking expectation, we obtain \textcolor{red}{why do we need the expectation of $\beta_{\phi,k}$? - seems to me we don't (can skip the next eq.).}
\begin{equation}\label{stab-sgd-1}
 \ebb_{\phi\sim\pbb}[\beta_{\phi,k}]\leq \frac{4L^2}{n} \sum_{k=1}^{t}\eta_k,
\end{equation}
where $\mathbb{E} [ \mathbb{I}[i_m=k\mkern5mu\text{ or }\mkern5mu j_m =k]] \leq \mbox{Pr}\{i_m=k\} +\mbox{Pr}\{j_m=k\} = 2/n.$ \fi
It remains to show that the stability parameter of SGD meets  Assumption \ref{ass:beta-theta}.
Using Lemma \ref{Chernoff's Bound} with $Z_m = \mathbb{I}[i_m=k\mkern5mu\text{ or }\mkern5mu j_m =k]$ and noting that $\mathbb{E} [ \mathbb{I}[i_m=k\mkern5mu\text{ or }\mkern5mu j_m =k]] \leq  2/n$, we
get the following inequality with probability at least $1- \delta/n$ (taking $\eta_j=\eta$),
\begin{align}\label{convex:pbound}
\beta_{\phi,k}
&\leq 2L^2\eta (2t/n + \log(n/\delta) + 2\sqrt{  t/n\log(n/\delta)}).
\end{align}
By the union bound,  with probability at least $1-\delta$, Eq. \eqref{convex:pbound} holds for all $k\in[n]$. Therefore, with probability at least $1-\delta$, it gives
\begin{align*}
\beta_\phi & \leq 2L^2\eta (2t/n + \log(n/\delta) + 2\sqrt{ t/n\log(n/\delta)}) \leq 2L^2\eta (2t/n +  2\log(1/\delta) +\\
& 2\sqrt{2 t/n\log(1/\delta)}) \leq 4L^2\eta t/n + 4L^2\eta(1+ \sqrt{2t/n}) \log(1/\delta),
\end{align*}
where we have used $\delta\in(0,1/n)$ in the second inequality. Assumption \ref{ass:beta-theta} holds with 
\[\aaa = 4L^2\eta t/n, \bb=4L^2\eta(1+ \sqrt{2t/n}).\]
This completes the proof.
\end{proof}

\begin{proof}[Proof of Theorem \ref{cor_sgd_smooth} ]%
With $A(S;\phi)= \bw_T$, 
it follows from Lemma \ref{sta_sgd}, 1) and 2) that SGD with convex non-smooth and convex smooth loss functions satisfy Assumption \ref{ass:beta-theta} respectively. Applying the upper bound on $\beta_\phi$ to Lemma \ref{thm:main}, the result follows.\end{proof}

\if 0
\begin{proof}[Proof of Theorem \ref{cor_sgd_smooth}, 2)]

With $A(S;\phi)= \bw_T$, 
it is then clear that SGD with convex non-smooth loss functions also satisfies Assumption \ref{ass:beta-theta} based on Lemma \ref{sta_sgd}, 2). Therefore, applying the upper bound on $\beta_\phi$ to Lemma \ref{thm:main} gives the result. \end{proof}
\fi

\subsection*{Stochastic Gradient Descent Ascent \label{prf:sgda}}
Next, we prove the generalization bounds for SGDA with smooth and non-smooth convex-concave loss functions.

\begin{lemma}[Lemma C.1., \citep{lei2021stability}  \label{lem:sgda}]
Let  $\ell$  be  convex-concave.
\begin{itemize}
    \item[1)] If Assumption \ref{def:sgda:lip} holds, then
\begin{multline*}
    \left\|\!\binom{\bw-\eta \nabla_{\bw} \ell(\bw, \bv)}{\bv+\eta \nabla_{\bv} \ell(\bw, \bv)}\!-\!\binom{\bw^{\prime}-\eta \nabla_{\bw} \ell (\bw^{\prime}, \bv^{\prime} )}{\bv^{\prime}+\eta \nabla_{\bv} \ell (\bw^{\prime}, \bv^{\prime} )}\!\right\|_{2}^{2} \!\leq\! \left\|\!\binom{\bw-\bw^{\prime}}{\bv-\bv^{\prime}}\!\right\|_{2}^{2}+8 L^{2} \eta^{2}.
\end{multline*}
\item[2)] If Assumption \ref{sgda:smooth} holds, then  
\begin{multline*}
    \!\left\|\!\binom{\!\bw-\eta \nabla_{\bw} \ell(\bw, \bv)\!}{\!\bv+\eta \nabla_{\bv} \ell(\bw, \bv)\!}\!-\!\binom{\!\bw^{\prime}-\eta \nabla_{\bw} \ell\left(\!\bw^{\prime}, \bv^{\prime}\!\right)\!}{\!\bv^{\prime}+\eta \nabla_{\bv} \ell\left(\!\bw^{\prime}, \bv^{\prime}\!\right)\!}\!\right\|_{2}^{2} \!\leq\!\left(\!1 \!+\!\alpha^{2} \eta^{2}\!\right)\!\left\|\!\binom{\bw-\bw^{\prime}}{\bv-\bv^{\prime}}\!\right\|_{2}^{2}\!.
\end{multline*}
\end{itemize} \end{lemma}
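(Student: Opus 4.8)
The plan is to treat both parts simultaneously by viewing one SGDA step as the map $u\mapsto u-\eta F(u)$ on $\wcal\times\vcal$, where $u=(\bw,\bv)$ and $F(\bw,\bv):=\bigl(\nabla_{\bw}\ell(\bw,\bv),\,-\nabla_{\bv}\ell(\bw,\bv)\bigr)$ is the saddle vector field of $\ell$; the sign flip on the $\bv$-block is precisely what converts the ascent step $\bv\mapsto\bv+\eta\nabla_{\bv}\ell$ into the subtraction $u-\eta F(u)$. Writing $\Delta:=u-u'$ and $g:=F(u)-F(u')$ and expanding the squared Euclidean norm,
\[
\bigl\|(u-\eta F(u))-(u'-\eta F(u'))\bigr\|_2^2=\|\Delta\|_2^2-2\eta\langle\Delta,g\rangle+\eta^2\|g\|_2^2 ,
\]
so it suffices to (i) show the cross term is nonnegative, hence droppable, and (ii) bound $\|g\|_2^2$ in each of the two regimes.

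For (i) I would establish that $F$ is monotone, i.e.\ $\langle\Delta,g\rangle\ge0$, using only Assumption~\ref{defn_strongly_convex_sgda}. The argument is the standard ``four first-order inequalities'': convexity of $\bw\mapsto\ell(\bw,\bv)$ gives $\ell(\bw',\bv)\ge\ell(\bw,\bv)+\langle\nabla_{\bw}\ell(\bw,\bv),\bw'-\bw\rangle$, convexity of $\bw\mapsto\ell(\bw,\bv')$ gives the primed counterpart, concavity of $\bv\mapsto\ell(\bw,\bv)$ gives $\ell(\bw,\bv')\le\ell(\bw,\bv)+\langle\nabla_{\bv}\ell(\bw,\bv),\bv'-\bv\rangle$, and concavity of $\bv\mapsto\ell(\bw',\bv)$ gives its primed counterpart. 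Summing the two convexity inequalities produces a lower bound, summing the two concavity inequalities produces an upper bound, both on the same mixed difference $\ell(\bw',\bv)+\ell(\bw,\bv')-\ell(\bw,\bv)-\ell(\bw',\bv')$; comparing them yields $\langle\nabla_{\bw}\ell(\bw,\bv)-\nabla_{\bw}\ell(\bw',\bv'),\bw-\bw'\rangle-\langle\nabla_{\bv}\ell(\bw,\bv)-\nabla_{\bv}\ell(\bw',\bv'),\bv-\bv'\rangle\ge0$, which is exactly $\langle\Delta,g\rangle\ge0$. Consequently $-2\eta\langle\Delta,g\rangle\le0$ in the display above.

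For (ii), in part 1) I would split $\|g\|_2^2=\|\nabla_{\bw}\ell(\bw,\bv)-\nabla_{\bw}\ell(\bw',\bv')\|_2^2+\|\nabla_{\bv}\ell(\bw,\bv)-\nabla_{\bv}\ell(\bw',\bv')\|_2^2$ and apply the triangle inequality together with Assumption~\ref{def:sgda:lip} (each gradient block has norm at most $L$), obtaining $\|g\|_2^2\le(2L)^2+(2L)^2=8L^2$; substituting into the display and discarding the nonpositive cross term gives $\|\Delta\|_2^2+8L^2\eta^2$, i.e.\ the stated bound. In part 2) I would instead observe that $\|g\|_2$ equals the norm of the stacked gradient difference (the sign flip is norm-preserving), so Assumption~\ref{sgda:smooth} gives directly $\|g\|_2\le\alpha\|\Delta\|_2$; substituting and again discarding the cross term gives $\|\Delta\|_2^2+\alpha^2\eta^2\|\Delta\|_2^2=(1+\alpha^2\eta^2)\|\Delta\|_2^2$. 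The only step that needs real care is the monotonicity of $F$ in (i), specifically keeping track of which variable is held fixed in each of the four inequalities; the rest is a single square expansion followed by a direct appeal to the relevant assumption.
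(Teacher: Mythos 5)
Your proof is correct. The paper does not prove this lemma itself but instead cites it directly from Lei et al.\ (2021), Lemma~C.1; your argument — rewriting the SGDA step as $u\mapsto u-\eta F(u)$ for the saddle vector field $F$, establishing monotonicity of $F$ from convexity-concavity via the four first-order inequalities, and then bounding the quadratic remainder $\eta^2\|F(u)-F(u')\|_2^2$ by $8L^2\eta^2$ under Lipschitzness or by $\alpha^2\eta^2\|\Delta\|_2^2$ under smoothness — is the standard and essentially the same derivation that appears in that reference, so nothing is missing.
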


\begin{proof}[Proof of Lemma \ref{lem:sta_sgda}, 1)]
We assume $S$ and $S'$ differ by the last example for simplicity.
Based on the Lemma \ref{lem:sgda} 1), 
for $i_{t}\neq n, j_{t}\neq n,  \text { and }  i_{t}\neq j_{t}$, we have
\begin{equation}
\left\|\left(\arraycolsep=1.4pt\def\arraystretch{0.5}\begin{array}{c}
\mathbf{w}_{t+1}-\mathbf{w}_{t+1}^{\prime} \\
\mathbf{v}_{t+1}-\mathbf{v}_{t+1}^{\prime}
\end{array}\right)\right\|_{2}^{2} \leq
 \left\|\left(\arraycolsep=1.4pt\def\arraystretch{0.5}\begin{array}{c}
\bw_{t}-\bw_{t}^{\prime} \\
\bv_{t}-\bv_{t}^{\prime}
\end{array}\right)\right\|_{2}^{2} + 8L^2\eta_t^2. \label{eq:sgda_ns}\end{equation}

When $i_{t}= n \text { or } j_{t}= n,i_{t}\neq j_{t}$, we have
\begin{align}
&\Big\|\!\Big(\!
\arraycolsep=1.4pt\def\arraystretch{0.5}\begin{array}{c}
\bw_{t+1}-\bw_{t+1}' \\
\bv_{t+1}-\bv_{t+1}'
\end{array}\!\Big)\!\Big\|_{2}^{2}  \!\leq\!\Big\|\!\Big(\!\arraycolsep=1.4pt\def\arraystretch{0.5}\begin{array}{c}
\bw_{t}-\eta_{t} \nabla_{\bw} \ell\bw_{t}, \bv_{t} ; z_{i_t},{z}_{j_t} )-\bw_{t}'+\eta_{t} \nabla_{\bw} \ell\bw_{t}', \bv_{t}' ; z'_{i_t},{z}_{j_t}' ) \\
\bv_{t}+\eta_{t} \nabla_{\bv} \ell(\bw_{t}, \bv_{t} ; z_{i_t},{z}_{j_t} )-\bv_{t}'-\eta_{t} \nabla_{\bv} \ell(\bw_{t}', \bv_{t}' ; z_{i_t}',{z}_{j_t}' )
\end{array}\!\Big)\!\Big\|_{2}^{2}\notag \\
& \!\leq\!(1+p)\!\Big\|\!\Big(\!\arraycolsep=1.4pt\def\arraystretch{0.4}\begin{array}{c}
\bw_{t}-\bw_{t}' \\
\bv_{t}-\bv_{t}'
\end{array}\!\Big)\!\Big\|_{2}^{2}+(1+ \frac{1}{p}) \eta_{t}^{2}\Big\|\Big(\arraycolsep=1.1pt\def\arraystretch{0.5}\begin{array}{c}
\nabla_{\bw} \ell(\bw_{t}, \bv_{t} ; z_{i_t},{z}_{j_t} )-\nabla_{\bw} \ell(\bw_{t}', \bv_{t}' ; z_{i_t}',{z}_{j_t}' ) \\
\nabla_{\bv} \ell(\bw_{t}, \bv_{t} ; z_{i_t},{z}_{j_t})-\nabla_{\bv} \ell(\bw_{t}', \bv_{t}' ; z_{i_t}',{z}_{j_t}' )
\end{array}\big)\Big\|_{2}^{2}\notag \\ &\leq(1+p)\Big\|\Big(\arraycolsep=1.4pt\def\arraystretch{0.5}\begin{array}{c}
\bw_{t}-\bw_{t}' \\
\bv_{t}-\bv_{t}'
\end{array}\Big)\Big\|_{2}^{2}+8(1+1 / p) \eta_{t}^{2}L^2,\label{eq:sgda:gen1}
\end{align}
where in the second inequality, we use that, for any $p>0$, we have $(c+d)^{2} \leq(1+p) c^{2}+(1+1 / p) d^{2}$. Combining Eq. \eqref{eq:sgda_ns} and Eq. \eqref{eq:sgda:gen1}, this gives
\begin{multline*}
\left\|\left(\arraycolsep=1.4pt\def\arraystretch{0.5}\begin{array}{c}
\bw_{t+1}-\bw_{t+1}' \\
\bv_{t+1}-\bv_{t+1}'
\end{array}\right)\right\|_{2}^{2} \leq\left( \left\|\left(\arraycolsep=1.4pt\def\arraystretch{0.5}\begin{array}{c}
\bw_{t}-\bw_{t}' \\
\bv_{t}-\bv_{t}'
\end{array}\right)\right\|_{2}^{2} + 8L^2\eta_t^2\right)\ibb[i_t\neq n\mkern5mu \text{and} \mkern5mu j_t \neq n]+ \\ \left((1+p)\left\|\left(\arraycolsep=1.4pt\def\arraystretch{0.5}\begin{array}{c}
\bw_{t}-\bw_{t}' \\
\bv_{t}-\bv_{t}'
\end{array}\right)\right\|_{2}^{2}+8(1+1 / p) \eta_{t}^{2} L^{2}\right) \ibb[i_t= n\mkern5mu \text{ or } \mkern5mu j_t = n]\\
\leq \left(1+p  \ibb[i_t= n\mkern5mu \text{ or } \mkern5mu j_t = n]\right)\left\|\left(\arraycolsep=1.4pt\def\arraystretch{0.5}\begin{array}{c}
\mathbf{w}_{t}-\mathbf{w}_{t}^{\prime} \\
\mathbf{v}_{t}-\mathbf{v}_{t}^{\prime}
\end{array}\right)\right\|_{2}^{2}+8 L^{2} \eta_{t}^{2}\left(1+ \ibb[i_t= n\mkern5mu \text{ or } \mkern5mu j_t = n] / p\right).\end{multline*}

We apply the above inequality recursively and follow the analysis of Eq. (C.4) in~\citet{lei2021stability}:
\begin{align*}\label{sgda:smooth}
&\left\|\left(\arraycolsep=1.4pt\def\arraystretch{0.5}\begin{array}{c}
\mathbf{w}_{t+1}-\mathbf{w}_{t+1}^{\prime} \\
\mathbf{v}_{t+1}-\mathbf{v}_{t+1}^{\prime}
\end{array}\right)\right\|_{2}^{2} \\ \leq& 8 L^{2} \eta^{2} \sum_{k=1}^{t}\left(1+\ibb[i_k= n\mkern5mu \text{ or } \mkern5mu j_k = n] / p\right) \prod_{r=k+1}^{t}\left(1+p \ibb[i_r= n\mkern5mu \text{ or } \mkern5mu j_r = n]\right) \\
= &8 L^{2} \eta^{2} \sum_{k=1}^{t}\left(1+\ibb[i_k= n\mkern5mu \text{ or } \mkern5mu j_k = n] / p\right) \prod_{r=k+1}^{t}(1+p)^{\ibb[i_r= n\mkern5mu \text{ or } \mkern5mu j_r = n]}\\
\leq & 8 L^{2} \eta^{2}(1+p)^{\sum_{k=1}^{t} \ibb\left[i_{k}=n \text { or } j_{k}=n\right]}\left(t+\sum_{k=1}^{t} \ibb\left[i_{k}=n \text { or } j_{k}=n\right] / p\right),\end{align*} where we assume the fixed step sizes.  We set $p=1/\sum_{k=1}^{t}\ibb\left[i_{k}=n \text { or } j_{k}=n\right]$ and use the inequality $(1+1/x)^x\leq e$ to derive
\[
\left\|\left(\arraycolsep=1.4pt\def\arraystretch{0.5}\begin{array}{c}
\mathbf{w}_{t+1}-\mathbf{w}_{t+1}^{\prime} \\
\mathbf{v}_{t+1}-\mathbf{v}_{t+1}^{\prime}
\end{array}\right)\right\|_{2}^{2}
 \leq 8e L^{2} \eta^{2} \left(  t+ \Big(\sum_{k=1}^{t} \ibb\left[i_{k}=n \text { or } j_{k}=n\right]\Big)^2  \right).
\]
It then follows that
\[
\left\|\left(\arraycolsep=1.4pt\def\arraystretch{0.5}\begin{array}{c}
\mathbf{w}_{t+1}-\mathbf{w}_{t+1}^{\prime} \\
\mathbf{v}_{t+1}-\mathbf{v}_{t+1}^{\prime}
\end{array}\right)\right\|_{2}
 \leq \sqrt{8e} L \eta \left(  \sqrt{t}+ \sum_{k=1}^{t} \ibb\left[i_{k}=n \text { or } j_{k}=n\right] \right).
\]

By $L$-Lipschitzness, we have
\begin{multline*}
|\ell\left(A_{\mathbf{w}}(S;\phi), A_{\mathbf{v}}(S;\phi) , z, \tilde{z} \right)-\ell\left(A_{\mathbf{w}}\left(S^{\prime};\phi\right), A_{\mathbf{v}}\left(S^{\prime};\phi\right) , z, \tilde{z}\right)| \\
\leq 2\sqrt{2e}L^2\eta\Big(  \sqrt{t}+ \max_{k\in[n]}\sum_{r=1}^{t} \ibb\left[i_{r}=k \text { or } j_{r}=k\right]\Big).
\end{multline*}
Therefore, we know that SGDA is $\beta_\phi$-uniformly stable with 
\begin{equation} \label{sgda:eqa:nonsmooth_bound}
\beta_\phi =  2\sqrt{2e}L^2\eta\Big(  \sqrt{t}+ \max_{k\in[n]}\sum_{r=1}^{t} \ibb\left[i_{r}=k \text { or } j_{r}=k\right]\Big).
\end{equation}
For simplicity, let $\beta_{\phi,k}= 2\sqrt{2e}L^2\eta\big(\sqrt{t}+ \sum_{r=1}^{t} \ibb\left[i_{r}=n \text { or } j_{r}=n\right]\big)$.
Applying Lemma~\ref{Chernoff's Bound} to Eq. \eqref{sgda:eqa:nonsmooth_bound}, with probability at least $1- \delta/n$, we have 
\begin{align*}
\beta_{\phi,k} \leq2\sqrt{2e}L^2\eta (\sqrt{t}+ 2t/n + \log(n/\delta) + 2\sqrt{ t/n\log(n/\delta)}).
\end{align*}
With probability at least $1-\delta$, the following holds for all $k\in[n]$
\[
\beta_{\phi,k} \leq 2\sqrt{2e}L^2\eta (\sqrt{t}+ 2t/n + \log(n/\delta) + 2\sqrt{ t/n\log(n/\delta)}).
\]
This suggests the following inequality with probability at least $1-\delta$
\[
\beta_\phi\leq2\sqrt{2e}L^2\eta (\sqrt{t}+ 2t/n + 2\log(1/\delta) + 2\sqrt{2t/n\log(1/\delta)}).
\]
This suggests that SGDA with uniform distribution and the hyperparameter $\phi$ meets Assumption \ref{ass:beta-theta} with
\[
\aaa = 2\sqrt{ e}L^2\eta (\sqrt{t}+ 2t/n), \quad\bb= 4\sqrt{2e}L^2\eta (1+\sqrt{ 2t/n}).
\]
The proof is completed.
\end{proof}

\begin{proof}[Proof of Lemma \ref{lem:sta_sgda}, 2)]
Without loss of generality, we first assume $S$ and $S'$ differ by the last example.
Based on Lemma \ref{lem:sgda} 2), if $i_{t}\neq n \text { and } j_{t}\neq n$, we have
\begin{multline*}  \Big\|\Big(\arraycolsep=1.4pt\def\arraystretch{0.5}\begin{array}{c}
\mathbf{w}_{t+1}-\mathbf{w}_{t+1}^{\prime} \\
\mathbf{v}_{t+1}-\mathbf{v}_{t+1}^{\prime}
\end{array}\Big)\Big\|_{2}^{2} =\Big\|\Big(\arraycolsep=1.4pt\def\arraystretch{0.5}\begin{array}{c}
\bw_{t}-\eta_t \nabla_{\bw} \ell(\bw_{t}, \bv_{t};z_{i_t},{z}_{j_t}) \\
\bv_{t}+\eta_t \nabla_{\bv}\ell(\bw_{t}, \bv_{t};z_{i_t},{z}_{j_t})
\end{array}\Big)- \\   \Big(\arraycolsep=1.4pt\def\arraystretch{0.5}\begin{array}{c}
\bw_{t}^{\prime}-\eta_t \nabla_{\bw} \ell\left(\bw_{t}^{\prime}, \bv_{t}^{\prime};z_{i_t},{z}_{j_t}\right) \\
\bv_{t}^{\prime}+\eta_t \nabla_{\bv} \ell\left(\bw_{t}^{\prime}, \bv_{t}^{\prime};z_{i_t},{z}_{j_t}\right)
\end{array}\Big)\Big\|_{2}^{2}  \leq
(1 + \alpha^2 \eta_t^2)\Big\|\Big(\arraycolsep=1.4pt\def\arraystretch{0.5}\begin{array}{c}
\bw_{t}-\bw_{t}^{\prime} \\
\bv_{t}-\bv_{t}^{\prime}
\end{array}\Big)\Big\|_{2}^{2}.\end{multline*}
 When $i_{t} = n \text { or } j_{t} =  n$, we consider Eq. \eqref{eq:sgda:gen1}. Combining these two cases, we get
\begin{multline}
\Big\|\Big(\arraycolsep=1.4pt\def\arraystretch{0.5}\begin{array}{c}
\bw_{t+1}-\bw_{t+1}' \\
\bv_{t+1}-\bv_{t+1}'
\end{array}\Big)\Big\|_{2}^{2} \leq\Big(1+\alpha^{2} \eta_{t}^{2}\Big)\Big\|\Big(\arraycolsep=1.4pt\def\arraystretch{0.5}\begin{array}{c}
\bw_{t}-\bw_{t}' \\
\bv_{t}-\bv_{t}'
\end{array}\Big)\Big\|_{2}^{2} \ibb[i_t\neq n\mkern5mu \text{and} \mkern5mu j_t \neq n]+ \\ \Big((1+p)\Big\|\Big(\arraycolsep=1.4pt\def\arraystretch{0.5}\begin{array}{c}
\bw_{t}-\bw_{t}' \\
\bv_{t}-\bv_{t}'
\end{array}\Big)\Big\|_{2}^{2}+8(1+\frac{1}{p}) \eta_{t}^{2} L^{2}\Big) \ibb[i_t= n\mkern1mu \text{ or } \mkern1mu j_t = n] \\  \leq  \big(1+\alpha^{2} \eta_{t}^{2} p\ibb[i_t= n\mkern1mu \text{ or } \mkern1mu j_t = n]\big)\Big\|\Big(\arraycolsep=1.4pt\def\arraystretch{0.5}\begin{array}{c}
\bw_{t}-\bw_{t}' \\
\bv_{t}-\bv_{t}'
\end{array}\Big)\Big\|_{2}^{2}  + 8(1+\frac{1}{p}) \eta_{t}^{2} L^{2}\ibb[i_t= n\mkern1mu \text{ or } \mkern1mu j_t = n]. \label{eq:sgda_1} \end{multline}

We apply the above Eq. \eqref{eq:sgda_1} recursively, following the proof of Theorem 2(d) in~\citet{lei2021stability}, 
\begin{align*}
& \Bigg\|\Big(\arraycolsep=1.4pt\def\arraystretch{0.5}\begin{array}{c}
\bw_{t+1}-\bw_{t+1}' \\
\bv_{t+1}-\bv_{t+1}'
\end{array}\Big)\Bigg\|_{2}^{2} \\
 \leq & 8(1+1 / p) L^{2} \sum_{k=1}^{t} \eta_{k}^{2} \ibb[i_k= n\mkern5mu \text{ or } \mkern5mu j_k = n]\prod_{r=k+1}^{t}\Big(1+\alpha^{2} \eta_{r}^{2}+p \ibb[i_r= n\mkern5mu \text{ or } \mkern5mu j_r = n]\Big) \\
 \leq &  8(1+ \frac{1}{p}) L^{2} \eta^{2} \sum_{k=1}^{t} \ibb[i_k= n\mkern1mu \text{ or } \mkern1mu j_k = n]\prod_{r=k+1}^{t} (1+\alpha^{2} \eta_{r}^{2} ) \prod_{r=k+1}^{t} (1+p \ibb[i_r= n\mkern1mu \text{ or } \mkern1mu j_r = n] ) \\
= &8(1+1 / p) L^{2} \eta^{2} \sum_{k=1}^{t} \ibb[i_k= n\mkern5mu \text{ or } \mkern5mu j_k = n] \prod_{r=k+1}^{t}\Big(1+\alpha^{2} \eta_{r}^{2}\Big) \prod_{r=k+1}^{t}(1+p)^{\ibb[i_r= n\mkern5mu \text{ or } \mkern5mu j_r = n]}
\\
\leq  &  8(1+1 / p) L^{2} \eta^{2} \prod_{k=1}^{t}\Big(1+\alpha^{2} \eta_{k}^{2}\Big) \prod_{k=1}^{t}(1+p)^{\ibb\left[i_{k}=n \text { or } j_{k}=n\right]} \sum_{k=1}^{t} \ibb\left[i_{k}=n \text { or } j_{k}=n\right] \\
\leq & 8(1+1 / p) L^{2} \eta^{2} \exp\Big(\alpha^2 \sum_{k=1}^{t} \eta_{k}^{2}\Big)  (1+p)^{ \sum_{k=1}^{t} \ibb\left[i_{k}=n \text { or } j_{k}=n\right]} \sum_{k=1}^{t} \ibb\left[i_{k}=n \text { or } j_{k}=n\right],
\end{align*}
where we assume fixed step sizes and use $ 1+ x \leq e^x$ in the last inequality. We set $p=1/\sum_{k=1}^{t} \ibb[i_k=n\mkern5mu\text{ or }\mkern5mu j_k =n]$ and use the inequality $(1+1/x)^x\leq e$ to derive 
\[ \left\|\left(\arraycolsep=1.4pt\def\arraystretch{0.5}\begin{array}{c}
\mathbf{w}_{t+1}-\mathbf{w}_{t+1}^{\prime} \\
\mathbf{v}_{t+1}-\mathbf{v}_{t+1}^{\prime}
\end{array}\right)\right\|_{2}^{2}
\leq 8e\left(1 +  \sum_{k=1}^{t} \ibb\left[i_{k}=n \text { or } j_{k}=n\right]  \right)^2 L^{2} \eta^{2} \exp\left( \alpha^2 \sum_{k=1}^{t} \eta_{k}^{2}\right)  .
\]
Based on the $L$-Lipschitzness and the above inequality, for any two neighboring datasets $ S, S'\in \mathcal{Z}^n,\forall z , \tilde{z} \in \mathcal{Z},$ we have 
\begin{multline*}
|\ell\left(A_{\mathbf{w}}(S;\phi), A_{\mathbf{v}}(S;\phi) , z, \tilde{z} \right)-\ell\left(A_{\mathbf{w}}\left(S^{\prime};\phi\right), A_{\mathbf{v}}\left(S^{\prime};\phi\right) , z, \tilde{z}\right)| \\
\leq 4\sqrt{e} L^2\eta \exp(\frac{1}{2}\alpha^2 t\eta^2) \max_{k\in[n]} \left(1 +  \sum_{r=1}^{t} \ibb\left[i_{r}=k \text { or } j_{r}=k\right]\right).
\end{multline*}
Therefore, we know that SGDA is $\beta_\phi$-uniformly stable with
\begin{align*}
\beta_\phi = 4\sqrt{ e} L^2\eta \exp(\frac{1}{2}\alpha^2 t\eta^2) \max_{k\in[n]} \left(1 +  \sum_{r=1}^{t}  \ibb\left[i_{r}=k \text { or } j_{r}=k\right]\right).
\end{align*}
For simplicity, let
$\beta_{\phi,k} = 4\sqrt{ e} L^2\eta \exp(\frac{1}{2}\alpha^2 t\eta^2) \left( 1+\sum_{r=1}^{t}   \ibb\left[i_{r}=k \text { or } j_{r}=k\right] \right)$ for any $k\in[n]$.
Taking the expectation over both sides of above inequality, we derive
\begin{equation}\label{stab-sgda-1}
\aaa = 4\sqrt{ e} L^2\eta \exp(\frac{1}{2}\alpha^2 t\eta^2)(1+2t/n),
\end{equation}
where $\mathbb{E} [\ibb\left[i_{r}=k \text { or } j_{r}=k\right]] \leq 2/n.$
Applying $Z_r = \ibb\left[i_{r}=k \text { or } j_{r}=k\right]$ in Lemma~\ref{Chernoff's Bound}, we
get the following inequality with probability at least $1- \delta/n$
\begin{align}\label{convex:pbound_sgda}
\beta_{\phi,k}
&\leq 4\sqrt{ e} L^2\eta \exp(\frac{1}{2}\alpha^2 t\eta^2)(1+2t/n + \log(n/\delta) + 2\sqrt{ t/n\log(n/\delta)}).
\end{align}
By the union bound in probability,  with probability at least $1-\delta$, Eq. \eqref{convex:pbound_sgda} holds for all $k\in[n]$. Therefore, with probability at least $1-\delta$
\begin{align*}
\beta_\phi & \leq 4\sqrt{ e} L^2\eta \exp(\frac{1}{2}\alpha^2 t\eta^2)(1+2t/n + \log(n/\delta) + 2\sqrt{ t/n\log(n/\delta)})\\
&\leq 4\sqrt{ e} L^2\eta \exp(\frac{1}{2}\alpha^2 t\eta^2)(1+2t/n + 2\log(1/\delta) + 2\sqrt{ 2t/n\log(1/\delta)})\\
& \leq 4\sqrt{ e} L^2\eta \exp(\frac{1}{2}\alpha^2 t\eta^2)(1+2t/n) + 8\sqrt{ e} L^2\eta \exp(\frac{1}{2}\alpha^2 t\eta^2)(1+\sqrt{2t/n} )\log(1/\delta) \\
& \leq \aaa + 8\sqrt{ e} L^2\eta \exp(\frac{1}{2}\alpha^2 t\eta^2)(1+\sqrt{2t/n}) \log(1/\delta) ,
\end{align*}
where we have used $\delta\in(0,1/n)$ in the second inequality, and Eq. \eqref{stab-sgda-1} in the last inequality.
Therefore, Assumption \ref{ass:beta-theta} holds with $\aaa = 4\sqrt{ e} L^2\eta \exp(\frac{1}{2}\alpha^2 t\eta^2)(1+2t/n)$ and $\bb= 8\sqrt{ e} L^2\eta \exp(\frac{1}{2}\alpha^2 t\eta^2)(1+\sqrt{2t/n})$. The proof is completed. \end{proof}

Based on the above lemma, we are ready to develop generalization bounds in Theorem \ref{cor_sgda_smooth} for SGDA with smooth and non-smooth loss functions.

\begin{proof}[Proof of Theorem \ref{cor_sgda_smooth}]%


With  $A(S;\phi) =\left(A_{\mathrm{w}}(S;\phi), A_{\mathrm{v}}(S;\phi)\right)$, 
based on Lemma \ref{lem:sta_sgda}, 1) and 2), SGDA with convex-concave non-smooth and convex-concave smooth loss functions satisfy Assumption \ref{ass:beta-theta}  respectively. Applying the upper bounds on $\beta_\phi$ to Lemma \ref{thm:main}, we derive the result.\end{proof}

\if 0
\begin{proof}[Proof of Theorem \ref{cor_sgda_smooth}, 1)]

With $A(S;\phi) =\left(A_{\mathrm{w}}(S;\phi), A_{\mathrm{v}}(S;\phi)\right),$
it is then clear that SGDA with convex-concave non-smooth loss functions also satisfies Assumption \ref{ass:beta-theta} based on Lemma \ref{lem:sta_sgda}, 2). Therefore, applying the upper bound on $\beta_\phi$ to Lemma \ref{thm:main} derives the result. \end{proof}
\fi

\setlength{\bibsep}{0.00001cm}
\small

\bibliographystyle{apalike}

\begin{thebibliography}{}

\bibitem[Agarwal and Niyogi, 2009]{agarwal2009generalization}
Agarwal, S. and Niyogi, P. (2009).
\newblock Generalization bounds for ranking algorithms via algorithmic stability.
\newblock {\em Journal of Machine Learning Research}, 10(Feb):441--474.

\bibitem[Allen-Zhu et~al., 2016]{allen2016even}
Allen-Zhu, Z., Qu, Z., Richt{\'a}rik, P., and Yuan, Y. (2016).
\newblock Even faster accelerated coordinate descent using non-uniform sampling.
\newblock In {\em International Conference on Machine Learning}, pages 1110--1119. PMLR.

\bibitem[Beznosikov et~al., 2023]{beznosikov2023stochastic}
Beznosikov, A., Gorbunov, E., Berard, H., and Loizou, N. (2023).
\newblock Stochastic gradient descent-ascent: Unified theory and new efficient methods.
\newblock In {\em International Conference on Artificial Intelligence and Statistics}, pages 172--235. PMLR.

\bibitem[Bousquet and Elisseeff, 2002]{bousquet2002stability}
Bousquet, O. and Elisseeff, A. (2002).
\newblock Stability and generalization.
\newblock {\em Journal of Machine Learning Research}, 2(Mar):499--526.

\bibitem[Cao et~al., 2016]{cao2012generalization}
Cao, Q., Guo, Z.-C., and Ying, Y. (2016).
\newblock Generalization bounds for metric and similarity learning.
\newblock {\em Machine Learning}, 102(1):115--132.

\bibitem[Cl{\'e}men{\c{c}}on et~al., 2008]{clemenccon2008ranking}
Cl{\'e}men{\c{c}}on, S., Lugosi, G., and Vayatis, N. (2008).
\newblock Ranking and empirical minimization of {U}-statistics.
\newblock {\em The Annals of Statistics}, pages 844--874.

\bibitem[Cortes and Mohri, 2004]{cortes2004auc}
Cortes, C. and Mohri, M. (2004).
\newblock Auc optimization vs. error rate minimization.
\newblock In {\em Advances in Neural Information Processing Systems}, pages 313--320.

\bibitem[De~la Pena and Gin{\'e}, 2012]{de2012decoupling}
De~la Pena, V. and Gin{\'e}, E. (2012).
\newblock {\em Decoupling: from dependence to independence}.
\newblock Springer Science \& Business Media.

\bibitem[Farnia and Ozdaglar, 2021]{farnia2021train}
Farnia, F. and Ozdaglar, A. (2021).
\newblock Train simultaneously, generalize better: Stability of gradient-based minimax learners.
\newblock In {\em International Conference on Machine Learning}, pages 3174--3185. PMLR.

\bibitem[Goodfellow et~al., 2014a]{goodfellow2014generative}
Goodfellow, I., Pouget-Abadie, J., Mirza, M., Xu, B., Warde-Farley, D., Ozair, S., Courville, A., and Bengio, Y. (2014a).
\newblock Generative adversarial nets.
\newblock {\em Advances in neural information processing systems}, 27.

\bibitem[Goodfellow et~al., 2014b]{goodfellow2014explaining}
Goodfellow, I.~J., Shlens, J., and Szegedy, C. (2014b).
\newblock Explaining and harnessing adversarial examples.
\newblock {\em arXiv preprint arXiv:1412.6572}.

\bibitem[Guedj and Pujol, 2021]{guedj2021still}
Guedj, B. and Pujol, L. (2021).
\newblock Still no free lunches: the price to pay for tighter pac-bayes bounds.
\newblock {\em Entropy}, 23(11):1529.

\bibitem[Guillaumin et~al., 2009]{guillaumin2009you}
Guillaumin, M., Verbeek, J., and Schmid, C. (2009).
\newblock Is that you? metric learning approaches for face identification.
\newblock In {\em 2009 IEEE 12th international conference on computer vision}, pages 498--505. IEEE.

\bibitem[Han et~al., 2022]{han2022rethinking}
Han, X., Yu, X., Li, G., Zhao, J., Pan, G., Ye, Q., Jiao, J., and Han, Z. (2022).
\newblock Rethinking sampling strategies for unsupervised person re-identification.
\newblock {\em IEEE Transactions on Image Processing}, 32:29--42.

\bibitem[Hardt et~al., 2016]{hardt2016train}
Hardt, M., Recht, B., and Singer, Y. (2016).
\newblock Train faster, generalize better: Stability of stochastic gradient descent.
\newblock In {\em International Conference on Machine Learning}, pages 1225--1234.

\bibitem[Johnson and Guestrin, 2018]{johnson2018training}
Johnson, T.~B. and Guestrin, C. (2018).
\newblock Training deep models faster with robust, approximate importance sampling.
\newblock {\em Advances in Neural Information Processing Systems}, 31.

\bibitem[Katharopoulos and Fleuret, 2017]{katharopoulos2017biased}
Katharopoulos, A. and Fleuret, F. (2017).
\newblock Biased importance sampling for deep neural network training.
\newblock {\em arXiv preprint arXiv:1706.00043}.

\bibitem[Katharopoulos and Fleuret, 2018]{katharopoulos2018not}
Katharopoulos, A. and Fleuret, F. (2018).
\newblock Not all samples are created equal: Deep learning with importance sampling.
\newblock In {\em International conference on machine learning}, pages 2525--2534. PMLR.

\bibitem[Koestinger et~al., 2012]{koestinger2012large}
Koestinger, M., Hirzer, M., Wohlhart, P., Roth, P.~M., and Bischof, H. (2012).
\newblock Large scale metric learning from equivalence constraints.
\newblock In {\em Computer Vision and Pattern Recognition}, pages 2288--2295. IEEE.

\bibitem[Lei et~al., 2020]{lei2020sharper}
Lei, Y., Ledent, A., and Kloft, M. (2020).
\newblock Sharper generalization bounds for pairwise learning.
\newblock In {\em Advances in Neural Information Processing Systems}, volume~33, pages 21236--21246.

\bibitem[Lei et~al., 2021a]{lei2021generalizationb}
Lei, Y., Liu, M., and Ying, Y. (2021a).
\newblock Generalization guarantee of sgd for pairwise learning.
\newblock {\em Advances in Neural Information Processing Systems}, 34:21216--21228.

\bibitem[Lei et~al., 2021b]{lei2021stability}
Lei, Y., Yang, Z., Yang, T., and Ying, Y. (2021b).
\newblock Stability and generalization of stochastic gradient methods for minimax problems.
\newblock In {\em International Conference on Machine Learning}, pages 6175--6186.

\bibitem[Li et~al., 2020]{li2019generalization}
Li, J., Luo, X., and Qiao, M. (2020).
\newblock On generalization error bounds of noisy gradient methods for non-convex learning.
\newblock In {\em International Conference on Learning Representations}.

\bibitem[Liu et~al., 2019]{liu2019geo}
Liu, W., Wang, Z.-J., Yao, B., and Yin, J. (2019).
\newblock Geo-alm: Poi recommendation by fusing geographical information and adversarial learning mechanism.
\newblock In {\em IJCAI}, volume~7, pages 1807--1813.

\bibitem[London, 2017]{london2017pac}
London, B. (2017).
\newblock A {PAC}-bayesian analysis of randomized learning with application to stochastic gradient descent.
\newblock In {\em Advances in Neural Information Processing Systems}, pages 2931--2940.

\bibitem[London et~al., 2016]{london2016stability}
London, B., Huang, B., and Getoor, L. (2016).
\newblock Stability and generalization in structured prediction.
\newblock {\em The Journal of Machine Learning Research}, 17(1):7808--7859.

\bibitem[Madry et~al., 2017]{madry2017towards}
Madry, A., Makelov, A., Schmidt, L., Tsipras, D., and Vladu, A. (2017).
\newblock Towards deep learning models resistant to adversarial attacks.
\newblock {\em arXiv preprint arXiv:1706.06083}.

\bibitem[McAllester, 1999]{mcallester1999some}
McAllester, D.~A. (1999).
\newblock Some pac-bayesian theorems.
\newblock {\em Machine Learning}, 37(3):355--363.

\bibitem[Mou et~al., 2018]{mou2018generalization}
Mou, W., Wang, L., Zhai, X., and Zheng, K. (2018).
\newblock Generalization bounds of sgld for non-convex learning: Two theoretical viewpoints.
\newblock In {\em Conference on Learning Theory}, pages 605--638.

\bibitem[Negrea et~al., 2019]{negrea2019information}
Negrea, J., Haghifam, M., Dziugaite, G.~K., Khisti, A., and Roy, D.~M. (2019).
\newblock Information-theoretic generalization bounds for sgld via data-dependent estimates.
\newblock {\em Advances in Neural Information Processing Systems}, 32.

\bibitem[Oneto et~al., 2020]{oneto2020randomized}
Oneto, L., Donini, M., Pontil, M., and Shawe-Taylor, J. (2020).
\newblock Randomized learning and generalization of fair and private classifiers: From pac-bayes to stability and differential privacy.
\newblock {\em Neurocomputing}, 416:231--243.

\bibitem[Ralaivola et~al., 2010]{ralaivola2010chromatic}
Ralaivola, L., Szafranski, M., and Stempfel, G. (2010).
\newblock Chromatic pac-bayes bounds for non-iid data: Applications to ranking and stationary $\beta$-mixing processes.
\newblock {\em The Journal of Machine Learning Research}, 11:1927--1956.

\bibitem[Rivasplata et~al., 2018]{rivasplata2018pac}
Rivasplata, O., Parrado-Hern{\'a}ndez, E., Shawe-Taylor, J.~S., Sun, S., and Szepesv{\'a}ri, C. (2018).
\newblock Pac-bayes bounds for stable algorithms with instance-dependent priors.
\newblock In {\em Advances in Neural Information Processing Systems}, pages 9214--9224.

\bibitem[Salehi et~al., 2018]{salehi2018coordinate}
Salehi, F., Thiran, P., and Celis, E. (2018).
\newblock Coordinate descent with bandit sampling.
\newblock {\em Advances in Neural Information Processing Systems}, 31.

\bibitem[Shalev-Shwartz and Ben-David, 2014]{shalev2014understanding}
Shalev-Shwartz, S. and Ben-David, S. (2014).
\newblock {\em Understanding machine learning: From theory to algorithms}.
\newblock Cambridge university press.

\bibitem[Shawe-Taylor and Williamson, 1997]{shawe1997pac}
Shawe-Taylor, J. and Williamson, R.~C. (1997).
\newblock A pac analysis of a bayesian estimator.
\newblock In {\em Proceedings of the Tenth Annual Conference on Computational Learning Theory}, pages 2--9.

\bibitem[Sun et~al., 2022]{sun2022stability}
Sun, S., Yu, M., Shawe-Taylor, J., and Mao, L. (2022).
\newblock Stability-based pac-bayes analysis for multi-view learning algorithms.
\newblock {\em Information Fusion}, 86:76--92.

\bibitem[Van~Handel, 2014]{van2014probability}
Van~Handel, R. (2014).
\newblock Probability in high dimension.
\newblock {\em Lecture Notes (Princeton University)}.

\bibitem[Vershynin, 2018]{vershynin2018high}
Vershynin, R. (2018).
\newblock {\em High-dimensional probability: An introduction with applications in data science}, volume~47.
\newblock Cambridge university press.

\bibitem[Wen et~al., 2025]{WEN2025106955}
Wen, W., Li, H., Wu, R., Wu, L., and Chen, H. (2025).
\newblock Generalization analysis of adversarial pairwise learning.
\newblock {\em Neural Networks}, 183:106955.

\bibitem[Wu et~al., 2017]{wu2017sampling}
Wu, C.-Y., Manmatha, R., Smola, A.~J., and Krahenbuhl, P. (2017).
\newblock Sampling matters in deep embedding learning.
\newblock In {\em Proceedings of the IEEE international conference on computer vision}, pages 2840--2848.

\bibitem[Xiong et~al., 2014]{xiong2014person}
Xiong, F., Gou, M., Camps, O., and Sznaier, M. (2014).
\newblock Person re-identification using kernel-based metric learning methods.
\newblock In {\em European Conference Computer Vision}, pages 1--16. Springer.

\bibitem[Yi et~al., 2014]{yi2014deep}
Yi, D., Lei, Z., Liao, S., and Li, S.~Z. (2014).
\newblock Deep metric learning for person re-identification.
\newblock In {\em 2014 22nd international conference on pattern recognition}, pages 34--39. IEEE.

\bibitem[Zhang et~al., 2021]{zhang2021generalization}
Zhang, J., Hong, M., Wang, M., and Zhang, S. (2021).
\newblock Generalization bounds for stochastic saddle point problems.
\newblock In {\em International Conference on Artificial Intelligence and Statistics}, pages 568--576. PMLR.

\bibitem[Zhang et~al., 2020]{zhang2020advkin}
Zhang, L., Duan, Q., Zhang, D., Jia, W., and Wang, X. (2020).
\newblock Advkin: Adversarial convolutional network for kinship verification.
\newblock {\em IEEE transactions on cybernetics}, 51(12):5883--5896.

\bibitem[Zhao and Zhang, 2015]{zhao2015stochastic}
Zhao, P. and Zhang, T. (2015).
\newblock Stochastic optimization with importance sampling for regularized loss minimization.
\newblock In {\em International Conference on Machine Learning}, pages 1--9.

\bibitem[Zheng et~al., 2017]{zheng2017discriminatively}
Zheng, Z., Zheng, L., and Yang, Y. (2017).
\newblock A discriminatively learned cnn embedding for person reidentification.
\newblock {\em ACM transactions on multimedia computing, communications, and applications (TOMM)}, 14(1):1--20.

\bibitem[Zhou and Patel, 2022]{zhou2022enhancing}
Zhou, M. and Patel, V.~M. (2022).
\newblock Enhancing adversarial robustness for deep metric learning.
\newblock In {\em Proceedings of the IEEE/CVF Conference on Computer Vision and Pattern Recognition}, pages 15325--15334.

\bibitem[Zhou et~al., 2023]{zhou2023toward}
Zhou, S., Lei, Y., and Kab\'an, A. (2023).
\newblock Toward better pac-bayes bounds for uniformly stable algorithms.
\newblock In {\em Advances in Neural Information Processing Systems}, volume~36.

\end{thebibliography}

\end{document}